\def\withindex{0}
  \newcommand\indexed[2][]{\ifthenelse{\equal{#1}{}}{#2\index{#2}}{#2\index{#1}}}
\newmdtheoremenv{quest}{Q}
\DeclareMathOperator*{\argmin}{arg\,min}
\newcommand{\E}[1]{{\mathbf{E}\left[{#1}\right]}}
\definecolor{deepblue}{rgb}{0,0,0.5}
\definecolor{deepred}{rgb}{0.6,0,0}
\definecolor{deepgreen}{rgb}{0,0.5,0}
\newcommand\T{\rule{0pt}{2.8ex}}       
\newcommand{\removed}[1]{}
\newlist{todolist}{itemize}{2}
\setlist[todolist]{label=$\square$}
\newcommand{\hhat}[1]{%
\begingroup%
  \let\macc@kerna\z@%
  \let\macc@kernb\z@%
  \let\macc@nucleus\@empty%
  \hat{\mathchoice%
    {\raisebox{.2ex}{\vphantom{\ensuremath{\displaystyle #1}}}}%
    {\raisebox{.2ex}{\vphantom{\ensuremath{\textstyle #1}}}}%
    {\raisebox{.16ex}{\vphantom{\ensuremath{\scriptstyle #1}}}}%
    {\raisebox{.14ex}{\vphantom{\ensuremath{\scriptscriptstyle #1}}}}%
    \smash{\hat{#1}}}%
\endgroup%
}
\def \L {\mathcal{L}}
\newcommand*\samethanks[1][\value{footnote}]{\footnotemark[#1]}
\newtheorem{thm}{Theorem}
\newtheorem*{thm*}{Theorem}
\newtheorem{ass}{Assumption}
\def\cB{\mathcal{B}}
\def\cD{\mathcal{D}}
\def\cY{\mathcal{Y}}
\def\cT{\mathcal{T}}
\def\cI{\mathcal{I}}
\def\E{\mathbb E}
\def\R{\mathbb R}
\def\T{\mathbb T}
\def\L{\mathcal L}
\def\w {\mathbf{w}}
\def\x{\mathbf{x}}
\def\z{\mathbf{z}}
\def\u{\mathbf{u}}
\def\t{\mathbf{t}}
\def\fD{\mathfrak{D}}
\newcommand{\vw}{{\mathbf{w}}}
\newcommand{\bw}{{\mathbf{w}}}
\newcommand*\circled[1]{\tikz[baseline=(char.base)]{
            \node[shape=circle,draw,inner sep=1pt] (char) {#1};}}
\begin{document}
\title{A Retention-Centric Framework for Continual Learning with Guaranteed Model Developmental Safety}

\author{
    Gang Li \thanks{Department of Computer Science \& Engineering, Texas A\&M University, College Station, USA. } 
    \and
    Wendi Yu \textcolor{red}{\samethanks[1]}
    \and 
    Yao Yao\thanks{Department of Mathematics, The University of Iowa, Iowa City, USA.}
    \and
    Wei Tong\thanks{Global Research and Development, General Motors, Warren, USA.} 
    \and
    Yingbin Liang\thanks{Department of Electrical and Computer Engineering, The Ohio State University, Columbus, UAS.} 
    \and 
    Qihang Lin\thanks{Tippie College of Business, The University of Iowa, Iowa City, USA.} 
    \and   \\
    Tianbao Yang\textcolor{red} {\samethanks[1]}~\thanks{Correspondence to: \texttt{tianbao-yang@tamu.edu}.}
}

\date{}

\maketitle

 \vspace*{-0.55in}
\begin{abstract}

In real-world applications, learning-enabled systems often undergo iterative model development to address challenging or emerging tasks, which involve collecting new data, training a new model and validating the model. This continual model development process raises a significant issue that acquiring new or improving existing capabilities may inadvertently lose good capabilities of the old model, also known as catastrophic forgetting. While existing continual learning aims to mitigate catastrophic forgetting by trading off performance on previous tasks and new tasks to ensure good average performance, it often falls short in cost-sensitive applications, where failing to preserve essential established capabilities introduces unforeseen costs and risks and substantial expenses for re-improving these capabilities. To address this issue, we impose a requirement on learning systems to ensure that a new model strictly retains important capabilities of the old model while improving target-task performance, which we term model developmental safety. To ensure model developmental safety,  we propose a retention-centric framework with data-dependent constraints, and study how to continually develop a pretrained  CLIP model for acquiring new or improving existing capabilities of image classification. We propose an efficient constrained optimization algorithm with theoretical guarantees and use its insights to finetune the CLIP model with task-dependent heads for promoting the model developmental safety. Experiments on autonomous driving and scene recognition datasets validate the efficacy of our method\footnote{Code is available at \url{https://github.com/GangLii/DevSafety}}.

\end{abstract}

\section{Introduction}

Learning-enabled systems are rapidly transforming various sectors, with applications in autonomous vehicles, medical diagnosis, and financial prediction. These systems often rely on ML models that are trained on vast amounts of data.  However, the inherent complexity of
the environments in which these systems operate often presents critical challenges, e.g., dealing with corner cases and rare scenarios that deviate from the norm.  Additionally, real-world scenarios continuously evolve, presenting new challenges and requiring the system to adapt.  These necessitate an iterative development process where models are constantly refined and improved based on new data. Continuously updating the model has become a norm especially in the era of large foundation models, e.g., ChatGPT has experienced several cycles of development from GPT3.5 to GPT4 and GPT4o and recent GPTo1. 

However, this iterative model development process raises a significant issue, i.e., the model development for improving the existing capabilities or acquiring new capabilities may inadvertently lose the previously acquired capabilities of the old model. This issue has been widely observed and documented as catastrophic forgetting  when models are trained to learn a sequence of contents~\citep{mccloskey1989catastrophic}. Tremendous studies have been conducted to mitigate the forgetting problem in continual learning literature~\citep{zhou2022model,NEURIPS2019_fa7cdfad,NIPS2017_0efbe980,DBLP:journals/corr/LiH16e,doi:10.1073/pnas.1611835114}. However, these works primarily focus on mitigating the catastrophic forgetting problem, by trading off performance on previous tasks and new tasks to have good average performance~\citep{wang2024comprehensive}, but do not strictly retain critical existing abilities (i.e., ensuring zero forgetting) while learning new tasks. Ensuring zero forgetting is crucial for many cost-sensitive applications, as failure to strictly preserve the model’s essential capabilities not only introduces unforeseen costs and risks but also imposes substantial expenses in the re-improving of these measures, such as in domains like autonomous driving, where established capabilities are usually critical, and re-validation and re-verification is complex and could cost billions of dollar~\citep{rajabli2020software,koopman2016challenges,McKinseyCompany}. This presents a significant challenge for iterative model development process.

To address this challenge, this paper formally introduces {\bf model developmental safety (MDS)} as a requirement of a learning system that in the model development process the new model should strictly retain the existing important capabilities of the old model while improving its performance on target tasks. 
This concept subtly differs from trading off performance between previous tasks and new tasks to have good average performance of existing continual learning approaches. Moreover, MDS cannot be achieved by the naive weighting method that optimizes a weighted loss by combining protected and target tasks' losses and tuning the weight to preserve protected capabilities. This approach does not necessarily retain the old model's performance across \textbf{all} protected tasks even if the weight is large enough, observed in our experiments (Table~\ref{tab:rm}), and will yield no improvement on target tasks if the weight is too large. A more effective algorithm is required to efficiently find a model that not only retains the performance on protected tasks but also improves the performance on target tasks.  
To the best of our knowledge, no such algorithm currently exists.

\begin{figure}
    
  \begin{minipage}[c]{0.35\textwidth}
    \caption{
       Performance of recognizing 6 weather conditions for autonomous driving with two rounds of model development using new data. The Round 1  development targets at \textit{overcast} and Round 2  aims to improve recognizing \textit{foggy}. Base refers to the CLIP model finetuned on BDD100K data.
    } 
    \label{fig:multi-round}
  \end{minipage} \hfill
  \begin{minipage}[c]{0.64\textwidth}
    \includegraphics[width=\textwidth]{./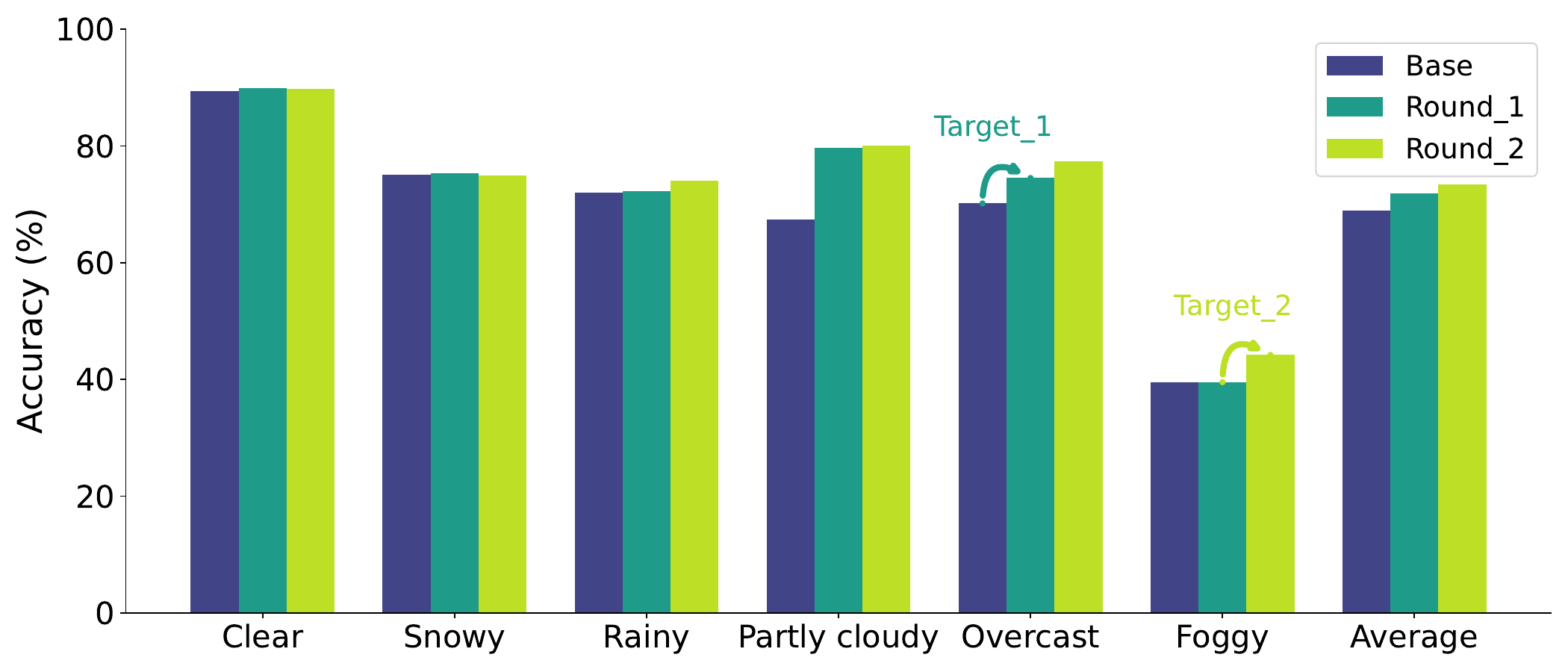}
  \end{minipage}
  \vspace{-0.2in}
\end{figure}

This paper aims to address this critical gap by introducing a novel retention-centric framework to ensure MDS. We propose to formulate the MDS as data-dependent constraints, which offers statistical guarantee for strict preservation of performance for all protected tasks. With this framework, we explore developing a pretrained CLIP model~\citep{DBLP:conf/icml/RadfordKHRGASAM21} for acquiring new capabilities or improving existing ones in image classification. We propose an efficient constrained optimization algorithm with theoretical guarantee. With insights from theoretical analysis, we finetune the CLIP model with task-dependent heads to facilitate MDS. Finally, we demonstrate the efficacy of our approach on autonomous driving dataset and scene recognition dataset, highlighting the practical importance of MDS in real-world scenarios. Our contributions are summarized below:
\begin{itemize}[leftmargin=*]
    \item We introduce a retention-centric framework by formulating the MDS as data-dependent constraints, {which offer statistical guarantee for strictly preserving performance for every protected task.}
    \item We propose an efficient constrained optimization algorithm with theoretical guarantee to develop a pretrained CLIP model for acquiring new or improving existing capabilities of image classification.
    \item We conduct comprehensive experiments to study the proposed algorithm and compare it with existing baselines to demonstrate its effectiveness. An experimental result for ensuring MDS in improving vision-based perception capabilities of autonomous driving is shown in Figure~\ref{fig:multi-round}.
\end{itemize}

\section{Related Work}
\paragraph{Continual learning.} 
This work is closely related to Continual learning (CL), also known as lifelong learning,  yet it exhibits nuanced differences. Continual learning usually refers to learning a sequence of tasks one by one and accumulating knowledge like human instead of substituting knowledge~\citep{wang2024comprehensive,qu2021recent}. The core issue in CL is known as catastrophic forgetting~\citep{mccloskey1989catastrophic}, i.e., the learning of the later tasks may \textbf{significantly} degrade the performance of the model for the earlier tasks. There is a vast literature of CL of deep neural networks (DNNs)~\citep{DBLP:journals/corr/abs-1812-03596,10.5555/3295222.3295393,DBLP:journals/corr/abs-1910-07104,10.5555/3294996.3295218,DBLP:conf/nips/GuoLYR20,DBLP:journals/corr/abs-1802-07569}.  Different approaches have been investigated to mitigate catastrophic forgetting, including regularization-based approaches~\citep{castro2018end, doi:10.1073/pnas.1611835114,10.5555/3305890.3306093,DBLP:journals/corr/LiH16e}, expansion-based approaches~\citep{ zhou2022model,yan2021dynamically, Li2019LearnTG, DBLP:journals/corr/RusuRDSKKPH16}, and memory-based approaches~\citep{buzzega2020dark, cha2021co2l, DBLP:conf/nips/GuoLYR20,10.5555/3295222.3295393,chaudhry2018efficient}. 
 The framework proposed in this work is similar to conventional memory-based approaches in the sense that both use examples of existing tasks to regulate learning. However, the key difference is that most existing continual learning focuses on the trade-off between learning plasticity and memory stability and aims to find a proper balance between performance on previous tasks and new tasks~\citep{wang2024comprehensive}. Hence, they do not provide a guarantee for MDS. A recent work~\citep{peng2023ideal} has proposed an ideal continual learner that never forgets by assuming that all tasks share the same optimal solution. However, it is not implementable for deep learning problems. Besides, existing continual learning studies usually highlight resource efficiency when accumulating knowledge by reducing the number of samples of previous tasks. In contrast, this work tends to utilize more examples to construct  constraints for protected tasks to facilitate MDS.

\textbf{AI Safety.}
Our notion of model developmental safety should not be confused with AI safety. The latter is a field concerned with mitigating risks associated with AI, whose surge in attention stems from the growing capabilities of AI systems, particularly large foundation models~\cite{kojima2022large,wei2022emergent,bubeck2023sparks,radford2021learning}. As these models become more adept at complex tasks, concerns around potential misuse, bias, and unintended consequences rise proportionally. \cite{DBLP:journals/corr/AmodeiOSCSM16} presents several practical research problems related to AI safety, including avoiding side effects, avoiding reward hacking, scalable oversight, safe exploration, and robustness to distributional shift. More recently, \cite{wang2023decodingtrust} elaborate on eight different perspectives to evaluate the trustworthiness of LLMs, including toxicity, stereotype bias, adversarial robustness, out-of-distribution robustness, robustness on adversarial demonstrations, privacy, machine ethics, and fairness. 
These AI safety issues arise in the usage of AI models, and they are distinctive from model developmental safety studied in this work, which arises in the development of AI models. Note that the term "safety" in model developmental safety is to underline that it is important and must be enforced in practice. Therefore, this work provides another dimension for consideration in AI safety, i.e., retention of safety. Any safety features of an AI system that have been acquired and validated should be preserved safely in continuous development.

\textbf{SafeRL.} This work is partially related to SafeRL (Safe Reinforcement Learning), which focuses on developing algorithms and techniques to ensure safety (avoid harmful actions) of RL agents, such as in autonomous driving~\cite{shalev2016safe}, robotics areas~\cite{pham2018optlayer}. Many studies have been conducted in SafeRL domain. A popular approach in SafeRL is to maximize the expected cumulative reward subject to specific safety constraints~\cite{wachi2024survey}, such as expected cumulative safety constraint~\cite{ding2021provably,bura2022dope,tessler2018reward,achiam2017constrained}, state constraint~\cite{thomas2021safe,turchetta2020safe,wang2023enforcing,thananjeyan2021recovery}, joint chance constraint~\cite{ono2015chance,pfrommer2022safe}, etc. However, as SafeRL heavily relies on the special structure of policy optimization for RL, it is different from our work that study a generic developmental safety in model development process. Hence, although sharing the similarity of solving a constrained problem, the algorithms for SafeRL are not applicable to our problem.

\textbf{Constrained Learning.}  
Constrained learning has attracted significant attention in the literature.
Traditional works for constrained optimization include three primary categories: 1) primal methods which do not involve the Lagrange multipliers, e.g., cooperative subgradient methods~\cite{lan2016algorithms, polyak1973method} and level-set methods~\cite{aravkin2019level, lin2018level,lin2018level2};  2) primal-dual methods which reformulate constrained optimization problems as saddle point problems~\cite{hamedani2021primal,nemirovski2004prox}; 3) penalty-based approaches which incorporate constraints by adding a penalty term to the objective function~\cite{xu2021iteration,lan2013iteration,lan2016iteration}. However, most of these works are limited to convex objectives or convex constraints. In recent years, due to its increasing importance in modern machine learning problems, 
such as in applications concerned with fairness~\cite{cotter2019optimization,DBLP:journals/corr/abs-1803-02453}, robustness~\cite{
robey2021adversarial,madry2017towards}, and safety~\cite{paternain2019constrained,paternain2019learning} problems, the research interest has been directed to developing efficient algorithms for non-convex optimization(non-convex objective and non-convex constraint)~\cite{boob2023stochastic,facchinei2021ghost,ma2020quadratically,li2024stochastic,chamon2022constrained, alacaoglu2024complexity}. Among these, \cite{chamon2022constrained} studies how to solve  constrained learning learning with expected non-convex loss and expected non-convex constraints by using empirical data to ensure the PAC learnability,  and proposed a primal-dual algorithm to solve constrained optimization problems in the empirical dual domain. However, their algorithm requires solving the primal problem up to a certain accuracy, which is theoretically not feasible for general non-convex problems.  \cite{boob2023stochastic} introduces a new proximal point method that transforms a non-convex problem into a sequence of convex problems by adding quadratic terms to both the objective and constraints. For solving non-convex optimization problems with  equality constraints, \cite{alacaoglu2024complexity} propose single-loop quadratic penalty and augmented Lagrangian algorithms with variance reduction techniques to
improve the complexity. Nevertheless, none of these algorithms can be directly applied to our large-scale deep learning problem~\eqref{eq:CLIP_train}, due to either prohibitive running cost or failure to handle biased stochastic gradients caused by compositional structure.

\section{Notations and Preliminaries}

{\bf Notations.} We consider developing a model $\w$ to improve its capabilities on a target task $\T_{o}$ while preserving its performance on a set of protected tasks denoted by $\T_1, \ldots, \T_{m}$. A task can be as simple as predicting a class for multi-class classification. 
In the paper, we focus on classification using CLIP models and each task refers to one class. For example, we can consider tasks of detecting different weather conditions in autonomous driving, e.g., foggy, overcast, cloudy, clear, rainy, etc. We assume that each task is associated with a data distribution denoted by $\mathfrak{D}_k$. 
Let $(\x, y)\sim \mathfrak{D}_k$ denote random data of task $\T_k$ with input $\x\in\mathcal X$ (e.g., an image) and output $y\in\mathcal Y$ (e.g., its class label). We assume that each protected task has a set of examples denoted by $\cD_k=\{(\x_i,y_i)\}_{i=1}^{n_k}$,  sampled from $\fD_k$. Let $\ell_k(\w, \x, y) = \ell_k(s(\x; \w), y)$ denote a loss function that measures the loss of the model's prediction $s(\x; \w)$  with respect to the groundtruth $y$ for task $k$. For classification, the loss could be zero-one loss $\ell_{0-1}$ that measures the classification error or the cross-entropy loss $\ell_{ce}$ that is differentiable for learning. We will define these losses shortly for using CLIP models. We denote by $\mathcal L_k(\w, \mathfrak{D}_k) = \E_{\x, y\sim \mathfrak{D}_k}\ell_k(\w, \x, y)$ as the expected loss, and  by $\mathcal L(\w, \cD_k) = \frac{1}{n_k}\sum_{(\x_i, y_i)\sim \cD_k}\ell_k(\w, \x_i, y_i)$ as the empirical loss for task $k$. 

{\bf The CLIP model and Contrastive Loss.} 
We consider optimizing a two-way contrastive loss for each image-text pair $(\x_i, \t_i)$ following \cite{yuan2022provable}:
\begin{align}
\label{eq:contrasive_loss}
   L_{\text{ctr}}&(\w; \x_i, \t_i, \mathcal T_{i}^-,\mathcal I_{i}^- ): =    \notag \\
   & -\tau \log\frac{\exp(E_1(\w, \x_i)^{\top}E_2(\w, \t_i)/\tau)}{\sum_{\t_j\in\mathcal T_i^{-}}\exp(E_1(\w, \x_i)^{\top}E_2(\w, \t_j)/\tau)}  \\ 
    & - \tau \log\frac{\exp(E_2(\w, \t_i)^{\top}E_1(\w, \x_i)/\tau)}{\sum_{\x_j\in\mathcal I_i^-}\exp(E_2(\w, \t_i)^{\top}E_1(\w, \x_j) /\tau)},\notag
    \end{align}
where $E_1(\w, \x)$ and $E_2(\w, \t)$ denotes a (normalized) encoded representation of a image $\x$, and a text $\t$, respectively,  $\mathcal T_{i}^-$ denotes the set of all texts to be contrasted with respect to (w.r.t) $\x_i$ (including itself) and $\mathcal I_{i}^-$ denotes the set of all images to be contrasted w.r.t $\t_i$ (including itself). 

To utilize a CLIP model for multi-class classification with classes $\mathcal C=\{c_1, \ldots, c_K\}$, we will convert a class $c_k$, e.g., "rainy", into a text description of $c_k$, denoted by $\hat\t_k$, e.g., "the weather is rainy", similar to the zero-shot classification scheme of the  CLIP model. Hence, a prediction score (i.e., a logit) for an image $\x$ and a text description $\hat\t_k$ of class $c_k$ is calculated by $s_k(\x; \w)=  E_1(\w, \x)^{\top}  E_2(\w, \hat\t_k)$. The predicted class label is given by $\hat y =  \arg\max_{c_k\in\mathcal C}s_k(\x; \w)$. Hence, given the true class $y\in\mathcal C$, the zero-one loss is given by $\ell_{0,1}(\w, \x, y) = \mathbb I(\hat y\neq y)$, and the cross-entropy loss is given by 
$\ell_{ce}(\w, \x, y) = -\log\frac{\exp(s_y(\x; \w)/\tau_0)}{\sum_{\ell=1}^K\exp(s_l(\x; \w)/\tau_0)}$,
where $\tau_0>0$ is a temperature parameter that controls the balance between the approximation error of the zero-one loss and the smoothness of the function. In particular, a smaller $\tau_0$ gives a smaller approximation error and a larger $\tau_0$ indicates a smaller gradient Lipschitz constant of the loss function in terms of logits.

\section{A Retention-Centric Framework}

\subsection{Model Developmental Safety}


To measure the model developmental safety, it is necessary to evaluate how the performance of the model changes in protected tasks from the old model $\w_{\text{old}}$ to a new model $\w_{\text{new}}$. 
{We introduce the formal definition of model developmental safety (MDS) in Definition~\ref{def:mds}, which ensures the new model strictly preserves performance on each individual protected task.}

\begin{definition}[Model Developmental Safety (MDS)]\label{def:mds}
    In model development process, 
    the model developmental safety is satisfied if   $\mathcal L_k(\w_{\text{new}}, \mathfrak{D}_k)\leq \mathcal L_k(\w_{\text{old}}, \mathfrak{D}_k), \forall k\in\{1,\ldots, m\}$ , where  $\mathcal L_k(\w, \mathfrak{D}_k) = \E_{\x, y\sim \mathfrak{D}_k}\ell_k(\w, \x, y)$. 
\end{definition}

In practice, the developmental safety will be measured using a set of examples $\mathcal S_j\sim\mathfrak{D}_j$ for each protected task. Hence, we define the empirical developmental safety metric, corresponding to Definition~\ref{def:mds}, for evaluation:
\begin{equation}\label{eqn:dev_safety}
\begin{aligned}
    &\text{DevSafety} = \min_{k \in \{1,\cdots, m\}} \left(\mathcal L_k(\w_{\text{old}}, \mathcal{S}_k) -  \mathcal L_k(\w_{\text{new}}, \mathcal{S}_k)\right).
\end{aligned}
\end{equation}
When we use the zero-one loss $\ell_{0-1}$ in the above definitions, we refer to the above developmental safety metric as DevSafety(acc). 

\subsection{A Retention-Centric Approach for Model Developmental Safety}
\label{sec:generalization}

 The key to our retention-centric framework is to utilize examples of protected tasks to define empirical retention constraints when updating the model on a target task.
 In order to develop the model for improving the performance on a target task $\T_o$, we assume that a set of data $\mathcal D$ for $\T_o$ is constructed and a proper objective is given based on application, denoted by $F(\w, \mathcal D)$. Then, our retention-centric approach for model development is imposed by solving the following problem: 
\begin{equation}\label{eqn:cons_population}
\begin{aligned}
\w_{\text{new}}= \arg\min_{\w} &  ~~F(\w, \mathcal D)   \\
\text{s.t.}  &~~\L_k(\w, \cD_k)- \L_k(\w_{\text{old}}, \cD_k)\leq 0,\ k =1, \cdots, m.  
\end{aligned}
\end{equation}
We will propose an algorithm to directly solve this data-dependent constrained optimization problem with a contrastive objective in the context of developing a CLIP model in next section.

\noindent{\bf Generalization Analysis.} Since we can only use empirical data $\cD_1, \dots, \cD_m$  in \eqref{eqn:cons_population}, there exist generalization errors between the {retention} constraints in \eqref{eqn:cons_population} and the MDS we want to ensure in Definition~\ref{def:mds}. The lemma below uses a standard tool of statistical error analysis to bound the generalization error of retention. For simplicity, we assume each protected task is associated with the same loss function, namely, $\ell_k=\ell$ for $k=1,\dots,m$. In the analysis,  we use the Rademacher complexity of the loss class $\mathcal H=\{\ell(\w, \cdot, \cdot): \mathcal X\times\mathcal Y \rightarrow [0,1]|\w\in\mathbb{R}^d\}$ induced by the model $\w$ on $n$ data points, which is denoted by $R_n(\mathcal H)$.
We assume that $R_n(\mathcal H)\leq Cn^{-\alpha}$ for some $C\geq 0$ and $\alpha\leq 0.5$. We note that $\alpha = 0.5$ in the vast majority of model and loss families, including linear models~\citep{kakade2008complexity}, deep neural
networks~\citep{bartlett2002rademacher}, and model families with bounded VC dimension~\citep{bartlett2002rademacher}.

\begin{lemma}[Generalization Error of Rentention]
\label{thm:safety_generalization}
Suppose that $R_n(\mathcal H)\leq Cn^{-\alpha}$ for some $C\geq 0$ and $\alpha\leq 0.5$. Then, with probability at least $1-\delta$, it holds that
    \begin{align*}
        \mathcal L_k(\w_{\text{new}}, \mathfrak{D}_k) -  \mathcal L_k(\w_{\text{old}}, \mathfrak{D}_k)\leq \mathcal L_k(\w_{\text{new}}, \cD_k)-  \mathcal L_k(\w_{\text{old}}, \cD_k)  +\frac{4C}{n_k^\alpha} +2\sqrt{\frac{\ln(2m/\delta)}{2n_k}}, \forall k.
    \end{align*}
\end{lemma}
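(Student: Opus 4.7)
The plan is to reduce the desired bound to a standard Rademacher-complexity uniform convergence statement applied to both $\w_{\text{new}}$ and $\w_{\text{old}}$, and then to close via a union bound over the $m$ protected tasks. Concretely, I would first rearrange the target inequality as
\[
[\mathcal L_k(\w_{\text{new}}, \mathfrak{D}_k) - \mathcal L_k(\w_{\text{new}}, \cD_k)] + [\mathcal L_k(\w_{\text{old}}, \cD_k) - \mathcal L_k(\w_{\text{old}}, \mathfrak{D}_k)] \leq \tfrac{4C}{n_k^\alpha} + 2\sqrt{\tfrac{\ln(2m/\delta)}{2n_k}},
\]
so that the task reduces to controlling two one-sided empirical-process deviations, one at $\w_{\text{new}}$ and one at $\w_{\text{old}}$.

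For each fixed task $k$, because $\w_{\text{new}}$ is produced by a procedure that depends on $\cD_k$ through the constrained optimization \eqref{eqn:cons_population}, a pointwise Hoeffding bound is not available; I would instead upper bound the first bracket by a supremum over $\w$ and invoke the standard symmetrization plus McDiarmid argument for losses in $[0,1]$. This yields, for any $\delta' > 0$, with probability at least $1-\delta'$,
\[
\sup_{\w}\, [\mathcal L_k(\w, \mathfrak{D}_k) - \mathcal L_k(\w, \cD_k)] \leq 2 R_{n_k}(\mathcal H) + \sqrt{\tfrac{\ln(1/\delta')}{2n_k}}.
\]
Applying the same argument to $-\mathcal H$ (which has identical Rademacher complexity) produces the reverse-sign uniform bound, which in particular controls the $\w_{\text{old}}$ bracket.

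To make both inequalities hold for every $k$ simultaneously, I would take a union bound over the $2m$ events ($m$ tasks times two deviation directions) with $\delta' = \delta/(2m)$, so that $\ln(1/\delta') = \ln(2m/\delta)$. Substituting $R_{n_k}(\mathcal H) \leq C n_k^{-\alpha}$ and summing the two deviations per task then yields exactly the claimed slack $\tfrac{4C}{n_k^{\alpha}} + 2\sqrt{\tfrac{\ln(2m/\delta)}{2n_k}}$.

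There is no substantive technical obstacle here; the argument is a textbook combination of symmetrization, McDiarmid's inequality, and a union bound, and the assumption $\mathcal H \subseteq [0,1]^{\mathcal X \times \mathcal Y}$ is already built into the hypothesis. The only step requiring a small amount of care is coefficient bookkeeping: the factor $4$ on the Rademacher term arises from two one-sided uniform bounds (one per endpoint) each contributing $2 R_{n_k}(\mathcal H)$, the factor $2$ on the square-root term reflects the sum of two McDiarmid tails, and the $2m$ inside the logarithm is the cost of union-bounding over both deviation directions across all $m$ protected tasks.
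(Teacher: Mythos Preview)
Your proposal is correct and follows essentially the same approach as the paper: the paper invokes a two-sided uniform Rademacher bound (Theorem~3.2 of Boucheron et al.) per task with failure probability $\delta/m$, applies it to both $\w_{\text{new}}$ and $\w_{\text{old}}$, sums the two deviations, and union-bounds over the $m$ tasks. Your decomposition into two one-sided bounds and a union bound over $2m$ events is just a repackaging of the same argument with identical constants.
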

{\bf Remark:} The lemma indicates that as long as the empirical retention constraints are satisfied, i.e., $\mathcal L_k(\w_{\text{new}}, \cD_k)-  \mathcal L_k(\w_{\text{old}}, \cD_k)\leq 0$, the model developmental safety is ensured up to a statistical error in the order of $O(n^{-\alpha})$, where $n=\min_k n_k$. Hence, the more examples used to construct the retention constraints, the more likely the new model meets MDS requirement. The proof is given in Appendix~\ref{sec:safety_generalization}.

\section{Retention-Centric Development of CLIP Models}

Based on the proposed framework above, in this section, we present an efficient algorithm for improving a pretrained CLIP model on a target task while ensuring MDS on a set of protected tasks.  The CLIP model is of particular interest because (i) it is a foundation model that has been used extensively in many applications; and (ii) can 
adapt to the open-world for handling new classes using languages. However, existing studies have shown that directly applying a pretrained CLIP model (e.g., OpenAI's CLIP model) to a certain downstream application yields varying performance across different classes~\cite{parashar2024neglected}. Rare concepts (e.g., foggy) usually has worse performance than frequent concepts (e.g., clear), making it necessary to continuously update. 

Suppose a CLIP model $\w_{\text{old}}$ has been trained. We aim to improve it for a target task $\T_o$ (e.g., classifying foggy). To this end, we collect a set of image-text pairs related to the target task, denoted by $\cD= \{(\x_i, \t_i)\}_{i=1}^{n_o}$. 
As labeled data for rare scenarios (e.g., \textit{foggy}) are usually limited in practice, we consider augmenting the dataset $\cD$ by using a query prompt to search for target-related image-text pairs from the internet (detailed in Appendix~\ref{sec:laion}). For each image-text pair, a set of negative texts has been collected to be contrasted w.r.t. $\x_i$, which together with $\t_i$ form $\mathcal T_{i}^-$, and  a set of negative images has been also collected to be contrasted w.r.t. $\t_i$, which together with $\x_i$ form $\mathcal I_{i}^-$. 

To develop the CLIP model in our retention-centric framework, we instantiate~\eqref{eqn:cons_population} as:
\begin{equation}
\label{eq:CLIP_train}
\begin{aligned}
&\min_{\w}~~~  F(\w, \cD): = \frac{1}{n_o}\sum\nolimits_{(\x_i, \t_i)\in\cD} L_{\text{ctr}}(\w; \x_i, \t_i,\mathcal T_{i}^-,\mathcal I_{i}^-)\\
& \text{s.t.} ~~~ h_k(\w) :=\L_k(\w, \cD_k)- \L_k(\w_{\text{old}}, \cD_k)\leq 0,\ k =1, \cdots, m.
\end{aligned}
\end{equation}

\subsection{Efficient Optimization}
The optimization problem in \eqref{eq:CLIP_train} is challenging for multiple reasons. First, this problem involves a non-convex objective  and non-convex constraints, so finding a global optimal solution is intractable in general. Second, the objective and constraint functions are formulated using a large dataset, so we need to sample from the dataset in order to construct stochastic gradients of the functions to update the solution. 
Lastly, \eqref{eq:CLIP_train} may contain a large number of constraints, so updating the solutions using the gradients of all constraints may be prohibited. Given these challenges, we need to develop a stochastic optimization for \eqref{eq:CLIP_train} based on advanced techniques and constraint sampling. 

Our method is motivated by the stochastic quadratic penalty method in \cite{alacaoglu2024complexity},  which first converts  \eqref{eq:CLIP_train} into an unconstrained problem by adding a quadratic penalty on the constraints violation to the objective function and then solves the unconstrained problem using a variance-reduced stochastic gradient method. Unfortunately, their method can not be directly applied to \eqref{eq:CLIP_train} because (i) they only consider equality constraints while \eqref{eq:CLIP_train} involves inequality constraints and (ii) they require an unbiased stochastic gradients for each update while the stochastic gradients for \eqref{eq:CLIP_train} will be biased due to the compositional structure. Note that an augmented Lagrangian algorithm (ALA) is also studied by \cite{alacaoglu2024complexity}, which has the same issue as their penalty method. We only consider quadratic penalty method for \eqref{eq:CLIP_train} because it has the same  complexity as the  ALA but is more intuitive and easier to implement. 

A quadratic penalty method converts \eqref{eq:CLIP_train} into the following unconstrained problem:
\begin{equation}\label{eqn:uncons}
\begin{aligned}
 \min_{\w }  \Phi(\w ):= F(\w, \cD) + \frac{1}{m}\sum\nolimits_{k=1}^m  \frac{\beta}{2}([h_k(\w ) ]_+)^2
\end{aligned}
\end{equation}
where $[\cdot]_+=\max\{\cdot,0\}$ and $\beta\geq0$ is the penalty parameter. Under mild conditions\cite{bertsekas2014constrained}, a large enough $\beta$ will ensure the optimal solution to \eqref{eqn:uncons} is also an optimal solution to \eqref{eq:CLIP_train}. In the following, we introduce an efficient stochastic algorithm to solve~\eqref{eqn:uncons}. It is notable that both terms are of finite-sum coupled compositional structure~\cite{DBLP:conf/icml/WangY22}, i.e., $\sum_i f(g_i(\w))$, where $f$ is non-linear.

We discuss how to approximate the gradient of two terms of the objective using mini-batch samples below. Define $g_{1i}(\w) = \frac{1}{|\cT_i^-| }\sum_{\t_j\in\mathcal T_i^{-}}\exp\left(E_1(\w, \x_i)^{\top}E_2(\w, \t_j)-E_1(\w, \x_i)^{\top}E_2(\w, \t_i)/\tau\right) $
and $g_{2i}(\w) = \frac{1}{|\cI_i^-| }\sum_{\x_j\in\mathcal I_i^{-}}\exp\left(E_2(\w, \t_i)^{\top}E_1(\w, \x_j)-E_2(\w, \t_i)^{\top}E_1(\w, \x_i)/\tau\right)$. Then, $F(\w,\cD)= \frac{1}{n_o}\sum_{(\x_i, \t_i)\in\cD}\\ \tau \log g_{1i}(\w) + \tau \log g_{2i}(\w) $ and its gradient is given by
\begin{small}
\begin{align*}
    \nabla F(\w, \cD) = \frac{\tau}{n_o} \sum_{(\x_{i}, \t_i) \in \mathcal{D}} \left( \frac{\nabla g_{1i}(\w)}{g_{1i}(\w)}   + \frac{\nabla g_{2i}(\w)}{g_{2i}(\w)}    \right).
\end{align*}
\end{small}
The major cost of computing $\nabla F(\w; \cD)$ lies on calculating $g_{1i}(\w)$ and $g_{2i}(\w)$ and their gradient for each pair, as it involves all the samples in $\mathcal{T}_i^{-}$ and $\mathcal{I}_i^{-}$. Directly approximating $g_{1i}$ and $g_{2i}$ by a mini-batch  of samples from $\mathcal{T}_i^{-}$ and $\mathcal{I}_i^{-}$ will reduce the computational cost but lead to a biased stochastic gradient of $\nabla F(\w, \cD)$ due to the non-linear dependence of $\nabla F(\w; \cD)$ on $g_{1i}$ and $g_{2i}$, which will cause the issue of requiring a large batch size in order to converge. 

To address this issue, we employ the moving average estimators for estimating $g_{1i}$ and $g_{2i}$ which gradually reduces the aforementioned biases to zero~\cite{yuan2022provable}. More specifically, let $\w^t$ be the solution at iteration $t$. We randomly sample a mini batch $\cB\subset\mathcal D$, and construct mini-batch negatives $\cB_{1,i}\subset \cT_i^-$,  $\cB_{2,i}\subset \cI_i^-$ for each data $(\x_i, \t_i)\in\cB$ and construct the following stochastic estimations of $g_{1i}(\w^t)$ and $g_{2i}(\w^t)$:
\begin{small}
\begin{equation*}
\begin{aligned}
    \hat g_{1i}(\w^t) &:= \frac{1}{|\cB_{1,i}|}\sum\nolimits_{\t_j\in \cB_{1,i}}\exp((E_1(\w, \x_i)^\top E_2(\w, \t_j)- E_1(\w, \x_i)^\top E_2(\w, \t_i)) /\tau) \\
     \hat g_{2i}(\w^t) &:=\frac{1}{|\cB_{2,i}|}\sum\nolimits_{\x_j\in \cB_{2,i}}\exp((E_2(\w, \t_i)^\top E_1(\w, \x_j)-E_2(\w, \t_i)^\top E_1(\w, \x_i))/\tau).
\end{aligned}
\end{equation*}
\end{small}

The moving averaging estimators of $g_{1i}\left(\w^t\right)$ and  $g_{2i}\left(\w^t\right)$ denoted by  $u_{1i}^{t}$ and $u_{2i}^{t}$ are updated by:
\begin{equation}
\begin{aligned}
    u_{1i}^{t+1} = (1-\gamma_1)u_{1i}^{t}+\gamma_1 \hat g_{1i}\left(\w^t\right),~ u_{2i}^{t+1} = (1-\gamma_1)u_{2i}^{t}+\gamma_1 \hat g_{2i}\left(\w^t\right),
\end{aligned}\label{eqn:u_posi}
\end{equation}
where $\gamma_1\in(0,1)$ is a hyper-parameter. The gradient estimator of $F(\w^t, \cD)$ is computed by
\begin{equation}\label{eqn:G1}
    G_1^t = \frac{\tau}{|\cB|}\sum\nolimits_{i\in\cB}\left(\nabla\hat g_{1i}\left(\w^t\right)/u_{1i}^{t}+ \nabla \hat g_{2i}\left(\w^t\right)/u_{2i}^{t} \right).
\end{equation}
The gradient of the quadratic penalized term at $\w^t$ can be approximated similarly by  
\begin{equation}\label{eqn:G2}
G_2^t = \frac{1}{|\cB_{c}|}\sum\nolimits_{k\in\cB_c}\beta[u_k^{t}]_+\nabla \hat h_k(\w^t),
\end{equation}
where $\cB_c$ denotes a sampled subset of protected tasks,  $\hat h_k(\w^t)$ denotes a mini-batch estimator of $h_k(\w^t)$ using mini-batch $\mathcal B_k\subset\cD_k$, and $u_{k}^{t}$ is the moving average estimator of $h_k(\w^t)$ computed by 
\begin{equation}
\begin{aligned}\label{eqn:u_class}
 u_{k}^{t+1}  = (1-\gamma_2)u_{k}^{t}+\gamma_2 \hat h_k(\w^t), \: \hat h_k(\w^t)&= \frac{1}{|\cB_k|}\sum\nolimits_{j\in \cB_k} \ell_{ce}(\w,\x_j,y_j) - \ell_{ce}(\w_{\text{old}},\x_j,y_j) .
\end{aligned}
\end{equation}
We emphasize that the gradient estimator in~(\ref{eqn:G2}) related to the protected tasks, where each proected task has an effective weight $\beta[u_k^{t}]_+$ that is dynamically changing in the learning process, is the key difference from the native weighting method mentioned at the beginning.

\setlength{\textfloatsep}{3pt}
\begin{algorithm}[t!]
\caption{Algorithm for solving~(\ref{eq:CLIP_train})}
\label{alg:clip_class}
\begin{algorithmic}[1]
\State \textbf{Initialization:} choose $\w^0$, $\beta$, $\gamma_1$, $\gamma_2$, $\theta$ and step sizes $\eta$.
\For{$t=0, 1,\cdots,T-1$}
\State{Sample image-text pairs $\cB$ from $\cD$ and protected tasks $\cB_c$ from $\{1,\cdots, m\}$. 
}
\For{ each $(\x_i,\t_i) \in \cB$}
\State{Update $u_{1i}^t$ and $u_{2i}^t$ by Eqn.~(\ref{eqn:u_posi})

}
\EndFor
\State{Update the estimator of gradient $\nabla F(\w^t, \mathcal{D})$ by $G_1^t$ as in Eqn.~(\ref{eqn:G1})
}
\For{ each $k \in \cB_c$}
\State{Sample a minibatch of data from $\cD_k$ denoted by $\cB_k$.
}
\State{Update the estimators of $h_k$  by Eqn.~(\ref{eqn:u_class}).}
\EndFor


\State{Compute the stochastic gradient estimator $G_2^t$ as in Eqn.~(\ref{eqn:G2})} 
\State{Update Gradient Estimator $v^{t+1} = (1-\theta)v^{t}+\theta  (G_1^t +G_2^t)$}
\State Update $\w$ by $\w^{t+1}=\w^{t}-\eta v^{t+1}$. 

\EndFor
\end{algorithmic}
\end{algorithm}

The key steps are presented in Algorithm~\ref{alg:clip_class}. We would like to emphasize  Algorithm~\ref{alg:clip_class} can be easily generalized to handling a general objective function $F(\w)$ whose unbiased stochastic gradient is available by just replacing $G_1^t$ with the stochastic gradient estimator of $F(\w_t)$. 

\subsection{Convergence Analysis}

Since our considered constrained optimization problem is non-convex for both objectives and constraints, a critical concern is whether Algorithm~\ref{alg:clip_class} has some convergence guarantee as standard learning algorithms such as SGD/Adam. We address this in this section. To the best of our knowlege, this is the first convergence analysis of a penalty method for solving non-convex inequliaty constrained optimization.  For analysis, we make the following assumptions.  

\begin{ass}\label{assumption1}
(a) $g_1(\cdot)$ and $g_2(\cdot)$ are $L_g$-Lipschitz continuous and $L_{\nabla g}$-smooth. (b) There exist $C_{g}>0$ and $c_{g}>0$ such that $c_g \leq \min\{g_1(\cdot), g_2(\cdot)\}$ and $\max\{g_1(\cdot), g_2(\cdot)\} \leq C_g$. (c) $h_k(\cdot)$ is $L_h$-Lipschitz continuous and $L_{\nabla h}$-smooth for $k=1,\cdots,m$.
\end{ass} 
\begin{ass}\label{assumption2}
    There exists $\w^0$ such that $h_k(\w^0)\leq 0$ for $k=1,\cdots,m$.
\end{ass}
\begin{ass}\label{assumption3}
    (a) $\mathbb{E}[\|\hat g_{1i}(\w)-g_{1i}(\w)\|^2]\leq \sigma_g^2/|\cB_{1i}|$, $\mathbb{E}[\|\hat g_{2i}(\w)-g_{2i}(\w)\|^2]\leq \sigma_g^2/|\cB_{2i}|$; (b) $\mathbb{E}[\|\nabla \hat g_1(\w)-\nabla g_{1i}(\w)\|^2]\leq \sigma_{\nabla g}^2/|\cB_{1i}|$, $\mathbb{E}[\|\nabla \hat g_{2i}(\w)-\nabla g_{2i}(\w)\|^2]\leq \sigma_{\nabla g}^2/|\cB_{2i}|$; (c) $\mathbb{E}[\|\nabla \hat h_k(\w)-\nabla h_k(\w)\|^2]\leq \sigma_{\nabla h}^2/|\cB_k|$; (d) $\mathbb{E}[\|\hat h_k(\w)-h_k(\w)\|^2]\leq \sigma_{h}^2/|\cB_k|$ for $k=1,\cdots,m$.
\end{ass}
\begin{ass}\label{ass:full_rank}
    There exists a constant $\delta > 0$ such that $\|\nabla \textbf{h}(\vw^t)[\textbf{h}(\vw^t)]_+\|\geq \delta \|[\textbf{h}(\vw^t)]_+\|$ for $t = 0,\cdots,T$, where $\textbf{h}(\w) = [h_1(\w), \ldots, h_m(\w)]^{\top}$ and $\nabla\textbf{h}(\w) = [\nabla h_1(\w), \ldots, \nabla h_m(\w)]$.
\end{ass}
{\bf Remark:} Assumption 1  has been justified in the earlier work~\cite{yuan2022provable,DBLP:conf/icml/QiuHYZ0Y23} for optimizing a global contrastive loss. Assumption 2 is easily satisfied with $\w^0=\w_{\text{old}}$.  Assumption 3 is a standard one that bounds the variance of mini-batch estimators. Assumption 4 is also made in many existing works on optimization with non-convex constraints~\cite{sahin2019inexact,xie2021complexity,alacaoglu2024complexity,lin2022complexity,li2024stochastic}. This assumption is equivalent to that the quadratic penalty term $H(\w):=\frac{\beta}{2m}\| [\textbf{h}(\w)]_+\|^2$ satisfies the Polyak-Lojasiewicz inequality at $\w=\w^t$, meaning that there exists $\delta\geq0$ such that $\|\nabla H(\w^t)\|^2\geq \frac{2\delta^2\beta}{m} H(\w^t)$. Without this assumption, \eqref{eq:CLIP_train} may be intractable because there may exist an iterate $\w^t$ such that $H(\w^t)>0$ but $\nabla H(\w^t)=0$, meaning that $\w^t$ is infeasible but at a flat location of $H(\w)$ so $\w^t$ may get trapped at this location forever. We will show later that a small $\delta$ in Assumption 4 will increase the complexity of our algorithm. Hence, we will present an approach in next subsection to increase $\delta$.

For a non-convex optimization problem like \eqref{eq:CLIP_train}, finding a globally optimal solution is intractable, so almost all numerical algorithms for non-convex problems can only guarantee a Karush-Kuhn-Tucker (KKT) solution defined below.
\begin{definition}
\label{def:KKT}
A solution $\vw$  is a KKT solution to \eqref{eq:CLIP_train} if there exist $\boldsymbol{\lambda}=(\lambda_1,\dots,\lambda_m)^\top\in\mathbb{R}_+^m$ such that $\nabla F(\vw, \cD)+ \nabla \textbf{h}(\vw)\boldsymbol{\lambda}=\mathbf{0}$, $\textbf{h}(\vw)\leq \mathbf{0}$ and $\lambda_kh_k(\w)=0$ for $k=1,\dots,m$.
\end{definition}
We present the convergence theorem of Algorithm~\ref{alg:clip_class} as follows, which shows the iteration complexity of Algorithm~\ref{alg:clip_class} for finding an $\epsilon$-KKT solution, i.e., a solution satisfying the three  conditions in Definition~\ref{def:KKT} up to $\epsilon$ precision.  The proof of the theorem is presented in Appendix~\ref{proof_thm}.
\begin{thm}\label{theorem}
    Suppose Assumptions~\ref{assumption1}, \ref{assumption2}, \ref{assumption3} and \ref{ass:full_rank} hold. Also, suppose, in Algorithm~\ref{alg:clip_class}, set $\beta=\frac{1}{\epsilon\delta}$,\\ $\theta =\min\{\frac{\epsilon^4\delta^2\min\{|\mathcal{B}_c|,|\mathcal{B}_{k}|\}}{672(\sigma_{\nabla h}^2+L_h^2)} ,\frac{\epsilon^2\min\{|\mathcal{B}|,|\mathcal{B}_{1i}|,|\mathcal{B}_{2i}|\}}{1344L_f^2(\sigma_{\nabla g}^2+L_g^2)}\}$, $\gamma_1=\gamma_2 = \min\{\frac{5n_0\theta}{3|\mathcal{B}|}, \frac{5m\theta}{3|\mathcal{B}_c|},\frac{\epsilon^4\delta^2|\mathcal{B}_k|}{26880\sigma_h^2\Tilde{C}_{\nabla h}^2}\}$ and \\$\eta = \min \left\{ \frac{1}{12(L_F+\beta L_H)}, \frac{\theta}{8\sqrt{3}L_F}, \frac{\theta}{8\sqrt{3}L_H \beta},\frac{\gamma_1 |\mathcal{B}|}{40\sqrt{6}L_g L_f \Tilde{C}_{\nabla g} n_0},\frac{\gamma_2 |\mathcal{B}_c|}{40\sqrt{6}\beta L_h\Tilde{C}_{\nabla h} m} \right\}$, where  $\Tilde{C}_{\nabla g}:=\sigma_{\nabla g}+L_g$, $\Tilde{C}_{\nabla h}:=\sigma_{\nabla h}+L_h$, $L_f:=\frac{\tau}{c_g}$, $L_{\nabla f}:=\frac{\tau}{c_g^2}$, $L_F:=2(L_{\nabla g}L_f+L_{\nabla f}L_g^2)$ and $L_H := 2L_{\nabla h} + L_h^2$.     
    Then there exists $\boldsymbol{\lambda}\in\mathbb{R}_+^m$ such that after $T=O(\epsilon^{-7}\delta^{-3})$ iterations Algorithm~\ref{alg:clip_class} satisfies
    \begin{equation*}
           \mathbb{E}\left[ \|\nabla F(\vw^{\hat{t}}, \cD)+ \nabla \textbf{h}(\vw^{\hat{t}})\boldsymbol{\lambda}\| \right]\leq \epsilon,\quad
    \mathbb{E} [\|[\textbf{h}(\vw^{\hat{t}})]_+\|] \leq \epsilon,\quad \mathbb{E} [\boldsymbol{\lambda}^\top [\textbf{h}(\vw^{\hat{t}})]_+]\leq \epsilon
    \end{equation*}
 where $\hat{t}$ selected uniformly at random from $\{1,\cdots,T\}$.
\end{thm}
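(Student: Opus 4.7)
The plan is to analyze Algorithm~\ref{alg:clip_class} as a momentum-SGD method applied to the quadratic-penalty surrogate $\Phi(\w) = F(\w,\cD) + H(\w)$ with $H(\w) = \frac{\beta}{2m}\|[\mathbf{h}(\w)]_+\|^2$, and then convert the resulting approximate-stationarity of $\Phi$ into the three $\epsilon$-KKT conditions for \eqref{eq:CLIP_train}. The natural dual candidate is $\boldsymbol{\lambda} := (\beta/m)[\mathbf{h}(\w^{\hat t})]_+ \in \mathbb{R}_+^m$, because then $\nabla\Phi = \nabla F + \nabla\mathbf{h}\,\boldsymbol{\lambda}$, so $\|\nabla\Phi\|\le\epsilon$ yields stationarity, $\boldsymbol{\lambda}^\top[\mathbf{h}]_+ = (\beta/m)\|[\mathbf{h}]_+\|^2$ yields complementary slackness once $\|[\mathbf{h}]_+\|$ is small, and feasibility will come from Assumption~\ref{ass:full_rank} together with the choice $\beta=1/(\epsilon\delta)$.

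First I would establish that $F$ is $L_F$-smooth (using the chain rule together with the two-sided bounds on $g_{1i},g_{2i}$ from Assumption~\ref{assumption1}(b), so that $\tau\log(\cdot)$ has bounded Lipschitz and smoothness constants in the relevant range) and that $H$ is $\beta L_H$-smooth (from smoothness of $[\cdot]_+^2$ and Assumption~\ref{assumption1}(c)), so $\Phi$ is $(L_F+\beta L_H)$-smooth. Next I would control the three tracking errors: for the compositional inner estimators $u_{1i}^t,u_{2i}^t$ and for the constraint-value estimator $u_k^t$, a standard moving-average analysis gives recursions of the form
\begin{equation*}
\mathbb{E}[(u^{t+1}-g(\w^{t+1}))^2] \le (1-\gamma)\,\mathbb{E}[(u^t - g(\w^t))^2] + O(\gamma^2\sigma^2/|\cB|) + O(\|\w^{t+1}-\w^t\|^2/\gamma),
\end{equation*}
and the momentum update $v^{t+1}=(1-\theta)v^t + \theta(G_1^t+G_2^t)$ obeys
\begin{equation*}
\mathbb{E}\|v^{t+1}-\nabla\Phi(\w^{t+1})\|^2 \le (1-\theta)\,\mathbb{E}\|v^t-\nabla\Phi(\w^t)\|^2 + O(\theta^2\sigma^2) + O(\|\w^{t+1}-\w^t\|^2/\theta) + (\text{compositional bias}),
\end{equation*}
where the bias piece is controlled by the first recursion through the smoothness of the outer $\log$ and of $[\cdot]_+$.

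Combining the $(L_F+\beta L_H)$-smooth descent lemma with the update $\w^{t+1}=\w^t-\eta v^{t+1}$ gives the usual inequality trading $\|\nabla\Phi(\w^t)\|^2$ against $\|v^{t+1}-\nabla\Phi(\w^t)\|^2$. I would form a Lyapunov potential
\begin{equation*}
\Psi^t := \Phi(\w^t) + c_1\mathbb{E}\|v^t-\nabla\Phi(\w^t)\|^2 + c_2\sum_i\mathbb{E}(u_{1i}^t - g_{1i}(\w^t))^2 + c_2\sum_i\mathbb{E}(u_{2i}^t - g_{2i}(\w^t))^2 + c_3\sum_k\mathbb{E}(u_k^t - h_k(\w^t))^2,
\end{equation*}
with coefficients chosen so that each per-iteration noise term is absorbed by the dissipation on the other terms under the theorem's parameter choices; telescoping from $0$ to $T-1$ gives $\frac{1}{T}\sum_t\mathbb{E}\|\nabla\Phi(\w^t)\|^2 \le O(\Phi(\w^0)/(\eta T)) + O(\epsilon^2)$, and substituting $\eta = O(\epsilon\delta)$, $\theta = O(\epsilon^4\delta^2)$, $\gamma_1=\gamma_2=O(\epsilon^4\delta^2)$ delivers $T=O(\epsilon^{-7}\delta^{-3})$. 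The KKT conversion at the random iterate $\hat t$ is then direct: stationarity follows by Jensen from $\mathbb{E}\|\nabla\Phi(\w^{\hat t})\|^2\le\epsilon^2$; for feasibility, Assumption~\ref{ass:full_rank} plus the trivial bound $\|\nabla F\|=O(1)$ gives $(\beta\delta/m)\mathbb{E}\|[\mathbf{h}(\w^{\hat t})]_+\|\le\mathbb{E}\|\nabla H(\w^{\hat t})\|\le\mathbb{E}\|\nabla\Phi\|+\|\nabla F\|=O(1)$, so $\beta=1/(\epsilon\delta)$ forces $\mathbb{E}\|[\mathbf{h}(\w^{\hat t})]_+\|=O(\epsilon)$; complementary slackness follows because $\boldsymbol{\lambda}^\top[\mathbf{h}]_+ = (\beta/m)\|[\mathbf{h}]_+\|^2 = O(\epsilon)$ after substitution.

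The main obstacle will be the joint tuning of $\theta,\gamma_1,\gamma_2,\eta,\beta$ so that the Lyapunov recursion actually closes. The compositional structure forces the gradient estimator $v^t$ to carry a genuine \emph{bias} (not just variance) from the moving averages appearing in the denominators of $G_1^t$ and in the $[\cdot]_+$ indicator of $G_2^t$, and this bias feeds into the descent inequality through a smoothness constant that itself scales like $\beta=1/(\epsilon\delta)$. The intricate parameter formulas in the theorem are precisely what is needed to make all of these contributions simultaneously $O(\epsilon^2)$ per step; verifying their mutual consistency, and in particular tracking how $\|\w^{t+1}-\w^t\|^2=\eta^2\|v^{t+1}\|^2$ feeds back into the tracking recursions, is the technical heart of the proof.
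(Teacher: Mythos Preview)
Your proposal is correct and follows essentially the same route as the paper: the same penalty surrogate $\Phi=F+H$, the same Lyapunov potential combining $\Phi$, the momentum tracking error $\|v^t-\nabla\Phi(\w^t)\|^2$, and the inner moving-average errors for $g_{1i},g_{2i},h_k$, the same dual candidate $\boldsymbol{\lambda}=(\beta/m)[\mathbf{h}(\w^{\hat t})]_+$, and the same KKT conversion via Assumption~\ref{ass:full_rank}. One small slip: you write $\eta=O(\epsilon\delta)$, but the minimum in the theorem is actually attained at the terms proportional to $\theta/\beta$ or $\gamma/\beta$, giving $\eta=O(\epsilon^5\delta^3)$; this is precisely what makes $1/(\eta T)=O(\epsilon^2)$ with $T=O(\epsilon^{-7}\delta^{-3})$, so your stated complexity is only consistent once that order is corrected.
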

{\bf Remark:} It is notable that the order of complexity in terms of $\epsilon$ is higher than that of standard learning (i.e., $O(\epsilon^{-4})$). While the complexity for a stochastic constrained optimization could be inherently higher than unconstrained optimization~\cite{alacaoglu2024complexity}, we note that the above complexity is also weaker than the state-of-the-art complexity of stochastic constrained optimization~\cite{alacaoglu2024complexity}. We remark that this is a limitation of the present work due to two reasons:  (i) we use the moving average gradient estimator for sake of implementation; in contrast, they use the advanced variance reduced gradient estimator (STORM), which incurs additional overhead; (ii) we use a constant $\beta$ and they use an increasing $\beta$. In our experiments shown in ablation studies, we find that using a constant $\beta$ is generally better than using an increasing $\beta$. Additionally, the dependence on $\delta$ could also slow down the convergence. We mitigate this issue by utilizing task-dependent heads for CLIP models justified below. 

\subsection{Promoting developmental safety via Task-dependent heads}\label{sec:task-heads}

Below, we present a way to design the text encoder of the CLIP model  such that the value of $\delta$ could be larger. Without causing confusion, we denote by $\w$ the parameter of the text encoder, which consists of two components $\u$ and  $W$ such that the text embedding  $E_2(\w,\t)\in\mathbb{R}^{d_2}$ can be represented as $E_2(\w,\t) =W \cdot \bar E_2(\u,\t)$, where 
 $\bar E_2(\u,\cdot)\in\mathbb{R}^{d_1}$ is a backbone encoder while $W\in\mathbb{R}^{d_2\times d_1}$ is called the head. The idea of task-dependent heads is to let each task $k$ have its own head $W_k= W+U_kV_k^\top$ using  low rank matrices $U_k \in \mathbb{R}^{d_2 \times r}$ and $V_k \in \mathbb{R}^{ d_1 \times r}$, where $r<\min(d_1, d_2)$ is the rank chosen as a hyper-parameter.  
 The output of this class-specific text encoder for task $k$ is $E_2(\u,W, U_k,V_k,\t_k)=(W+U_kV_k^\top)\cdot \bar E_2(\u,\t_k)$. Note that $\|\nabla \textbf{h}(\vw^t)^\top [\textbf{h}(\vw^t)]_+\|^2\geq \lambda_{\min}(\nabla \textbf{h}(\vw^t)^\top\nabla \textbf{h}(\vw^t)) \|[\textbf{h}(\vw^t)]_+\|^2$, where $ \lambda_{\min}(\cdot)$ represents the smallest eigenvalue of a matrix. This means $\min_{t}\lambda_{\min}(\nabla \textbf{h}(\vw^t)^\top\nabla \textbf{h}(\vw^t))$ is a lower bound of $\delta$ in Assumption 4. 
 The following lemma shows that, after expanding $\w$ with $U_k$ and $V_k$, $\lambda_{\min}(\nabla \textbf{h}(\vw^t)^\top\nabla \textbf{h}(\vw^t))$ may increase at some $U_k$ and $V_k$, providing some insight on why the task-dependent heads help to increase the parameter $\delta$ in Assumption 4, reducing the total complexity of our algorithm according to Theorem~\ref{theorem}. 

\begin{lemma}
\label{thm:regularity}
Let $\mathbf{U}=(U_1,\dots,U_m)$ and $\mathbf{V}=(V_1,\dots,V_m)$. Let $\w=(W,\u)$, $\hat\w=(W,\u,\mathbf{U},\mathbf{V})$, $h_k(\w)=h_k(W,\u)$, and $\hat h_k(\hat\w)=h_k(W+U_kV_k^\top,\u)$. 
Suppose $U_kV_k^\top=\mathbf{0}$ for all $k$'s. We have
\small
\begin{eqnarray*}
 \lambda_{\min}\left(\nabla\widehat{\textbf{h}}(\hat\w)^\top\nabla\widehat{\textbf{h}}(\hat\w)\right)  \geq\lambda_{\min}\left(\nabla\textbf{h}(\w)^\top\nabla\textbf{h}(\w)\right)+\min_k\left\{\left\|\nabla_{W} h_k(\w)V_k\right\|_F^2,\left\|\nabla_{W} h_k(\w)^\top U_k\right\|_F^2\right\},
\end{eqnarray*}
\normalsize
 where $\widehat{\textbf{h}}(\hat\w) = [\hat h_1(\hat\w), \ldots,\hat h_m(\hat\w)]^{\top}$ and $\nabla\widehat{\textbf{h}}(\hat \w) = [\nabla \hat h_1(\hat\w), \ldots, \nabla \hat h_m(\hat\w)]$.
\end{lemma}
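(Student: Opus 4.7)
The strategy is to exploit the fact that when $U_kV_k^\top=\mathbf{0}$ for every $k$, the extended parameter $\hat\w=(W,\u,\mathbf{U},\mathbf{V})$ reduces the function value $\hat h_k(\hat\w)$ to $h_k(\w)$, but the extra parameters $(U_k,V_k)$ contribute extra nonzero gradient directions that are \emph{task-specific}. I expect these extra directions to appear as a block-diagonal perturbation of the Gram matrix, and then Weyl's inequality will finish the argument.

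\textbf{Step 1: Compute the gradient of $\hat h_k$.} Write $M_k=W+U_kV_k^\top$, so $\hat h_k(\hat\w)=h_k(M_k,\u)$. By the chain rule for matrix functions,
\begin{equation*}
\nabla_{\u}\hat h_k(\hat\w)=\nabla_{\u}h_k(M_k,\u),\quad \nabla_{W}\hat h_k(\hat\w)=\nabla_{W}h_k(M_k,\u),
\end{equation*}
\begin{equation*}
\nabla_{U_k}\hat h_k(\hat\w)=\nabla_{W}h_k(M_k,\u)V_k,\quad \nabla_{V_k}\hat h_k(\hat\w)=\nabla_{W}h_k(M_k,\u)^\top U_k,
\end{equation*}
while $\nabla_{U_j}\hat h_k=\mathbf{0}$ and $\nabla_{V_j}\hat h_k=\mathbf{0}$ for $j\ne k$. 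Since $U_kV_k^\top=\mathbf{0}$, we have $M_k=W$, so the above gradients agree with $\nabla_W h_k(\w)$, etc.

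\textbf{Step 2: Identify the block structure of the Gram matrix.} Stack the gradients as the columns of $\nabla\widehat{\textbf{h}}(\hat\w)$. Because of the sparsity observed in Step 1 (only the $k$th block in the $\mathbf{U}$ and $\mathbf{V}$ coordinates is nonzero for the $k$th column), for any $k\ne j$ the inner product $\langle \nabla\hat h_k(\hat\w),\nabla\hat h_j(\hat\w)\rangle$ reduces to $\langle \nabla h_k(\w),\nabla h_j(\w)\rangle$, while for $k=j$ we pick up the additional squared norms coming from $U_k$ and $V_k$. Hence
\begin{equation*}
\nabla\widehat{\textbf{h}}(\hat\w)^\top\nabla\widehat{\textbf{h}}(\hat\w)=\nabla\textbf{h}(\w)^\top\nabla\textbf{h}(\w)+D,
\end{equation*}
where $D$ is the diagonal matrix with entries $D_{kk}=\|\nabla_W h_k(\w)V_k\|_F^2+\|\nabla_W h_k(\w)^\top U_k\|_F^2\ge 0$.

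\textbf{Step 3: Apply Weyl's inequality and finish.} Since both matrices on the right are symmetric and $D\succeq \mathbf{0}$, Weyl's inequality gives $\lambda_{\min}(A+D)\ge \lambda_{\min}(A)+\lambda_{\min}(D)$. Because $D$ is diagonal, $\lambda_{\min}(D)=\min_k D_{kk}$, and using $a+b\ge\min\{a,b\}$ for nonnegative $a,b$ yields
\begin{equation*}
\lambda_{\min}(D)\ge \min_k\min\bigl\{\|\nabla_W h_k(\w)V_k\|_F^2,\|\nabla_W h_k(\w)^\top U_k\|_F^2\bigr\},
\end{equation*}
which is exactly the claimed bound.

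\textbf{Anticipated obstacle.} The mechanics are essentially linear algebra, so there is no real hurdle once the block structure is seen. The only step requiring care is the matrix chain rule in Step 1, since one must check that the expressions for $\nabla_{U_k}\hat h_k$ and $\nabla_{V_k}\hat h_k$ match the Frobenius-inner-product convention used to define $\nabla_W h_k(\w)\in\mathbb{R}^{d_2\times d_1}$. Once the dimensions and transposes are consistent, the inner-product bookkeeping in Step 2 falls out directly and the rest is immediate.
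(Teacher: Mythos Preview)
Your proposal is correct and follows essentially the same route as the paper: compute the block-sparse gradients with respect to $(U_k,V_k)$, observe that under $U_kV_k^\top=\mathbf{0}$ the Gram matrix of $\nabla\widehat{\textbf{h}}$ equals that of $\nabla\textbf{h}$ plus a nonnegative diagonal, and then bound the minimum eigenvalue from below. The only cosmetic difference is that the paper writes out the Rayleigh-quotient characterization $\lambda_{\min}=\min_{\|\boldsymbol{\alpha}\|=1}\|\sum_k\alpha_k\nabla\hat h_k\|^2$ explicitly, whereas you invoke Weyl's inequality by name; the underlying argument is identical.
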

Following this lemma, in our experiments, we employ the task-dependent heads by setting the initial value of $U_k$ to zero so $U_kV_k^\top=\mathbf{0}$. The proof of the above lemma is given in Appendix~\ref{sec:regularity}

\section{Experiments}

In this section, we conduct extensive experiments to understand our proposed method comprehensively. { We start by presenting a visualization of the learning process of the proposed method in Section \ref{sec:visu} to provide an overview of how our approach works. In Section~\ref{sec:compare_baseline}, we demonstrate the effectiveness of the proposed method in achieving model developmental safety compared with other strong baselines. We present a detailed ablation study to help understand each design choice of the proposed method in Section~\ref{sec:ablation}, including the effect of using external data for contrastive learning on improving the target task, the importance of task-dependent heads, etc. Lastly, we show the potential of our method in multiple rounds of model development in Section~\ref{sec:multiround}}



\textbf{Dataset.}  We experiment on the large-scale diverse driving image dataset, namely BDD100K~\cite{seita2018bdd100k}. This dataset involves classification of six weather conditions, i.e., {\it clear, overcast, snowy, rainy, partly cloudy, foggy}, and of six scene types, i.e., {\it highway, residential area, city street, parking lot, gas station, tunnel}. Since the labels of the official testing dataset are not released, we utilize the official validation set for testing and partition the training dataset into training and validation sets using an 80\%/20\% ratio.  Moreover, we experiment on the scene recognition dataset, Places365 which has 365 classes~\cite{zhou2017places}, 
to verify the effectiveness of the proposed method in handling a large number of constraints. We utilize the standard version of the dataset (i.e., Places365-Standard), with 1.8 million training and 36500 validation images from 365 scene classes. The number of examples for each class varies between 3,068 and 5,000 in the training set. We merge the training dataset and validation dataset and randomly split the whole set into training set, validation set and test set with an 60\%/20\%/20\% ratio.

\textbf{Evaluation Metrics.}  Since the focus of this paper is to ensure model developmental safety during model development process, we separate the performance on the new task and that on protected tasks in our evaluation. 
We measure improvement on target task with $\Delta\text{Acc(Target)} = \text{Acc}(\text{Target}, \w_{\text{new}}) - \text{Acc}(\text{Target}, \w_{\text{old}})$.
Besides, we utilize "DevSafety(acc)" (i.e., Eqn.~\ref{eqn:dev_safety}) to measure the empirical MDS. As optimization involves randomness, we run all the experiments with five different random seeds then calculate the average target accuracy and the percentage of times that DevSafety(acc) is non-negative, denoted as {\bf retention ratio}, to measure the possibility of strictly preserving the performance on protected tasks. For example, the retention ratio is 60\% if 3 out of 5 runs of the method preserve previous performance for all protected tasks.)

\textbf{Experimental Settings.} 
We employ the CLIP ViT-B/16~\cite{radford2021learning} as the backbone network in all our experiments. For BDD100K dataset, we obtain a base model by fine-tuning the pretrained CILP model, following the method proposed in \cite{yuan2022provable}, on the BDD100K training dataset without foggy and tunnel data.  Subsequently, we undertake secondary development to improve the performance of a target class separately. We consider three settings with \textit{foggy}, \textit{overcast} and \textit{tunnel} as the target class. For targeting \textit{foggy}, we consider other weather conditions as protected tasks, for targeting \textit{overcast} we consider other weather conditions except for foggy as protected tasks due to that there is a lack of foggy data in BDD100k for defining a significant constraint. For the same reason, we consider other scence types except \textit{gas station} as protected tasks for targeting \textit{tunnel}. The image-text pairs for the objective function are from the training set of BDD100K and the  external LAION400M \cite{schuhmann2021laion} dataset. Specifically, for each target class, we use a query prompt (detailed in Appendix~\ref{sec:laion}) to search for target-related image-text pairs in LAION400M to augment the set $\cD$. Additionally, we randomly sample a set of image-text pairs from LAION400M that is 10 times larger than target-related pairs 
as negative data for contrasting.  The data of protected tasks used for developmental safety constraints are sampled from the BDD100K training set with varying sizes. Statistics for BDD100K in our experiments are shown in Appendix Table~\ref{tab:bdd100k}. The text templates used for BDD100K dataset are "\textit{the weather is [Weather]}" and "\textit{the scene is a [Scene]}".

For Places365 dataset, we directly utilize the pretrained CLIP model released by \cite{radford2021learning} as the base mode. Then we conduct continual development to improve the performance of \textit{dressing room} class, which has the fewest samples in the dataset, and consider all the other 364 classes as protected tasks. Similar to the setting for BDD100K dataset, we also use a query prompt (detailed in Appendix~\ref{sec:laion}) to search for target-related image-text pairs in LAION400M to augment the set $\cD$. The data of protected tasks used for developmental safety constraints are sampled from the Places365 training set. The text templates used for Places365 dataset are "\textit{the scene is a(n) [Scene]}".

\textbf{Baselines.} To verify the effectiveness of our algorithms, we compare our proposed algorithm with the following baseline methods: (1) FLYP~\cite{goyal2023finetune}, a state-of-the-art CLIP finetuning method that optimizes a contrastive loss on all available data including those used in our objective and constraints.
In our experiments, we utilize the same global contrastive loss (GCL)~\cite{yuan2022provable} instead of mini-batch contrastive loss;  (2) Weighted Combination of Contrastive Losses (WCCL), which utilizes a weight to combine GCL losses on protected tasks and the target task to control the tradeoff between them to achieve model developmental safety; 
(3) GEM~\cite{lopez2017gradient}, which is a strong CL baseline motivated by a similar idea utilizing data of previous tasks for constraints; (4) Co$^2$L\cite{cha2021co2l}, which is a recent SOTA contrastive continual learning baseline; (5) DER\cite{buzzega2020dark}, a strong memory-based continual learning baseline, which mixes rehearsal with knowledge distillation and regularization; (6) Regularization Method (RM), as commonly adopted in continual learning literature~\cite{rebuffi2017icarl,castro2018end}, directly takes the constraints in Eqn.~\eqref{eq:CLIP_train} as a regularization term by adding it to the objective function with a regularization weight $\alpha$. All methods start from the same  CLIP model. More details about baselines are presented in Appendix~\ref{sec:baselines}.

\textbf{Hyperparameter tuning.}
For all methods in our experiments, we tune the learning rate in \{1e-5, 1e-6\} with Cosine scheduler and AdamW optimizer, using a weight decay of 0.1. For BDD100K dataset, we set temperature $\tau_0$ as 0.05. We run each method for a total of 40 epochs with a batch size of 256 and 600 iterations per epoch, except for GEM whose total epochs are tuned in \{1,2,5\} with a batch size of 64 since more iterations lead to exacerbated catastrophic forgetting problems as shown in their paper. For our method, we tune $\beta$ in \{100, 200, 400\}, $\gamma_2$ in \{0.4, 0.6, 0.8\} and set $r=32, |\cB_c|=m, |\cB_k|=10$.  We set $\gamma_1$ to 0.8, $\tau$ to 0.05 in FLYP, WCCL, RM, and our method. For WCCL, we vary the weight parameter $\alpha$ in \{0.5,0.9,0.99\}. For GEM, we tune their small constant $\gamma$ in \{0.5, 1.0\}. For Co$^2$L, we tune their $\tau$ in \{0.05, 0.1\}, $\kappa$ in \{0.1, 0.2\}, $\kappa^*$ in \{0.01, 0.1\}, $\lambda$ in \{0.1, 1, 10\}. For DER, we tune their $\alpha$ in \{0.1, 1, 10\} and $\beta$ in \{0, 1\}. For RM, we tune regularization weight $\alpha$ in \{0.1, 1, 10\}. In hyper-parameters selection for all methods, we prioritize larger {\bf retention ratio} first and consider larger $\Delta\text{Acc (Target)}$ if there is a tie in terms of retention ratio, as we look for models that maximize $\Delta\text{Acc (Target)}$ while satisfying  $\text{DevSafety}\geq 0$.  For Places365 dataset, the temperature $\tau_0$ is set as 0.01. Since there are as many as 364 constraints, we set $|\cB_c|=240, |\cB_k|=2$. We tune $\beta$ in \{600, 1000, 4000\} for our method and regularization weight $\alpha$ in \{1, 10, 100, 1000, 10000\} for RM. 
We run each method five times for a total of 40 epochs with 1400 iterations per epoch, with a batch size of 64.

\begin{figure*}[!t]
\centering

    \includegraphics[width=0.95\linewidth]{./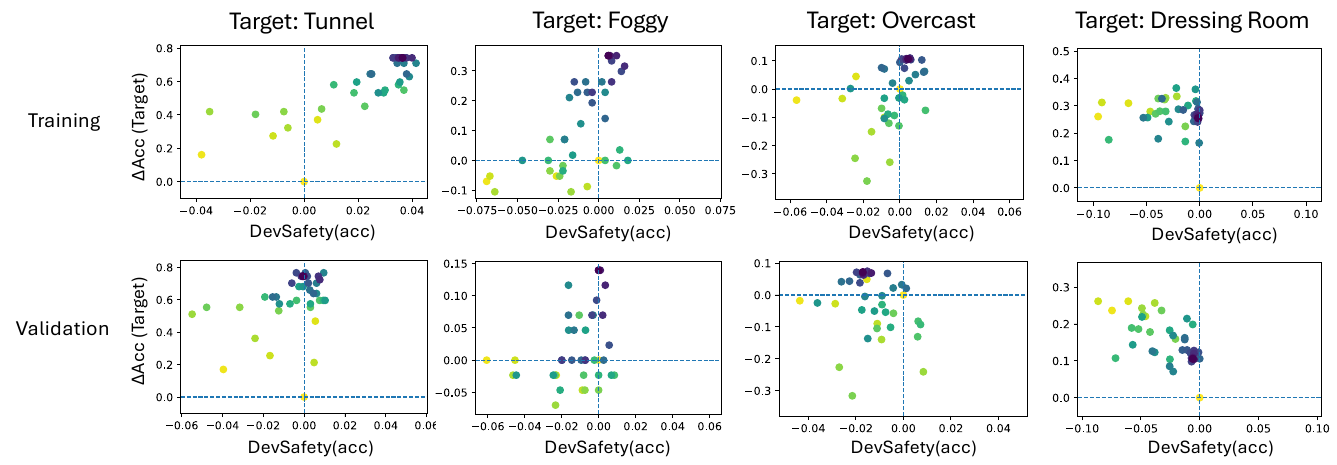}

\centering
\caption{Visualization of the learning trajectory. Each dot denotes a solution with lighter color being earlier iterations and darker being later iterations.}
\label{fig:trajectory}
\end{figure*}

\subsection{Visualization of Learning Process}
\label{sec:visu}
To provide a direct understanding of why and how the proposed algorithm works, we present the learning trajectory of the algorithm in Figure~\ref{fig:trajectory}. {Each dot in this figure represents a solution during the learning processing, with lighter colors indicating earlier stages and darker colors representing later stages. 
From the top four figures for training sets, we can observe a common trend that solutions start from the lower left and move toward the upper right,} indicating the algorithm endeavors to enhance the performance of the targeted task while ensuring developmental safety on protected tasks.  Similarly,  this trend extends to the validation sets, shown in the bottom row, demonstrating the generalization capability of the proposed algorithm. It is striking to see that, when targeting \textit{Dressing Room} in Places365 dataset with all other 364 classes as protected tasks, our method are still able to achieve developmental safety  in training set and generalize to validation set. 
These observations can also be found in separate views of DevSafety vs epochs and $\Delta\text{Acc(Target)}$ vs epochs shown in Figure~\ref{fig:trajectory2}.

\begin{figure*}[tb!]
\centering
    \includegraphics[width=0.33\linewidth]{./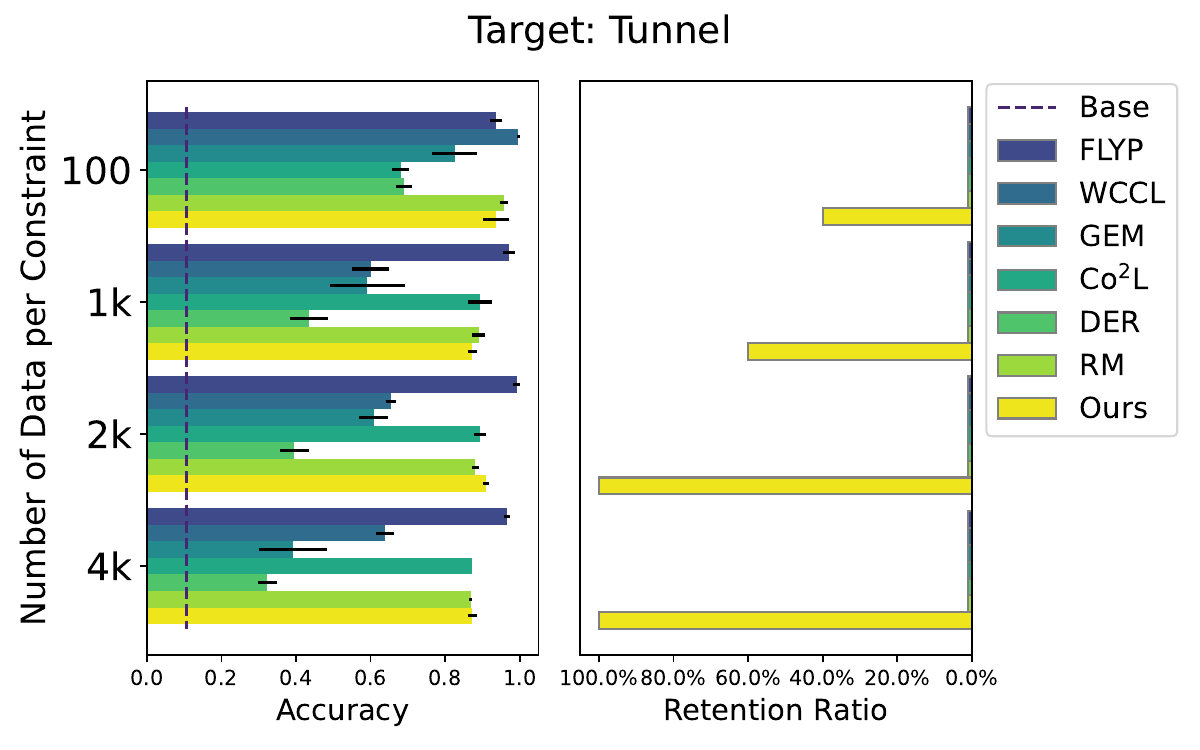}
    \includegraphics[width=0.33\linewidth]{./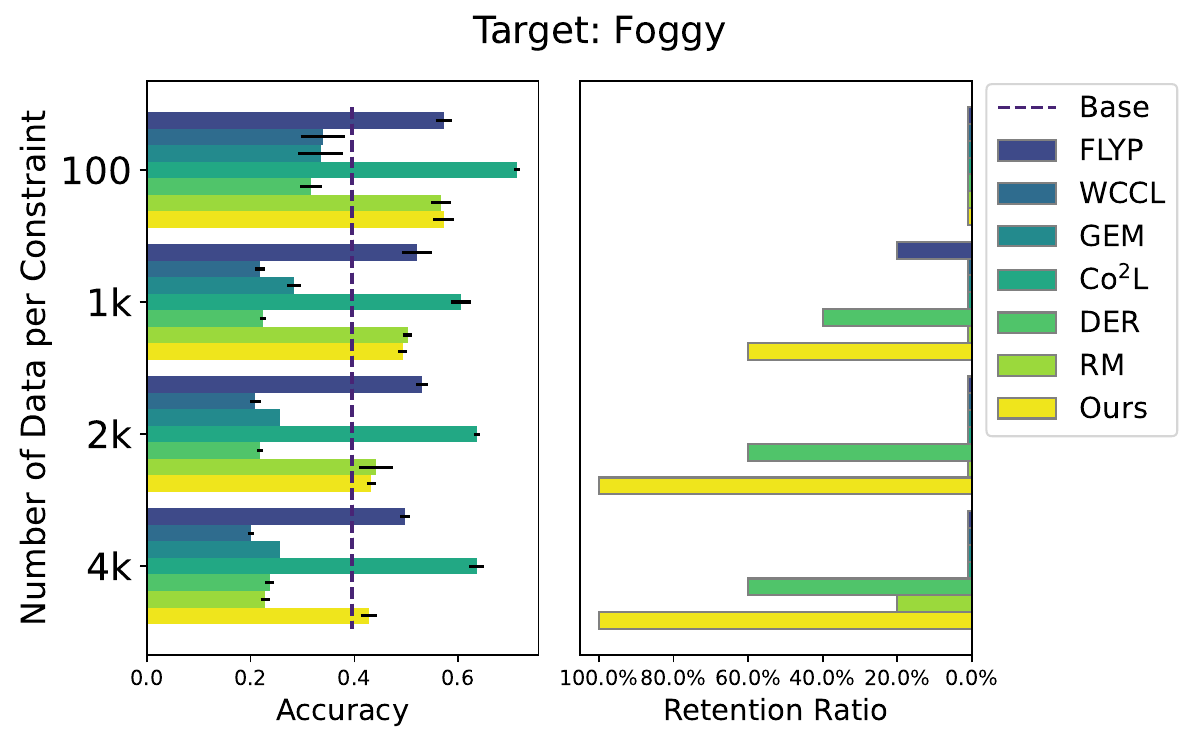}
    \includegraphics[width=0.33\linewidth]{./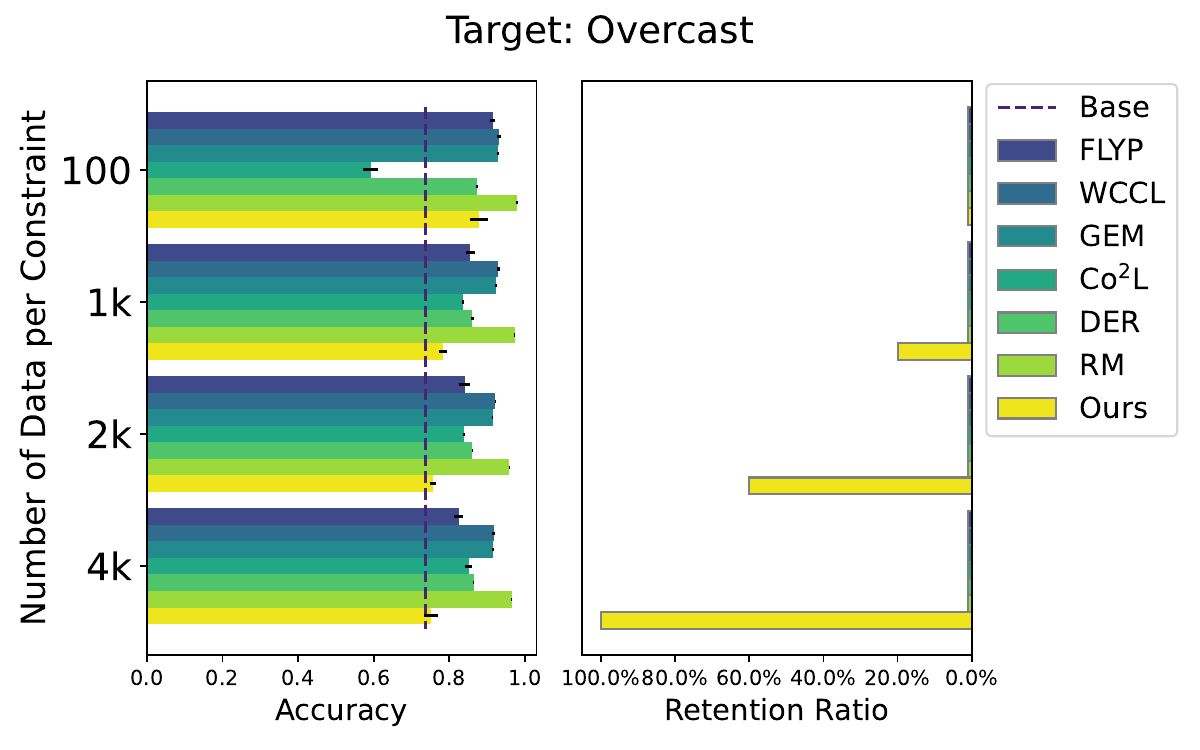} \\
\centering
\caption{Performance Comparison with Baselines. Dot lines represent the performance of the base model on the target task. Detailed numbers are presented in Table~\ref{tab:d_tunnel},~\ref{tab:d_foggy},~\ref{tab:d_overcast}.} 
\label{fig:vsbaseline}
\end{figure*}

\subsection{Comparison with Baselines for Model Developmental Safety}
\label{sec:compare_baseline}

In this part, we compare the proposed method with other baselines to demonstrate the superiority. Specifically, we focus on two metrics, i.e., retention ratio for measuring the possibility of strictly preserving the performance on all protected tasks and accuracy on the target task.  On autonomous driving BDD100K dataset, we conduct experiments with different numbers of data for constraints, i,e., 100, 1k, 2k, 4k from each task.
The comparison results are presented in Figure~\ref{fig:vsbaseline}. The figure illustrates that improving the base model on the target tasks is not challenging, as nearly all methods accomplish this effortlessly. However, all baselines, including the strong continual learning baselines GEM and Co$^2$L, exhibit a zero retention ratio across almost all settings, showing the insufficiency of existing methods for ensuring MDS on protected tasks. Although, when targeting \textit{Foggy}, DER may achieve a retention ratio up to 60\% with more data from protected tasks and large regularization weights, it fails to improve target performance at the same time. In contrast, our method begins to ensure developmental safety with 1k samples per protected task and even 100 samples for the target class \textit{tunnel}. Besides, the retention ratio increases when using more data for constraints, consistent with the result obtained in Lemma~\ref{thm:safety_generalization} (Refer to Table~\ref{tab:res} for more results). Notably, our method achieves a 100\% retention ratio with 4k samples per protected task in all three settings, while improving accuracy on the target class. We also see that the target class \textit{overcast} is most difficult to improve as the base model already has 73.6\% accuracy.

\begin{figure*}[t]
\centering

     \includegraphics[width=0.37\linewidth]{./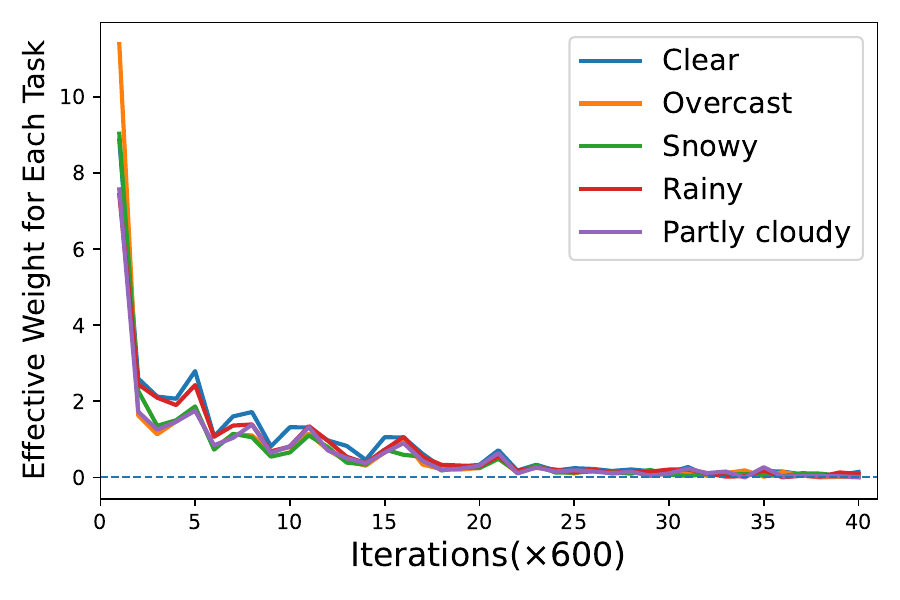}
    \includegraphics[width=0.55\linewidth]{./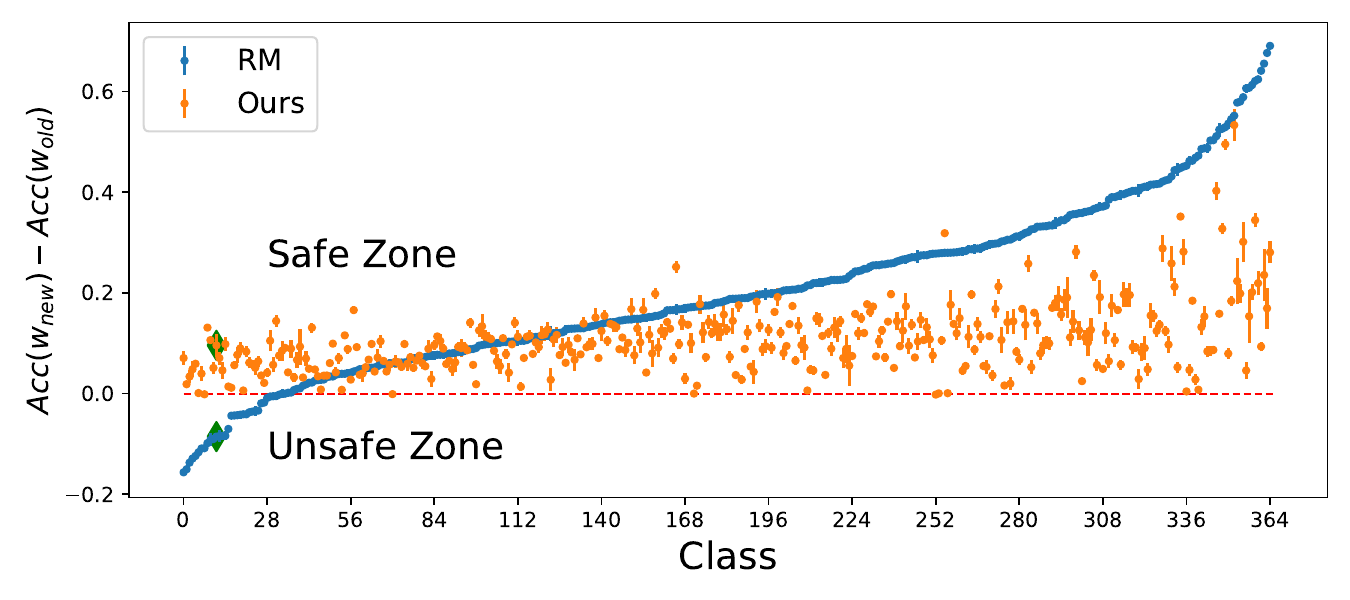}
\centering
\caption{(Left) Adaptive weight adjustments for each protected task during training (Targeting \textit{Foggy}). Weights shown are averaged over every 600 iterations for visualization. (Right) Performance comparison with baseline RM when targeting \textit{Dresssing Room} on Places365 Dataset, with 2k samples per constraint. Red line denotes base model's performance, green diamonds denote the target class. RM baseline shown is for weight $\alpha=10000$ and more plots for other weights are presented in Figure~\ref{fig:places365}.   } 
\label{fig:weight}
\end{figure*}

From Figure~\ref{fig:vsbaseline}, we notice that the baseline RM fails to achieve MDS, even though it has a tunable weight parameter $\alpha$ for protected tasks. Comparison with RM can directly verify the advantage of our method as the only difference between the two methods is how to handle the protected tasks.  From Eqn.~(\ref{eqn:G2}), we can see that our algorithm has an effective weight $\beta[u^t_k]_+$ for each protected task. It is adaptively adjusted during learning, and depends on the degree of violation of constraints, i.e., the larger the violation, the larger the weight. Figure~\ref{fig:weight} (left) shows that these effective weights gradually decrease to zero during the learning of our algorithm, which allows the model to learn from the target task while satisfying constraints.  This mechanism plays a big role in not only achieving MDS but also improving the performance on the target task. In contrast, RM uses a constant weight $\alpha$ for every protected task. Simply increasing $\alpha$ may not ensure MDS, due to varied learning difficulty between protected tasks. Besides, too large $\alpha$ will also harm the performance of the target task.  We further investigate the phenomenon in Appendix~\ref{sec:weight} and find that, with a uniform weight for all the protected tasks, it might preserve previous performance on some of the protected tasks but fail to achieve MDS for all the protected tasks, even with a very high weight $\alpha$. 

To further verify the effectiveness of our method in handling a large number of constraints, we experiment on the Place365 dataset, compared with RM, targeting \textit{Dressing room} class and protecting the other 364 tasks in Figure~\ref{fig:weight} (right). It shows that even with hundreds of protected tasks, our method is still effective in preserving their performance. In contrast, RM even with a large weight $\alpha$ not only causes performance drops in around 30 protected classes failing to ensure MDS but also fails to improve the performance of the target task.  


\subsection{Performance with Multiple Rounds of Model Development}
\label{sec:multiround}
To demonstrate the effectiveness of the proposed retention-centric framework in iterative model development process, we conduct two consecutive rounds of development on recognizing weather conditions on BDD100k data. Specifically, we first target at \textit{overcast} task, taking all the other five weather conditions as protected tasks, then with one selected improved model, we successively improve the model, targeting at improving the performance of the \textit{foggy} task. As shown in Fig.~\ref{fig:multi-round}, our method notably improves the performance of the \textit{overcast} task in the first round while ensuring the performance of other tasks does not decrease. In the second round, it continues to enhance the performance of the \textit{foggy} task. Simultaneously, it preserves the performance, if not boosts it, across other tasks, with only a slight decrease on the \textit{snowy} task, showing the effectiveness of the proposed framework for maintaining the model developmental safety. 

\subsection{Ablation studies}\label{sec:ablation}

\begin{table}[t!]
  \centering
  \caption{Effect of the Number of Samples for Constraints. Numbers in parentheses denote standard deviation.}
    \resizebox{0.98\linewidth}{!}{
    \begin{tabular}{l|l|c|cccc}
    \toprule
    Target&\multicolumn{1}{l|}{Measures} & \multicolumn{1}{l|}{Base model} & 100   & 1k    & 2k    & 4k \\
    \midrule
    &DevSafety(acc)  & 0.00(0.0000) & -0.0050(0.0076) & -0.0001(0.0043) & 0.0105(0.0053) & 0.0186(0.0058) \\
  Tunnel &  Retention Ratio & 100.00\% & 40.00\% & 60.00\% & 100.00\% & 100.00\% \\
    &Target Acc & 0.1064(0.0000) & 0.9362(0.0699) & 0.8723(0.0233) & 0.9106(0.0159) & 0.8723(0.0233) \\
    \midrule
    &DevSafety(acc)  & 0.00(0.0000) & -0.0241(0.0082) & -0.0009(0.0044) & 0.0044(0.0033) & 0.0061(0.0047) \\
     Foggy&Retention Ratio & 100.00\% & 0.00\% & 60.00\% & 100.00\% & 100.00\% \\
    &Target Acc & 0.3953(0.0000) & 0.5721(0.0406) & 0.4930(0.0174) & 0.4326(0.0186) & 0.4279(0.0316) \\
    \midrule
    &DevSafety(acc)  & 0.00(0.0000) & -0.0655(0.0249) & -0.0043(0.0037) & 0.0012(0.0029) & 0.0046(0.0016) \\
   Overcast &Retention Ratio & 100.00\% & 0.00\% & 20.00\% & 60.00\% & 100.00\% \\
    &Target Acc  & 0.7361(0.0000) & 0.8789(0.0464) & 0.7827(0.0225) & 0.7562(0.0167) & 0.7525(0.0366) \\
    \bottomrule
    \end{tabular}%
    }
  \label{tab:res}%
\end{table}%

\subsubsection{Importance of the external data from LAION400M}

We conduct experiments on targeting \textit{foggy} to investigate the benefits of the external data retrieved from LAION400M dataset. In detail, we vary the number of retrieved target-related image-text pairs utilized in the objective function, i.e., \{0, 2k, 5k, 11k\}, with 1k samples from each protected task as constraints. From Tab.~\ref{tab:laion}, we can see that, with only 57 \textit{foggy} samples from BDD100k dataset (i.e., 0 samples from the external data), the model does not improve the target accuracy at all. However, with more and more retrieved image-text pairs utilized to augment the dataset $\cD$, the improvement on the targeted task appears and becomes significant, showing the advantages of incorporating the retrieved target-related image-text pairs for boosting target task accuracy. Regarding retention ratios, we don't observe a clear correlation between the amount of retrieved data and the retention ratios.

\begin{table}[tb]
  \centering
  \caption{The Effect of External Image-text Pairs from LIAON400M. Numbers in parentheses denote std.}
  \resizebox{0.99\linewidth}{!}{
    \begin{tabular}{l|c|cccc}
     & \multicolumn{1}{l|}{Ref(Base model)} & 0     & 2k    & 5k    & 11k \\
    \midrule
    Retention Ratio & 100.00\% & 100.00\% & 80.00\% & 100.00\% & 60.00\% \\
    Target Acc (Foggy) & 0.3953(0.0000) & 0.3674(0.0372) & 0.4047(0.0562) & 0.4186(0.0389) & 0.4930(0.0174) \\
    \end{tabular}%
    }
  \label{tab:laion}%
\end{table}%

\subsubsection{Importance of Task-dependent Heads}\label{sec:important_head}

As introduced in Section~\ref{sec:task-heads}, to reduce the total complexity of our algorithm, we propose task-dependent heads to increase the parameter $\delta$ in Assumption 4, avoiding getting trapped at a flat location where $\w^t$ is infeasible but $\nabla H(\w^t)=0$. To verify the effectiveness of the design, we experiment on targeting the \textit{foggy} task with varying numbers of data for constraints. The results are presented in Figure~\ref{fig:heads}(a). The results show that models equipped with task-dependent heads almost consistently exhibit both higher retention ratio and higher accuracy on the target task. Besides, without task-dependent heads, models may have trouble achieving 100\% developmental safety, demonstrating the effectiveness of task-dependent heads for promoting developmental safety.

\begin{figure*}[t]
\centering
    \subfigure[] {\includegraphics[width=0.48\linewidth]{./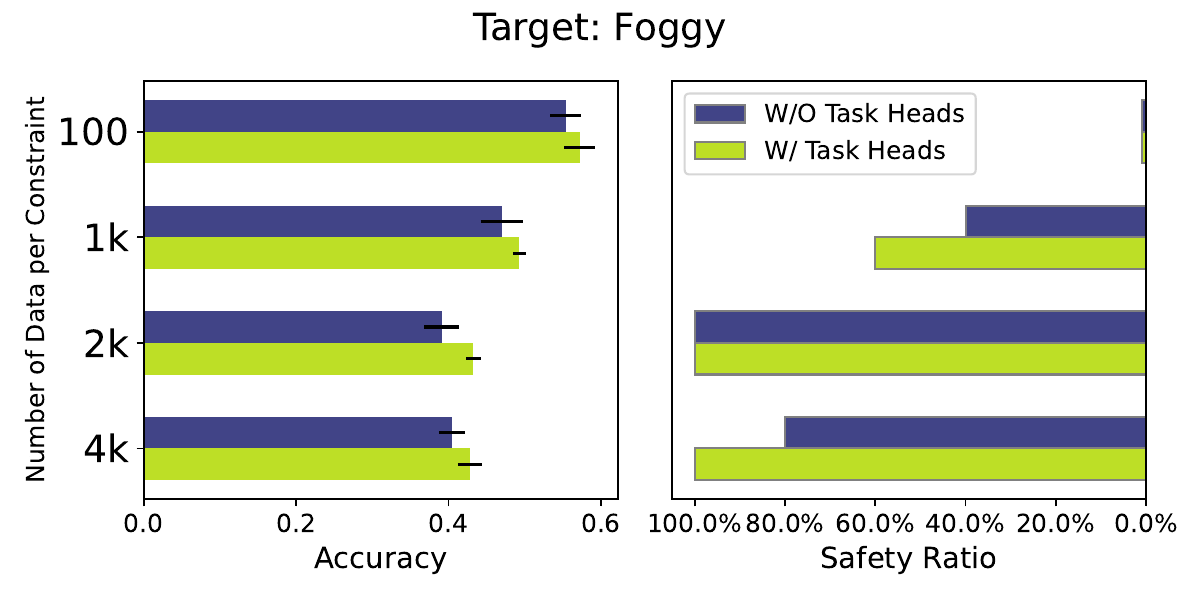}}
    \subfigure[]{\includegraphics[width=0.48\linewidth]{./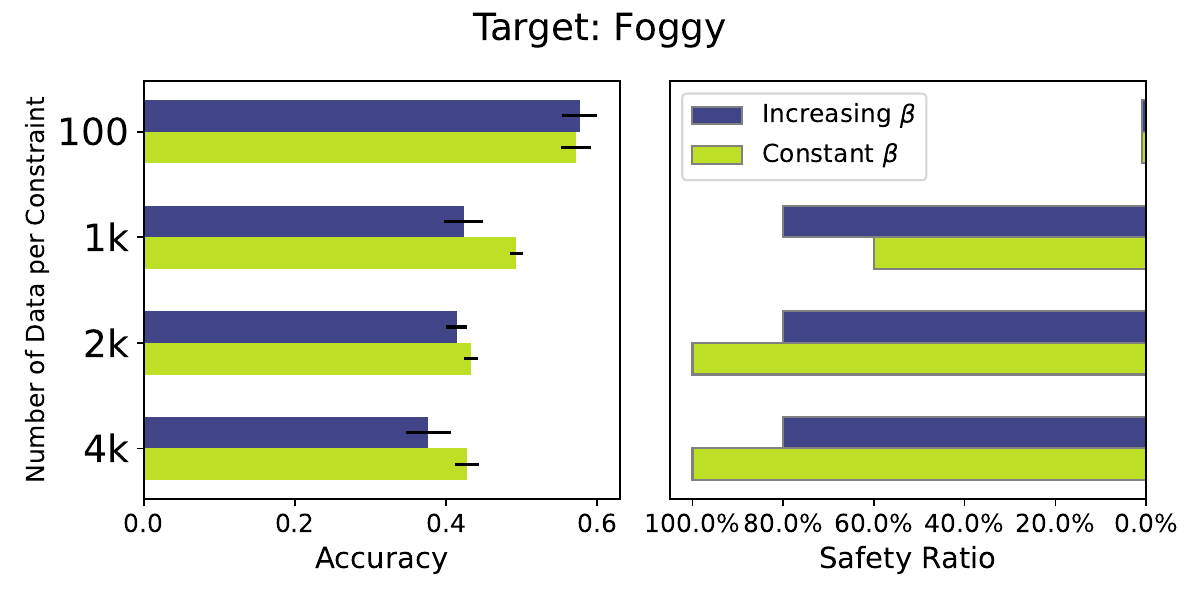}}

\centering
\caption{(a) Task-dependent heads promote developmental safety. (b) Performance Comparison between constant $\beta$ and increasing $\beta$. }
\label{fig:heads}
\end{figure*}


To further verify  the theoretical result in Lemma~\ref{thm:regularity}, we empirically calculate $\nabla \widehat{\textbf{h}}(\hat\w)$ and $ \nabla {\textbf{h}}(\w)$ with CLIP models. Specifically,  we compute the minimal singular values of $\nabla \widehat{\textbf{h}}(\hat\w)$ and $ \nabla {\textbf{h}}(\w)$ on the base model and two trained models, with 1k samples for each protected task. The initial value of $U_k$ is set to zero so $U_kV_k^\top=\mathbf{0}$. From the results presented in Table~\ref{tab:singular}, we can observe that, on the initial model, the minimal singular value of $\nabla \widehat{\textbf{h}}(\hat\w)$ is slightly larger than that of  $ \nabla {\textbf{h}}(\w)$ and the gap become much significant after training, {which is consistent with the theoretical result in Lemma~\ref{thm:regularity} and also provides some insight on the empirical results in Figure~\ref{fig:heads}(a).
}

\begin{table}[t]
  \centering
  \caption{Minimal Singular Values $\delta$ of  $ \nabla {\textbf{h}}(\w)$ and $\nabla \widehat{\textbf{h}}(\hat\w)$ }
    \begin{tabular}{l|l|ll}
         & & Initial Model & Final Model \\
    \midrule
     w/o task-dependent heads&$ \nabla {\textbf{h}}(\w)$ & 23.9183 & 15.0919 \\
     w task-dependent heads& $\nabla \widehat{\textbf{h}}(\hat\w)$ &  24.1038 & 16.3397 \\
     \bottomrule
    \end{tabular}%
  \label{tab:singular}%
\end{table}%

\subsubsection{Constant $\beta$ vs Increasing $\beta$} 

In theory, an increasing penalty parameter $\beta$ may help reduce the complexity of constrained problems as shown in \cite{alacaoglu2024complexity},  but in our empirical experiments, we find that using a constant $\beta$ is generally behave better than using an increasing $\beta$ . As shown in Fig.~\ref{fig:heads}(b) for target task \textit{foggy}, models with a constant $\beta$ are able to achieve 100\% retention ratio with 2k or 4k sampler per constraint. On the contrary, models using a cosine increasing $\beta$ obtain both lower retention ratio and lower accuracy on the target task compared to models with constant $\beta$. We conjecture that this is because models with an increasing $\beta$ might leave the developmental safety region too far in the initial stages as they have a relatively small penalty weight $\beta$ at this time. Given the high non-convexity and complexity of the model space, it becomes increasingly challenging in the later stages to return to a feasible solution that satisfies developmental safety constraints while significantly improving target accuracy.

\section{Conclusion}
In this paper, we introduced the concept of "model developmental safety" to ensure that model development not only acquires new capabilities but also strictly preserves those already owns, addressing the critical developmental safety oversight in existing ML/AI studies. To ensure model developmental safety, we proposed a {retention}-centric framework by formulating the model developmental safety as data-dependent constraints. We proposed an efficient constrained optimization algorithm with theoretical guarantees to develop a pretrained vision-language model (CLIP model) for improving existing image classification capabilities. Comprehensive experiments demonstrate the effectiveness of the algorithm in enhancing vision-based perception capabilities in autonomous driving and scene recognition, showing its practical value in real-world scenarios.  {As the proposed framework in this paper is a generic retention-centric optimization framework, it can be potentially extended to various scenarios or models, such as finetuning LLMs or enhancing object detection systems and motion prediction tasks for autonomous driving. We hope our work can inspire researchers in safety-critical application domain for more exploration.}

\section*{Acknowledgements}

This research is partially supported by the National Artificial Intelligence Research Resource (NAIRR) Pilot and the Delta advanced computing and data resource which is supported by the National Science Foundation (award NSF-OAC 2005572) and the State of Illinois. Delta is a joint effort of the University of Illinois Urbana-Champaign and its National Center for Supercomputing Applications.



\bibliography{ref}
\bibliographystyle{plain}

\newpage
\appendix

\section{More Experimental Details and Results}

All experiments in our paper are run on two High Performance Research Computing platforms. One contains 117 GPU nodes, each with two A100 40GB GPUs. Another contains 200 GPU nodes, each with four A100 40GB GPUs or four A40 48GB GPUs.

We present the statistics for BDD100K Dataset in our experiments in Table~\ref{tab:bdd100k}.

\begin{table}[htbp]
  \centering
  \caption{Datasets Statistics for BDD100K Dataset}
    \begin{tabular}{lccc}
    \multicolumn{1}{c}{Weather} & Training & Validation & Testing \\
    \midrule
    Clear & \textcolor[rgb]{ .231,  .231,  .231}{29865} & 7479  & 5346 \\
    Snowy & 4445  & 1104  & 769 \\
    Rainy & 4119  & 951   & 738 \\
    Partly Cloudy & 3992  & 959   & 738 \\
    Overcast & 7043  & 1727  & 1239 \\
    Foggy & 57    & 43    & 43 \\
    \midrule
    \multicolumn{1}{c}{Scene} & Training & Validation & Testing \\
    \midrule
    Hightway & 13952 & 3427  & 2499 \\
    Residential area & 6458  & 1616  & 1253 \\
    City street & 34862 & 8654  & 6112 \\
    Parking lot & 297   & 80    & 49 \\
    Tunnel & 62    & 47    & 47 \\
    \end{tabular}%
  \label{tab:bdd100k}%
\end{table}%


\subsection{Details about Baselines}
\label{sec:baselines}

\textbf{FLYP.} In the original FLYP paper~\cite{goyal2023finetune}, the author 
presents extensive experiments demonstrating the superiority of employing the contrastive loss used during pre-training instead of the typical cross-entropy for finetuning image-text models for zero-shot vision classification. As the local contrastive loss, defined over the mini-batch samples, utilized in their paper requires a very large mini-batch size to converge, we follow \cite{yuan2022provable} to employ a global constrastive loss (GCL) as indicated in Eqn.~\ref{eq:flyp} to address this issue: 
\begin{equation}
\label{eq:flyp}
\begin{aligned}
&\min_{\w}~~~  \frac{1}{n_{all}}\sum\nolimits_{(\x_i, \t_i)\in\cD_{all}} L_{\text{ctr}}(\w; \x_i, \t_i,\cD_{all},\cD_{all})
\end{aligned}
\end{equation}
where $\cD_{all}= \cD \cup \cD_- \cup \cD_1 \cup \dots \cup \cD_m, n_{all} = n_o + 10*n_o + n_1 + \dots + n_m$, $\cD_-$ is the negative data collected form LAION400M as discussed in Appendix\ref{sec:laion}. All available data, including those used in our objective and constraints, are utilized for fine-tuning. The simple text prompts for the labeled BDD100k dataset are the same as those used for our method, i.e., "\textit{the weather is [Weather]}" and "\textit{the scene is a [Scene]}".

\textbf{WCCL.} Weighted Combination of Contrastive Losses(WCCL) is a straightforward baseline that utilizes a weight to combine GCL losses on protected tasks and the target task to balance protected tasks and the target task and achieve model developmental safety.
Specifically, the objective can be formulated as:
\begin{equation}
\label{eq:aclt}
\begin{aligned}
\min_{\w}~~~  \alpha   &\Big(\frac{1}{m}\sum_{k=1}^m  \frac{1}{n_{k}}\sum\nolimits_{(\x_i, \t_i)\in\cD_{k}} L_{\text{ctr}}(\w; \x_i, \t_i,\mathcal T_{ik}^-,\mathcal I_{ik}^-) \Big)  \\
+ & (1-\alpha) \left(\frac{1}{n_{o}}\sum\nolimits_{(\x_i, \t_i)\in\cD_{o}} L_{\text{ctr}}(\w; \x_i, \t_i,\mathcal T_{io}^-,\mathcal I_{io}^-) \right )
\end{aligned}
\end{equation}
where $\mathcal T_{ik}^- = \{\t_j: (\x_j, \t_j) \in \cD_{all} \backslash \cD_{k} \}\cup \{\t_i\}, \mathcal I_{ik}^- = \{\x_j: (\x_j, \t_j) \in \cD_{all}\backslash \cD_{k} \}\cup \{\x_i\}$, $\cD_{all}\backslash \cD_{k}$ denotes all training samples excluding samples from $\cD_k$. Similarly,   $\mathcal T_{io}^- = \{\t_j: (\x_j, \t_j) \in \cD_{all}\backslash \cD_{o} \}\cup \{\t_i\}, \mathcal I_{io}^- = \{\x_j: (\x_j, \t_j) \in \cD_{all}\backslash \cD_{o} \}\cup \{\x_i\}$. Consistent with other methods, the simple text prompts for this baseline are also "\textit{the weather is [Weather]}" and "\textit{the scene is a [Scene]}".

\textbf{GEM.}  GEM~\cite{lopez2017gradient} is a strong continual learning baseline which motivated by a similar idea, utilizing data of previous tasks for constraints. But it doesn't solve the constrained optimization problem directly but project gradients to reduce the increase in the loss of previous tasks. For GEM, we start from pretrained image encoder of the same CLIP model and initialize the linear classification heads $W\in \R^{d \times (m+1) }$ with the representations outputted by the text encoder with input "\textit{the weather is [Weather]}" or "\textit{the scene is a [Scene]}". For each task $k$, cross entropy loss is employed  $\mathcal L_k(\w,W, \cD_k) = \frac{1}{n_k}\sum_{(\x_i, y_i)\sim \cD_k} -\log\frac{\exp( W_k^\top E_1(\w, \x_i)  /\tau_0)}{\sum_{\ell=1}^{m+1}\exp(W_l^\top E_1(\w, \x_i)/\tau_0)}$, where $\tau_0>0$ is a temperature parameter, $W_k, W_l$ denoted the $k_{th}$, $l_{th}$ column vector of $W$ respectively, and $E_1(\w, \x_i)$ is the normalized image representation of $\x_i$. For consistency, $\tau_0$ is fixed to 0.05 as the one used in our method. In each iteration, 10 examples are drawn from each protected task to calculate the corresponding loss gradient vector for each task.

\textbf{RM.} In continual learning literature, adding explicit regularization terms is a widely used approach to balance old and new tasks, exploiting a frozen copy of previously-learned model to help prevent catastrophic forgetting~\cite{rebuffi2017icarl,castro2018end}. Similarly, the Regularization Method(RM) baseline incorporates the constraints from Eqn.~\eqref{eq:CLIP_train} as a regularization term, adding it to the objective function with an associated regularization weight:

\begin{equation}\label{eqn:rm}
\begin{aligned}
 \min_{\w } \frac{1}{n_{o}}\sum\nolimits_{(\x_i, \t_i)\in\cD_{o}} L_{\text{ctr}}(\w; \x_i, \t_i,\mathcal T_{io}^-,\mathcal I_{io}^-)  + \alpha \left(\frac{1}{m}\sum_{k=1}^m  \frac{1}{n_k} \sum_{(\x, y) \in \cD_k}\ell_{ce}(\w, \x, y)\right)
\end{aligned}
\end{equation}

\subsection{Retrieving external data from LIAON400M}\label{sec:laion}
To improve the performance of the CLIP model on the target task, we retrieve related image-text pairs from an external dataset.  Specifically, for each target task, we retrieve task-related image-text pairs from  Laion400M~\cite{schuhmann2021laion} to improve target performance, by going through the dataset and retrieving the image-text pairs with text containing the specific target task names, e.g., 'foggy', 'overcast', 'tunnel', 'dressing room'. 
{Similar approaches have been used  in \cite{liang2024aide,mitchell2018never,chen2013neil}, where  \cite{liang2024aide} used this approach to improve the detection of rare or unseen categories in object detection for autonomous driving systems. However, their study is different from ours in the sense that they do not provide guarantee on the model developmental safety.} 

Moreover, we refine the retrieved datasets. Let's take the task 'tunnel' as an example. For task ‘tunnel’, the retrieved data contained excessive noise, including numerous image-text pairs unrelated to tunnels, but contained 'tunnel' in the text. Therefore, we employed the GPT-4o API to filter the retrieved data with prompt "\textit{Determine whether the following caption mentions a tunnel or related context. First provide reasoning for your answer, and then respond with 'True' if it mentions a tunnel, or 'False' if it does not.}", thereby decreasing the noise of our retrieved data. The statistics of obtained task-related image-text pairs are presented in the Table~\ref{tab:data}. Additionally, for each target class, we randomly sample a set of image-text pairs from LAION400M that is 10 times larger than the positive set as negative data for contrasting.

\begin{table}[htbp]
  \centering
  \caption{Statistics of Data Collected from LIAON400M}
    \begin{tabular}{l|cccc}
    Task  & Foggy & Overcast & Tunnel & Dressing room\\
    \midrule
    Size  & 11415 & 4134  & 23484 & 6786 \\
    \end{tabular}%
  \label{tab:data}%
\end{table}%

\subsection{Visualization of Models' Learning Curves}\label{sec:vislc}

Along with the learning trajectory in the main paper, we present the training and validation curves in Fig.~\ref{fig:trajectory2} to further illustrate the learning process of the algorithm. From the figure, we can see that the DevSafety(acc) fluctuates along the model developmental safety line while $\Delta Acc (Target)$ continues to increase, which shows the model is striving to improve the model's performance while satisfying the MDS requirements.

\begin{figure*}[h!]
\centering
    \includegraphics[width=\linewidth]{./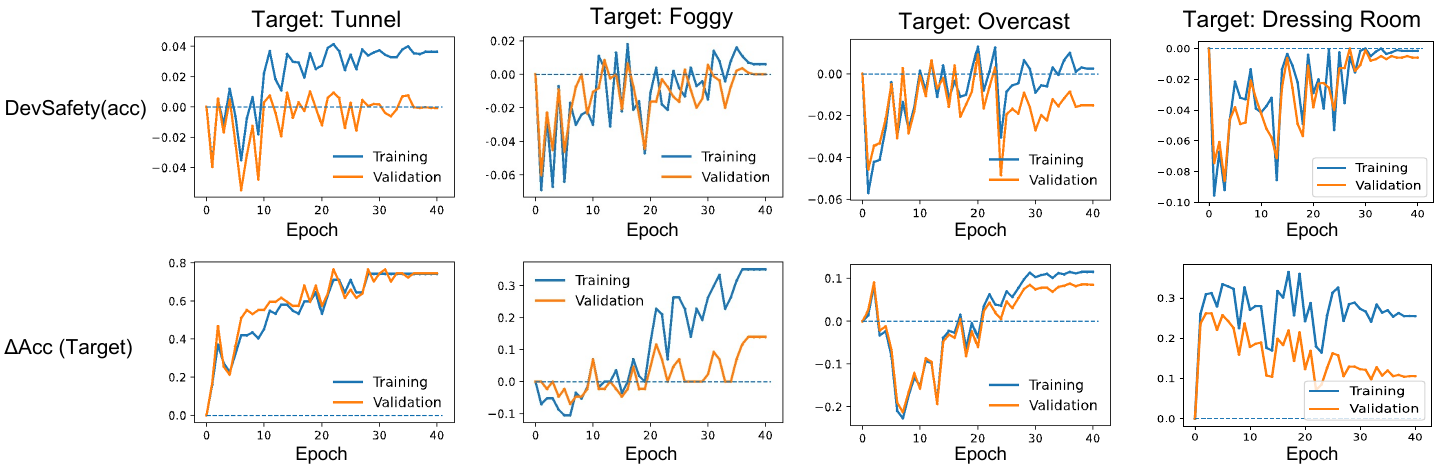}
\centering
\caption{ Models' Training and Validation Curves}
\label{fig:trajectory2}
\end{figure*}

\subsection{Deficiency of Weighting Methods}\label{sec:weight} 

As observed in Figure~\ref{fig:vsbaseline}, the naive weighting approach RM fail to achieve model developmental safety, even though they tradeoff the performance on the target task and protected tasks with weight parameter $\alpha$. To have a close look at why this happens, we show the detailed performance RM when targeting \textit{foggy}  with 4k samples for each protected
task in Table~\ref{tab:rm}. We find that, with a uniform weight for all the protected tasks, the method might preserve previous performance on some of the protected tasks but fail to achieve MDS for all the protected tasks, even with a very high $\alpha$. Moreover, with the weight $\alpha$ getting larger, the performance on the target task drops dramatically while the decrease gap goes smaller, e.g.,  \textit{Clear} tasks for RM. In contrast, our proposed method is able to preserve all the protected tasks' performance and improve the target task, as the mechanism of our algorithm is very different from using the uniform weight. In our method, weights for constraints depend on the loss of those tasks, i.e., the larger the violation, the larger the weight. As shown in Figure~\ref{fig:weight}, the weight for each protected task is adaptively adjusted during learning and once one protected task constraint is satisfied, it will not be penalized (weight becomes zero). This mechanism plays a big role in enabling the model to find feasible solutions to ensure zero-forgetting on all the protected tasks.

To further demonstrate the deficiency of the weighting method, we compare RM with our method on the Place365 dataset, targeting \textit{Dressing room} class and protecting the other 364 tasks in Figure~\ref{fig:places365}. With $\alpha=1,10,100,1000, 10000$, RM causes performance drops in 50, 35, 33, 32, and 35 classes, respectively. Although larger weights reduce the number of classes where performance drops, RM still cannot ensure MDS for all protected tasks and excessively high weights lead to performance decrease on the target task instead of improvement. In contrast, we can see that even with hundreds of protected tasks, our method is still effective in preserving their performance whiling improving the target task.

\begin{table}[tb!]
  \centering
  \caption{Detailed performance comparison between our method and baseline RM on targeting \textit{Foggy} with 4k samples for each protected task. Bold numbers highlight the performance decrease over the base model.}
    \resizebox{0.98\linewidth}{!}{
        \begin{tabular}{l|lllll|l|l}
                    & \multicolumn{5}{c|}{Protected Tasks}  & Target Task & \multicolumn{1}{c}{\multirow{2}[1]{*}{Average}} \\
                     & Clear & Overcast & Snowy & Rainy & Partly cloudy & Foggy &  \\
        \midrule
        \multicolumn{1}{l|}{Base} & 0.8938 & 0.7014 & 0.7503 & 0.7195 & 0.6734 & 0.3953 & 0.6889 \\
        \midrule
        \multicolumn{1}{l|}{Ours} & +0.0115(0.0054) & +0.0831(0.0228) & +0.0120(0.0079) & +0.0230(0.0081) & +0.1047(0.0168) & 0.0326(0.0316) & +0.0430(0.0027) \\
        \midrule
         RM $\alpha=0.1$ & \textbf{-0.0189(0.0039)} & +0.0667(0.0392) & +0.0328(0.0113) & +0.0081(0.0074) & +0.1253(0.0227) & +0.0559(0.0617) & +0.0450(0.0071) \\
            RM $\alpha=1$ & \textbf{-0.0129(0.0055)} & +0.0910(0.0102) & +0.0666(0.0139) & +0.0217(0.0215) & +0.1168(0.0112) & \textbf{-0.0604(0.0634)} & +0.0372(0.0114) \\
              RM $\alpha=10$ & \textbf{-0.0106(0.0085)} & +0.1131(0.0068) & +0.0656(0.0302) & +0.0163(0.0182) & +0.0830(0.0201) & \textbf{-0.1674(0.0174)} & +0.0167(0.0050) \\
        \end{tabular}%
    
    }
  \label{tab:rm}%
\end{table}%

\begin{figure*}[!tb]
\centering
    
    \subfigure[$\alpha=1$]{\includegraphics[width=0.45\linewidth]{./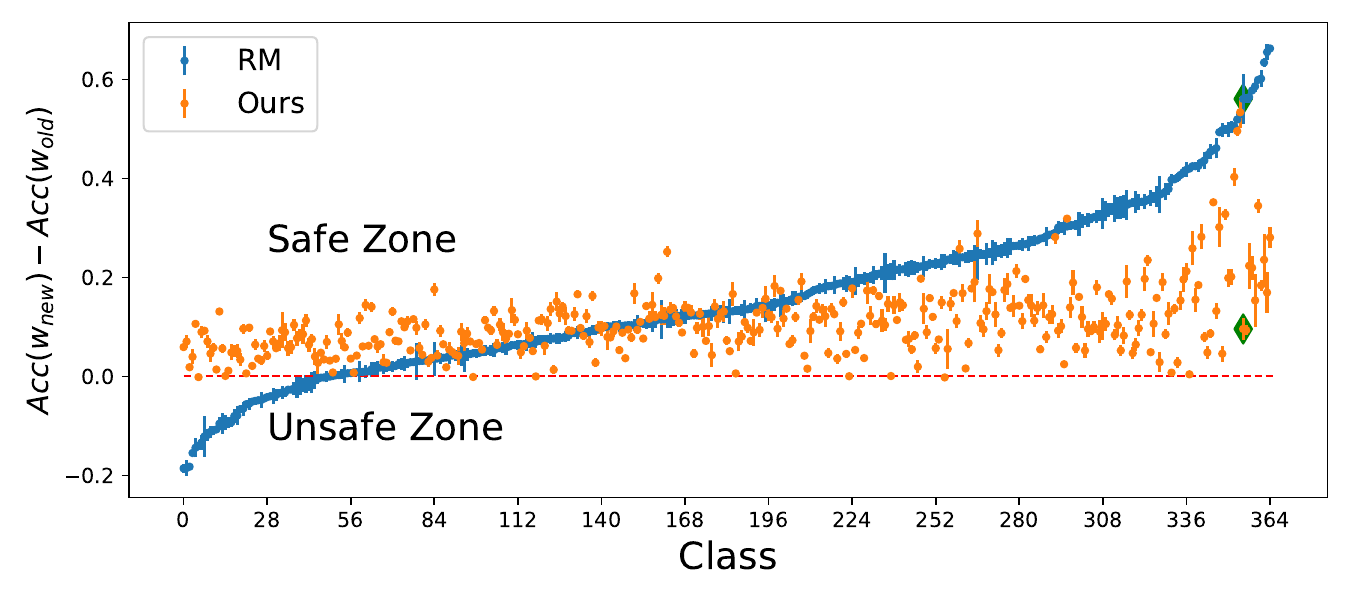} }
     \subfigure[$\alpha=10$]{\includegraphics[width=0.45\linewidth]{./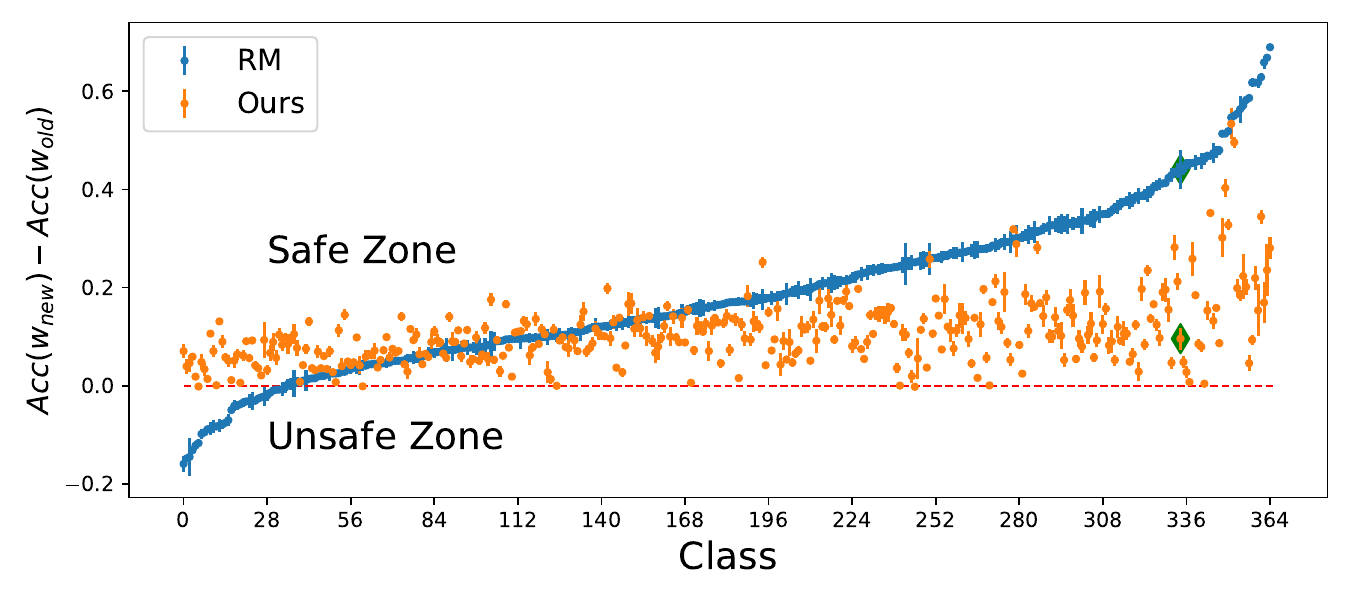} }
    \subfigure[$\alpha=100$]{\includegraphics[width=0.45\linewidth]{./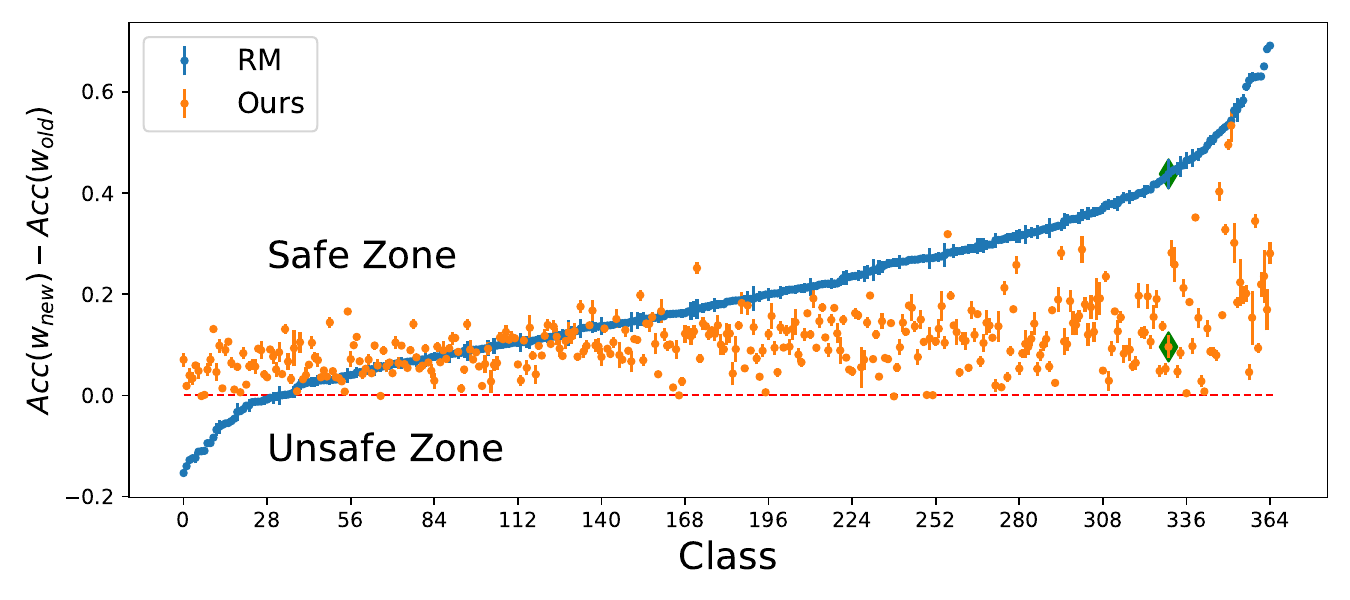} }
    \subfigure[$\alpha=1000$]{\includegraphics[width=0.45\linewidth]{./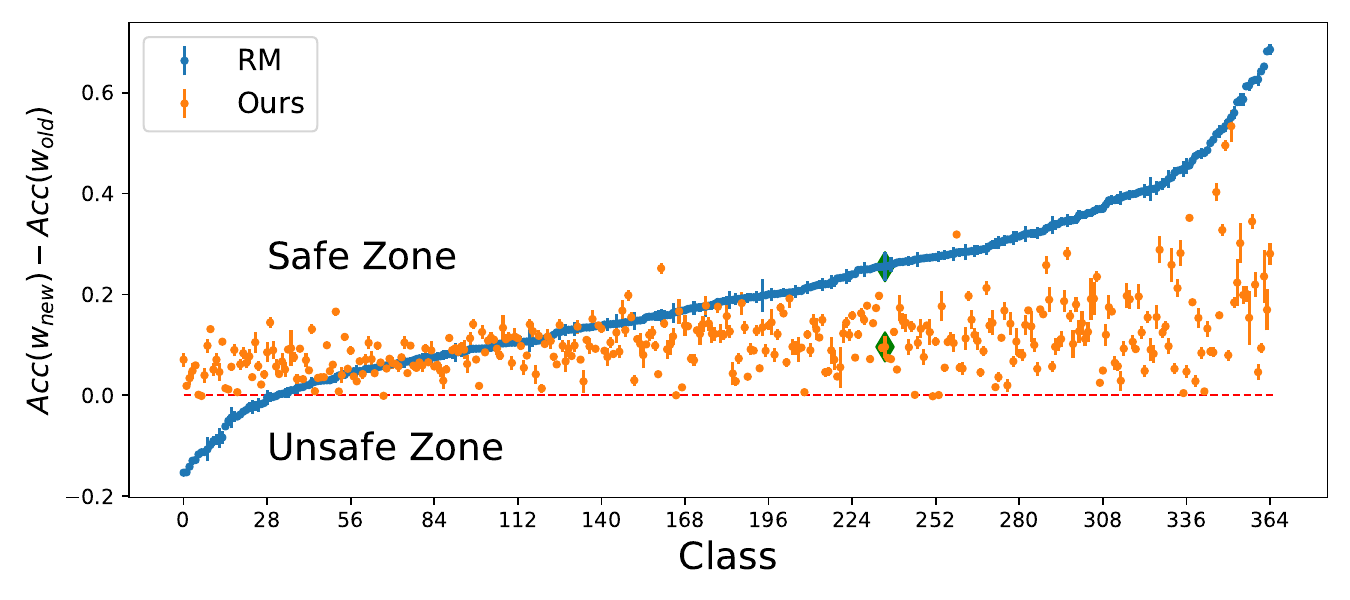}}
    \subfigure[$\alpha=10000$]{\includegraphics[width=0.45\linewidth]{./dressing_room_2k_test_w10000.pdf}}
\centering
\caption{Performance comparison between our method and baseline RM with different regularization weight $\alpha$ when targeting \textit{Dresssing Room} on Places365 Dataset, with 2k samples per constraint. Red line denotes base model's performance, green diamonds denote the target class. 
}
\label{fig:places365}
\end{figure*}

\subsection{Detailed Performance Comparison with Baselines}

In this part, we present a detailed performance comparison with baselines. Specifically, we include the DevSafety(acc) numbers for each method in Table~\ref{tab:d_tunnel},~\ref{tab:d_foggy},~\ref{tab:d_overcast}, which directly show the largest decrease over all the protected tasks. We can see that baselines usually lead to 1-14 percent decrease when targeting Tunnel, 0-13 percent decrease when targeting Foggy, 1-30 percent decrease when targeting Overcast. In contrast, our method demonstrates a smaller performance drop when there is insufficient data for constraints and ensures zero forgetting on the protected task when sufficient constraint data is available.

\begin{table}[h!]
  \centering
  \caption{Detailed Performance Comparison on Targeting Tunnel}
  \resizebox{0.98\linewidth}{!}{
    \begin{tabular}{r|l|cccc}
    \multicolumn{1}{l|}{Method} & Measures & 100   & 1k    & 2k    & 4k \\
    \midrule
    \multicolumn{1}{l|}{Base} & Retention Ratio//DevSafety(acc) & 100\%//0.00(0.0000) & 100\%//0.00(0.0000) & 100\%//0.00(0.0000) & 100\%//0.00(0.0000) \\
          & Target Tunnel & 0.1064(0.0000) & 0.1064(0.0000) & 0.1064(0.0000) & 0.1064(0.0000) \\
    \midrule
    \multicolumn{1}{l|}{FLYP} & Retention Ratio//DevSafety(acc) & 0.00\%//-0.0398(0.0067 & 0.00\%//-0.0660(0.0126) & 0.00\%//-0.0647(0.0123) & 0.00\%//-0.0774(0.0069) \\
          & Target Tunnel & 0.9361(0.0330) & 0.9702(0.0318) & 0.9915(0.0170) & 0.9659(0.0170) \\
    \midrule
    \multicolumn{1}{l|}{WCCL} & Retention Ratio//DevSafety(acc) & 0.00\%//-0.0836(0.0164) & 0.00\%//-0.0756(0.0090) & 0.00\%//-0.0673(0.0103) & 0.00\%//-0.0893(0.0089) \\
          & Target Tunnel & 0.9957(0.0085) & 0.6000(0.1002) & 0.6553(0.0282) & 0.6383(0.0485) \\
    \midrule
    \multicolumn{1}{l|}{GEM} & Retention Ratio//DevSafety(acc) & 0.00\%//-0.1019(0.0267) & 0.00\%//-0.1034(0.0153) & 0.00\%//-0.1301(0.0169) & 0.00\%//-0.0873(0.0231) \\
          & Target Tunnel & 0.8255(0.1214) & 0.5915(0.2020) & 0.6085(0.0768) & 0.3915(0.1819) \\
    \midrule
    \multicolumn{1}{l|}{Co$^2$L} & Retention Ratio//DevSafety(acc) & 0.00\%//-0.1407(0.0043) & 0.00\%//-0.1252(0.0061) & 0.00\%//-0.0821(0.0029) & 0.00\%//-0.0479(0.0039) \\
          & Target Tunnel & 0.6808(0.0460) & 0.8936(0.0626) & 0.8936(0.0301) & 0.8723(0.0000) \\
    \midrule
    \multicolumn{1}{l|}{DER} & Retention Ratio//DevSafety(acc) & 0.00\%//-0.0219(0.0083) & 0.00\%//-0.0156(0.0026) & 0.00\%//-0.0155(0.0048) & 0.00\%//-0.0111(0.0029) \\ 
          & Target Tunnel & 0.6894(0.0438) & 0.4340(0.1030) & 0.3957(0.0768) & 0.3234(0.0493) \\
    \midrule
    \multicolumn{1}{l|}{RM} & Retention Ratio//DevSafety(acc) & 0.00\%//-0.1021(0.0022) & 0.00\%//-0.0969(0.0036) & 0.00\%//-0.0955(0.0057) & 0.00\%//-0.0897(0.0068) \\
          & Target Tunnel & 0.9574(0.0233) & 0.8894(0.0340) & 0.8808(0.0170) & 0.8681(0.0085) \\
    \midrule
    \multicolumn{1}{l|}{Ours} & Retention Ratio//DevSafety(acc) & 40.00\%//-0.0050(0.0076) & 60.00\%//-0.0001(0.0043) & 100.00\%//0.0105(0.0053) & 100.00\%//0.0186(0.0058) \\
          & Target Tunnel & 0.9362(0.0699) & 0.8723(0.0233) & 0.9106(0.0159) & 0.8723(0.0233) \\
    \end{tabular}%
    }
  \label{tab:d_tunnel}%
\end{table}%

\begin{table}[h!]
  \centering
  \caption{Detailed Performance Comparison on Targeting Foggy}
    \resizebox{0.98\linewidth}{!}{
    \begin{tabular}{r|l|cccc}
    \multicolumn{1}{l|}{Method} & Measures & 100   & 1k    & 2k    & 4k \\
    \midrule
    \multicolumn{1}{l|}{Base} & Retention Ratio//DevSafety(acc) & 100\%//0.00(0.0000) & 100\%//0.00(0.0000) & 100\%//0.00(0.0000) & 100\%//0.00(0.0000) \\
          & Target Foggy & 0.3953(0.0000) & 0.3953(0.0000) & 0.3953(0.0000) & 0.3953(0.0000) \\
    \midrule
    \multicolumn{1}{l|}{FLYP} & Retention Ratio//DevSafety(acc) & 0.00\%//-0.0590(0.0140) & 20.00\%//-0.0281(0.0167) & 0.00\%//-0.0254(0.0101) & 0.00\%//-0.0201(0.0105) \\
          & Target Foggy & 0.5721(0.0315) & 0.5209(0.0581) & 0.5302(0.0228) & 0.4977(0.0186) \\
    \midrule
    \multicolumn{1}{l|}{WCCL} & Retention Ratio//DevSafety(acc) & 0.00\%//-0.0504(0.0123) & 0.00\%//-0.0259(0.0080) & 20.00\%//-0.0141(0.0111) & 0.00\%//-0.0132(0.0076) \\
          & Target Foggy & 0.3395(0.0865) & 0.2186(0.0186) & 0.2093(0.0208) & 0.2000(0.0114) \\
    \midrule
    \multicolumn{1}{l|}{GEM} & Retention Ratio//DevSafety(acc) & 0.00\%//-0.0695(0.0099) & 0.00\%//-0.0339(0.0053) & 0.00\%//-0.0424(0.0060) & 0.00\%//-0.0424(0.0060) \\
          & Target Foggy & 0.3349(0.0865) & 0.2837(0.0271) & 0.2558(0.0000) & 0.2558(0.0000) \\
    \midrule
    \multicolumn{1}{l|}{Co$^2$L} & Retention Ratio//DevSafety(acc) & 0.00\%//-0.0686(0.0064) & 0.00\%//-0.1217(0.0383) & 0.00\%//-0.1305(0.0183) & 0.00\%//-0.0721(0.0154) \\
          & Target Foggy & 0.7132(0.0109) & 0.6047(0.0380) & 0.6357(0.0110) & 0.6357(0.0290) \\
    \midrule
    \multicolumn{1}{l|}{DER} & Retention Ratio//DevSafety(acc) & 0.00\%//-0.0252(0.0033) & 40.00\%//-0.0029(0.0027) & 40.00\%//-0.0010(0.0010) & 60.00\%//0.0017(0.0038) \\ 
          & Target Foggy & 0.3163(0.0405) & 0.2233(0.0114) & 0.2186(0.0114) & 0.2372(0.0174) \\
    \midrule
    \multicolumn{1}{l|}{RM} & Retention Ratio//DevSafety(acc) & 0.00\%//-0.0418(0.0062) & 0.00\%//-0.0173(0.0054) & 0.00\%//-0.0159(0.0034) & 20.00\%//-0.0124(0.0091) \\
          & Target Foggy & 0.5674(0.0378) & 0.5023(0.0186) & 0.4419(0.0658) & 0.2279(0.0174) \\
    \midrule
    \multicolumn{1}{l|}{Ours} & Retention Ratio//DevSafety(acc) & 0.00\%//-0.0241(0.0082) & 60.00\%//-0.0009(0.0044) & 100.00\%//0.0044(0.0033) & 100.00\%//0.0061(0.0047) \\
          & Target Foggy & 0.5721(0.0406) & 0.4930(0.0174) & 0.4326(0.0186) & 0.4279(0.0316) \\
    \end{tabular}%
    }
  \label{tab:d_foggy}%
\end{table}%

\begin{table}[h!]
  \centering
  \caption{Detailed Performance Comparison on Targeting Overcast}
  \resizebox{0.98\linewidth}{!}{
    \begin{tabular}{r|l|cccc}
    \multicolumn{1}{l|}{Method} & Measures & 100   & 1k    & 2k    & 4k \\
    \midrule
    \multicolumn{1}{l|}{Base} & Retention Ratio//DevSafety(acc) & 100\%//0.00(0.0000) & 100\%//0.00(0.0000) & 100\%//0.00(0.0000) & 100\%//0.00(0.0000) \\
          & Target Overcast & 0.7361(0.0000) & 0.7361(0.0000) & 0.7361(0.0000) & 0.7361(0.0000) \\
    \midrule
    \multicolumn{1}{l|}{FLYP} & Retention Ratio//DevSafety(acc) & 0.00\%//-0.0749(0.0049) & 0.00\%//-0.0449(0.0140) & 0.00\%//-0.0434(0.0095) & 0.00\%//-0.0314(0.0113) \\
          & Target Overcast & 0.9143(0.0111) & 0.8559(0.0241) & 0.8412(0.0294) & 0.8247(0.0255) \\
    \midrule
    \multicolumn{1}{l|}{WCCL} & Retention Ratio//DevSafety(acc) & 0.00\%//-0.1192(0.0294) & 0.00\%//-0.0716(0.0053) & 0.00\%//-0.0424(0.0091) & 0.00\%//-0.0414(0.0102) \\
          & Target Overcast & 0.9315(0.0112) & 0.9296(0.0092) & 0.9207(0.0022) & 0.9172(0.0064) \\
    \midrule
    \multicolumn{1}{l|}{GEM} & Retention Ratio//DevSafety(acc) & 0.00\%//-0.0677(0.0042) & 0.00\%//-0.0711(0.0050) & 0.00\%//-0.0807(0.0128) & 0.00\%//-0.0634(0.0042) \\
          & Target Overcast & 0.9282(0.0051) & 0.9233(0.0037) & 0.9149(0.0088) & 0.9165(0.0049) \\
    \midrule
    \multicolumn{1}{l|}{Co$^2$L} & Retention Ratio//DevSafety(acc) & 0.00\%//-0.0138(0.0099) & 0.00\%//-0.0072(0.0032) & 0.00\%//-0.0095(0.0043) & 0.00\%//-0.0137(0.0052) \\
          & Target Overcast & 0.5916(0.0417) & 0.8369(0.0049) & 0.8396(0.0055) & 0.8507(0.0172) \\
    \midrule
    \multicolumn{1}{l|}{DER} & Retention Ratio//DevSafety(acc) & 0.00\%//-0.0435(0.0037) & 0.00\%//-0.0241(0.0064) & 0.00\%//-0.0182(0.0032) & 0.00\%//-0.0166(0.0077) \\ 
          & Target Overcast  & 0.8731(0.0066) & 0.8604(0.0075) & 0.8602(0.0023) & 0.8651(0.0027) \\
    \midrule
    \multicolumn{1}{l|}{RM} & Retention Ratio//DevSafety(acc) & 0.00\%//-0.2932(0.0365) & 0.00\%//-0.3016(0.0228) & 0.00\%//-0.2444(0.0120) & 0.00\%//-0.2634(0.0105) \\
          & Target Overcast & 0.9787(0.0050) & 0.9730(0.0028) & 0.9588(0.0041) & 0.9647(0.0023) \\
    \midrule
    \multicolumn{1}{l|}{Ours} & Retention Ratio//DevSafety(acc) & 0.00\%//-0.0655(0.0249) & 20.00\%//-0.0043(0.0037) & 60.00\%//0.0012(0.0029) & 100.00\%//0.0046(0.0016) \\
          & Target Overcast & 0.8789(0.0464) & 0.7827(0.0225) & 0.7562(0.0167) & 0.7525(0.0366) \\
    \end{tabular}%
    }
  \label{tab:d_overcast}%
\end{table}%

\clearpage

\section{Proofs}
\subsection{Proof of Lemma~\ref{thm:safety_generalization}}\label{sec:safety_generalization}
\begin{proof}
Consider task $k$. Recall that $\cD_k$ contains $n_k$ data points.
According to Theorem 3.2 in \cite{boucheron2005theory}, we have with probability at least $1-\delta/m$, for all $\w$,
$$
\left|\mathcal L_k(\w, \mathfrak{D}_k) - \mathcal L_k(\w, \cD_k)\right|\leq 
2R_{n_k}(\mathcal H) +\sqrt{\frac{\ln(2m/\delta)}{2n_k}}
\leq\frac{2C}{n_k^\alpha} +\sqrt{\frac{\ln(2m/\delta)}{2n_k}},
$$
where the second inequality is by the assumption on $R_n(\mathcal H)$.
Combining the inequalities above with  $\w=\w_{\text{new}}$ and $\w=\w_{\text{old}}$, we have with probability at least $1-\delta/m$ 
    \begin{align*}
        \mathcal L_k(\w_{\text{new}}, \mathfrak{D}_k) -  \mathcal L_k(\w_{\text{old}}, \mathfrak{D}_k)\leq \mathcal L_k(\w_{\text{new}}, \cD_k)-  \mathcal L_k(\w_{\text{old}}, \cD_k)  +\frac{4C}{n_k^\alpha} +2\sqrt{\frac{\ln(2m/\delta)}{2n_k}}.
    \end{align*}
Applying the union bound with the events above for $k=1,\dots,m$ leads to the conclusion of this lemma. 
\end{proof}

\subsection{Proof of Lemma~\ref{thm:regularity}}
\label{sec:regularity}
\begin{proof}
Recall that $\w$ has two component $\u$ and $W$. 
The gradient of $h_k(\w)$ with respect to $W$ and $\u$ are denoted by 
$\nabla_{W}h_k(\w)$ and $\nabla_{\u}h_k(\w)$, respectively. Hence,
$$
\nabla h_k(\w)=\left(\nabla_{\u}h_k(\w), \nabla_{W}h_k(\w)\right)
$$
for $k=1,\dots,m$. Similarly, after adding the task-dependent heads, $\hat\w$ has four component $\u$, $W$, $\mathbf{U}$ and $\mathbf{V}$. The gradients  $\nabla_{\u}\hat h_k(\hat\w)$, $\nabla_{W}\hat h_k(\hat\w)$ $\nabla_{\mathbf{U}}\hat h_k(\hat\w)$ and $\nabla_{\mathbf{V}}\hat h_k(\hat\w)$ are defined correspondingly, and
$$
\nabla \hat h_k(\hat\w)=\left(\nabla_{\u}\hat h_k(\hat\w), \nabla_{W}\hat h_k(\hat\w), \nabla_{\mathbf{U}}\hat h_k(\hat\w), \nabla_{\mathbf{V}}\hat h_k(\hat\w)\right).
$$
Recall that
$$
\hat h_k(\hat\w)=h_k(W+U_kV_k^\top,\u)\text{ for }k=1,\dots,m.
$$
Therefore, 
\small
$$
\nabla_{\u} \hat h_k(\hat \w)=\nabla_{\u} h_k(W+U_kV_k^\top,\u),\quad\quad
\nabla_{W} \hat h_k(\hat \w)=\nabla_{W} h_k(W+U_kV_k^\top,\u)
$$
\normalsize
and
\small
\begin{eqnarray*}
\nabla_\mathbf{U} \hat h_k(\hat\w)
&=&\bigg(
\mathbf{0}, \dots, \mathbf{0}, 
\underbrace{\nabla_{W} h_k(\mathbf{W}+U_kV_k^\top,\u)V_k}_{\text{The } k\text{th  block} },
\mathbf{0}, \dots, \mathbf{0}
\bigg)^\top
\end{eqnarray*}
\normalsize
\small
\begin{eqnarray*}
\nabla_\mathbf{V} \hat h_k(\hat\w)
&=&\bigg(
\mathbf{0}, \dots, \mathbf{0}, 
\underbrace{\nabla_{W} h_k(\mathbf{W}+U_kV_k^\top,\u)^\top U_k}_{\text{The } k\text{th  block} },
\mathbf{0}, \dots, \mathbf{0}
\bigg)^\top,
\end{eqnarray*}
\normalsize
where the sparsity patterns of $\nabla_\mathbf{U} \hat h_k(\hat\w)$ and $\nabla_\mathbf{V} \hat h_k(\hat\w)$ are because $\hat h_k$ does not depend on $U_j$ and $V_j$ with $j\neq k$.

Suppose $U_kV_k^\top=\mathbf{0}$ for all $k$. It holds that $h_k(\w)=\hat h_k(\hat \w)$ and
$$
\nabla h_k(\w)=\left(\nabla_{\u}h_k(\w), \nabla_{W}h_k(\w)\right)=\left(\nabla_\u \hat h_k(\hat\w),\nabla_W \hat h_k(\hat\w)\right).
$$
Consider any $\boldsymbol{\alpha}=(\alpha_1,\dots,\alpha_m)\in\mathbb{R}^m$. We have
\small
\begin{align*}
&\lambda_{\min}\left(\left[\nabla \hat h_1(\hat\w),\dots,\nabla \hat h_m(\hat\w)\right]^\top\left[\nabla \hat h_1(\hat\w),\dots,\nabla \hat h_m(\hat\w)\right]\right)\\
=&\min_{\boldsymbol{\alpha},\text{s.t.}\|\boldsymbol{\alpha}\|=1}\left\|\sum_{k=1}^m\alpha_k\nabla \hat h_k(\hat\w)\right\|^2\\
=&\min_{\boldsymbol{\alpha},\text{s.t.}\|\boldsymbol{\alpha}\|=1}\left(\left\|\sum_{k=1}^m\alpha_k\nabla_\u \hat h_k(\hat\w)\right\|^2+\left\|\sum_{k=1}^m\alpha_k\nabla_W \hat h_k(\hat\w)\right\|^2+\left\|\sum_{k=1}^m\alpha_k\nabla_{\mathbf{U}} \hat h_k(\hat\w)\right\|^2+\left\|\sum_{k=1}^m\alpha_k\nabla_{\mathbf{V}} \hat h_k(\hat\w)\right\|^2\right)\\
=&\min_{\boldsymbol{\alpha},\text{s.t.}\|\boldsymbol{\alpha}\|=1}\left(\left\|\sum_{k=1}^m\alpha_k\nabla h_k(\w)\right\|^2+\sum_{k=1}^m\alpha_k^2\left\|\nabla_{W} h_k(\w)V_k\right\|_F^2
+\sum_{k=1}^m\alpha_k^2\left\|\nabla_{W} h_k(\w)^\top U_k\right\|_F^2\right)\\
\geq&\lambda_{\min}\left(\left[\nabla  h_1( \w),\dots,\nabla h_m( \w)\right]^\top\left[\nabla   h_1( \w),\dots,\nabla h_m( \w)\right]\right)\\
&+\min_k\left\|\nabla_{W} h_k(\w)V_k\right\|_F^2+\min_k\left\|\nabla_{W} h_k(\w)^\top U_k\right\|_F^2,
\end{align*}
\normalsize
where the first two equalities are by definitions and the third equality is because $U_kV_k^\top=\mathbf{0}$ for all $k$.

\end{proof}

\subsection{Proof of Theorem~\ref{theorem}}
\label{proof_thm}
In this section, we present the proof of the Theorem~\ref{theorem}. Recall that the problem is formulated as
\begin{equation}\label{eqn:pprob}
    \min_{\vw}\ F(\vw,\cD):=\frac{1}{n_0}\sum^{n_0}_{i=1}\left(f(g_{1i}(\vw))+ f(g_{2i}(\vw))\right) \quad \text{s.t.} \quad \frac{1}{m}h_k(\vw;\mathcal{D}_k)\leq 0,\ k=1,\cdots,m.
\end{equation}
with $f(\cdot)=\tau \log(\cdot)$. With the quadratic penalty method, the problem is converted to 
\begin{equation}
    \min_{\vw}\ \Phi(\vw) := F(\vw,\cD) + \underbrace{\frac{1}{m}\sum_{k=1}^m \frac{\beta}{2}([h_k(\vw;\mathcal{D}_k)]_+)^2}_{H(\vw)}.
\end{equation}
By Assumptions \ref{assumption1}, we can get $f$ is $L_f$-Lipschitz continuous and $L_{\nabla f}$-smooth with $L_f=\frac{\tau}{c_g}$ and $L_{\nabla f}=\frac{\tau}{c_g^2}$. By noticing that $\ell_{ce}$ is a cross entropy loss, we find that $|h_k(\cdot)|$ can be bounded by a constant $C_h$ with $C_h=2$. Then, we can get $\Phi(\vw)$ is $L_{\beta}$-smooth with $L_{\beta}:= L_F + \beta L_H$ where $L_F :=2( L_{\nabla g}L_f+L_{\nabla f}L_g^2)$ and $L_H := L_{\nabla h} C_h + L_h^2$. We also define $\Tilde{C}_{\nabla g}:=\sigma_{\nabla g}+L_g$ and $\Tilde{C}_{\nabla h}:=\sigma_{\nabla h}+L_h$. To facilitate our discussion, we let
\begin{equation*}
    \begin{split}
        v_1^{t} &= (1-\theta)v_1^{t-1}+\theta G_1^t,\\
        v_2^{t} &= (1-\theta)v_2^{t-1}+\theta G_2^t,\\
        v^{t} & = v_1^{t} + v_2^{t}.
    \end{split}
\end{equation*}
To prove our main theorem, we need following lemmas.
\begin{lemma}
    If $\theta \leq \frac{1}{3}$, the gradient variance $\Delta_1^{t}:= \|v_1^{t}-\nabla F(\vw^t,\cD)\|^2$ can be bounded as
    \begin{equation}\label{eqn:lemma1}
    \begin{split}
        \mathbb{E}[\Delta_1^{t+1}]\leq & (1-\theta)\mathbb{E}[\Delta_1^{t}]+\frac{2L_{F}^2}{\theta}\E[\|\vw^{t+1}-\vw^t\|^2]+5\theta L_f^2\Tilde{C}_{\nabla g}^2\mathbb{E}[\Xi^{t+1}_1+\Xi^{t+1}_2]\\
        &+3L_f^2\Tilde{C}_{\nabla g}^2\mathbb{E}\left[\frac{1}{n_0}\sum_{i\in \mathcal{B}^{t+1}}\left(\left\|u_{1i}^{t+1}-u_{1i}^t\right\|^2+\left\|u_{2i}^{t+1}-u_{2i}^t\right\|^2\right)\right]+\frac{2\theta^2 L_f^2(\sigma_{\nabla g}^2+L_g^2)}{\text{min}\{|\cB|,|\cB_{1i}|,|\cB_{2i}|\}},
        \end{split}
    \end{equation}
with $\Xi^{t+1}_1:= \frac{1}{n_0}\|\textbf{u}_1^{t+1}-\textbf{g}_1(\vw^{t+1})\|^2=\frac{1}{n_0}\sum_{i=1}^{n_0}\| u_{1i}^{t+1}-g_{1i}(\vw^{t+1})\|^2$ and $\Xi^{t+1}_2:= \frac{1}{n_0}\|\textbf{u}_2^{t+1}-\textbf{g}_2(\vw^{t+1})\|^2=\frac{1}{n_0}\sum_{i=1}^{n_0}\| u_{2i}^{t+1}-g_{2i}(\vw^{t+1})\|^2$.
\end{lemma}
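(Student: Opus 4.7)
The plan is to bound the moving-average variance by unrolling the recursion, isolating the smoothness gap in $\nabla F$ across one iterate step, and then splitting the remaining error $G_1^{t+1}-\nabla F(\vw^{t+1})$ into a sampling fluctuation plus a compositional bias induced by replacing $g_{1i},g_{2i}$ with the tracking variables $u_{1i},u_{2i}$.

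First I would write
$$v_1^{t+1}-\nabla F(\vw^{t+1},\cD)=(1-\theta)\bigl(v_1^{t}-\nabla F(\vw^{t},\cD)\bigr)+(1-\theta)\bigl(\nabla F(\vw^{t},\cD)-\nabla F(\vw^{t+1},\cD)\bigr)+\theta\bigl(G_1^{t+1}-\nabla F(\vw^{t+1},\cD)\bigr),$$
apply the three-term Young inequality $\|a+b+c\|^2\le(1+\alpha)\|a\|^2+(2+2/\alpha)(\|b\|^2+\|c\|^2)$ with $\alpha=\theta/(1-\theta)$, and use $(1+\alpha)(1-\theta)^2=1-\theta$ together with the $L_F$-smoothness of $F$ (which follows from Assumption~\ref{assumption1} and $L_f=\tau/c_g$, $L_{\nabla f}=\tau/c_g^2$). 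This immediately produces the $(1-\theta)\mathbb{E}[\Delta_1^{t}]$ and $\tfrac{2L_F^2}{\theta}\E[\|\vw^{t+1}-\vw^t\|^2]$ terms, reducing the problem to bounding $2\theta\,\mathbb{E}\|G_1^{t+1}-\nabla F(\vw^{t+1},\cD)\|^2$ by the remaining three stated contributions.

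Next I would perform a bias--variance split on $G_1^{t+1}-\nabla F(\vw^{t+1},\cD)$. Inserting the intermediate quantity $\bar G_1^{t+1}:=\tfrac{\tau}{n_0}\sum_{i=1}^{n_0}\bigl(\nabla g_{1i}(\vw^{t+1})/u_{1i}^{t+1}+\nabla g_{2i}(\vw^{t+1})/u_{2i}^{t+1}\bigr)$ separates a pure minibatch-sampling fluctuation $G_1^{t+1}-\bar G_1^{t+1}$, whose second moment is controlled through Assumption~\ref{assumption3} combined with $1/u\le1/c_g$ and the standard minibatch variance inequality, from the compositional bias $\bar G_1^{t+1}-\nabla F(\vw^{t+1},\cD)$, which, after invoking $|1/u-1/g|\le|u-g|/c_g^{2}$ and $\|\nabla g\|\le L_g$, is bounded by a constant multiple of $\Xi_1^{t+1}+\Xi_2^{t+1}$. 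Up to the $2\theta$ prefactor, these produce the sampling-variance term $\tfrac{2\theta^2 L_f^2(\sigma_{\nabla g}^2+L_g^2)}{\min\{|\cB|,|\cB_{1i}|,|\cB_{2i}|\}}$ and a portion of the $5\theta L_f^2\tilde C_{\nabla g}^2\mathbb{E}[\Xi_1^{t+1}+\Xi_2^{t+1}]$ term.

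The subtle point, and the main obstacle, is the correlation between $u_{1i}^{t+1}$ and $\nabla\hat g_{1i}(\vw^{t+1})$ for $i\in\cB^{t+1}$: both depend on the same sample through $u_{1i}^{t+1}=(1-\gamma_1)u_{1i}^{t}+\gamma_1\hat g_{1i}(\vw^{t+1})$, so the two factors in $G_1^{t+1}$ are not conditionally independent given $\mathcal{F}_t$. To decouple them I would insert the $\mathcal{F}_t$-measurable $u_{1i}^{t}$ via $\tfrac{1}{u_{1i}^{t+1}}=\tfrac{1}{u_{1i}^{t}}+\bigl(\tfrac{1}{u_{1i}^{t+1}}-\tfrac{1}{u_{1i}^{t}}\bigr)$, bound the parenthesis by $|u_{1i}^{t+1}-u_{1i}^{t}|/c_g^{2}$, and apply Cauchy--Schwarz on the resulting cross terms. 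This decoupling is precisely what produces the summand $3L_f^2\tilde C_{\nabla g}^2\,\E\bigl[\tfrac{1}{n_0}\sum_{i\in\cB^{t+1}}(\|u_{1i}^{t+1}-u_{1i}^{t}\|^2+\|u_{2i}^{t+1}-u_{2i}^{t}\|^2)\bigr]$, while the leftover contribution fills out the $5\theta$-coefficient on $\Xi_1^{t+1}+\Xi_2^{t+1}$; the hypothesis $\theta\le1/3$ is used at this last step to merge a handful of $(1-\theta)^{-1}$ factors into the absolute constants $5$ and $3$. Combining all pieces and taking total expectation yields the claimed inequality.
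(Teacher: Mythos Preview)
Your outline has a genuine gap in the variance term. Once you apply the three-term Young inequality $\|a+b+c\|^2\le(1+\alpha)\|a\|^2+(2+2/\alpha)(\|b\|^2+\|c\|^2)$ with $\alpha=\theta/(1-\theta)$, the coefficient on $\|c\|^2=\theta^2\|G_1^{t+1}-\nabla F(\vw^{t+1})\|^2$ is $2/\theta$, so you are left with exactly the $2\theta\,\E\|G_1^{t+1}-\nabla F(\vw^{t+1})\|^2$ you wrote. But the pure minibatch fluctuation inside this quantity has second moment of order $L_f^2(\sigma_{\nabla g}^2+L_g^2)/\min\{|\cB|,|\cB_{1i}|,|\cB_{2i}|\}$ with \emph{no} $\theta$ in it, so after the $2\theta$ prefactor you obtain an $O(\theta)$ contribution, not the stated $O(\theta^2)$ term $\tfrac{2\theta^2 L_f^2(\sigma_{\nabla g}^2+L_g^2)}{\min\{\cdots\}}$. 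This matters downstream: in the proof of the main theorem this term is multiplied by $4\eta/\theta$ and then by $24/\eta$, so an $O(\theta)$ bound here becomes an $O(1)$ constant that cannot be driven below $\epsilon^2$ without forcing the batch sizes to scale like $\epsilon^{-2}$.

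The paper avoids this by a \emph{four}-term decomposition rather than three. It writes $v_1^{t+1}-\nabla F(\vw^{t+1})=\text{\textcircled{1}}+\text{\textcircled{2}}+\text{\textcircled{3}}+\text{\textcircled{4}}$, where $\text{\textcircled{3}}$ is the compositional bias $\tfrac{\theta}{|\cB|}\sum_{i\in\cB^{t+1}}\nabla\hat g_{1i}(\vw^{t+1})\bigl(\nabla f(u_{1i}^{t})-\nabla f(g_{1i}(\vw^{t+1}))\bigr)+(\text{analogous }g_2\text{ part})$ and $\text{\textcircled{4}}$ is the centered sampling error $\tfrac{\theta}{|\cB|}\sum_{i\in\cB^{t+1}}\nabla\hat g_{1i}(\vw^{t+1})\nabla f(g_{1i}(\vw^{t+1}))+(\cdots)-\nabla F(\vw^{t+1})$. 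Because $\text{\textcircled{4}}$ has zero conditional mean while $\text{\textcircled{1}}$ and $\text{\textcircled{2}}$ are measurable with respect to the past, the cross terms $\E_t\langle\text{\textcircled{1}},\text{\textcircled{4}}\rangle$ and $\E_t\langle\text{\textcircled{2}},\text{\textcircled{4}}\rangle$ vanish, so $\text{\textcircled{4}}$ survives with an $O(1)$ coefficient (namely $2$), and since $\text{\textcircled{4}}$ already carries a factor $\theta$, $\|\text{\textcircled{4}}\|^2$ contributes the desired $\theta^2$. Your three-term Young inequality throws away precisely this orthogonality. Note also that the paper uses $u_{1i}^{t}$ in $\text{\textcircled{3}}$ from the start (it is what actually appears in $G_1$), and only afterwards inserts $u_{1i}^{t+1}$ via a Young inequality with parameter $\delta=3\theta/2$ to connect to $\Xi^{t+1}$; this is where the constants $5\theta$ and $3$ arise under the hypothesis $\theta\le 1/3$. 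Your decoupling step is essentially the same maneuver in reverse, but it cannot repair the lost $\theta$ factor because the damage was done earlier, at the level of the outer Young inequality.
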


\begin{proof}
    \begin{equation*}
        \begin{split}
            \Delta_1^{t+1}&=\|v_1^{t+1}-\nabla F(\vw^{t+1})\|^2=\|(1-\theta)v_1^{t}+\theta G_1^{t}- \nabla F(\vw^{t+1})\|^2\\
            & = \left\|\circled{1}+\circled{2}+\circled{3}+\circled{4}\right\|^2,
        \end{split}
    \end{equation*}
where $\circled{1}$, $\circled{2}$, $\circled{3}$, $\circled{4}$ are defined as
\small
\begin{equation*}
    \begin{split}
        \circled{1}&=(1-\theta)(v_1^{t}-\nabla F(\vw^{t})), \quad \circled{2}=(1-\theta)(\nabla F(\vw^{t})-\nabla F(\vw^{t+1})),\\
        \circled{3}&=\frac{\theta}{|\cB|}\sum_{i\in\mathcal{B}^{t+1}}\nabla \hat{g}_{1i}(\vw^{t+1})\left(\nabla f(u_{1i}^{t})-\nabla f(g_{1i}(\vw^{t+1}))\right)+\nabla \hat{g}_{2i}(\vw^{t+1})\left(\nabla f(u_{2i}^{t})-\nabla f(g_{2i}(\vw^{t+1}))\right),\\
        \circled{4}&=\frac{\theta}{|\cB|}\sum_{i\in\mathcal{B}^{t+1}}\nabla \hat{g}_{1i}(\vw^{t+1})\nabla f(g_{1i}(\vw^{t+1}))+\nabla \hat{g}_{2i}(\vw^{t+1})\nabla f(g_{2i}(\vw^{t+1}))-\nabla F(\vw^{t+1}).
    \end{split}
\end{equation*}
\normalsize
Note that $\E_t[\langle\circled{1},\circled{4}\rangle]=\E_t[\langle\circled{2},\circled{4}\rangle]=0$. Then, by the Young's inequality, we can get
\small
\begin{equation*}
    \begin{split} &\E_t\left[\left\|\circled{1}+\circled{2}+\circled{3}+\circled{4}\right\|^2\right]\\
    = & \left\|\circled{1}\right\|^2+\left\|\circled{2}\right\|^2+\E_t\left\|\circled{3}\right\|^2+\E_t\left\|\circled{4}\right\|^2+2\left\langle \circled{1},\circled{2}\right\rangle+2\E_t[\left\langle \circled{1},\circled{3}\right\rangle]+2\E_t[\left\langle\circled{2},\circled{3}\right\rangle]+2\E_t[\left\langle \circled{3},\circled{4}\right\rangle]\\
    \leq & (1+\theta)\left\|\circled{1}\right\|^2+2\left(1+\frac{1}{\theta}\right)\left\|\circled{2}\right\|^2+\frac{2+3\theta}{\theta}\E_t\left\|\circled{3}\right\|^2+2\E_t\left\|\circled{4}\right\|^2.
    \end{split}
\end{equation*}
\normalsize
We can also get
\begin{small}

\begin{equation*}
    \begin{split}
    &(1+\theta)\|\circled{1}\|^2= (1+\theta)(1-\theta)^2\|v_1^{t}-\nabla F(\vw^t)\|^2 \leq (1-\theta)\|v_1^{t}-\nabla F(\vw^t)\|^2\\
    &2\left(1+\frac{1}{\theta}\right)\left\|\circled{2}\right\|^2 = 2\left(1+\frac{1}{\theta}\right)(1-\theta)^2\|\nabla F(\vw^t)-\nabla F(\vw^{t+1})\|^2\leq \frac{2L_{F}^2}{\theta}\|\vw^{t+1}-\vw^t\|^2
    \end{split}
\end{equation*}

\begin{equation*}
    \begin{split}
         &\frac{2+3\theta}{\theta}\mathbb{E}_t\left[\|\circled{3}\|^2\right]
         \\&=\frac{2+3\theta}{\theta}\frac{\theta^2}{|\cB|}\mathbb{E}_t\sum_{i\in\mathcal{B}^{t+1}}\left(\left\|\nabla \hat{g}_{1i}(\vw^{t+1})\right\|^2\left\|\nabla f(u_{1i}^{t})-\nabla f(g_{1i}(\vw^{t+1})\right\|^2+\left\|\nabla \hat{g}_{2i}(\vw^{t+1})\right\|^2\left\|\nabla f(u_{2i}^{t})-\nabla f(g_{2i}(\vw^{t+1})\right\|^2\right)
    \end{split}
\end{equation*}
\end{small}
We first bound the first term
\begin{small}
\begin{equation*}
    \begin{split}
    &\frac{(2+3\theta)\theta}{|\cB|}\mathbb{E}_t\sum_{i\in\mathcal{B}^{t+1}}\left\|\nabla \hat{g}_{1i}(\vw^{t+1})\right\|^2\left\|\nabla f(u_{1i}^{t})-\nabla f(g_{1i}(\vw^{t+1})\right\|^2\\
    &\leq \frac{(2+3\theta)\theta L_f^2}{|\cB|}\mathbb{E}_t\left[\sum_{i\in\mathcal{B}^{k+1}}\left\|\nabla \hat{g}_{1i}(\vw^{t+1})\right\|^2\left\|u_{1i}^{t}-g_{1i}(\vw^{t+1})\right\|^2\right]\\
    &=(2+3\theta)\theta L_f^2\mathbb{E}_t\left[\frac{1}{|\cB|}\sum_{i\in\mathcal{B}^{t+1}}\mathbb{E}_t\left[\left\|\nabla \hat{g}_{1i}(\vw^{k+1})\right\|^2|i\in\mathcal{B}^{t+1}\right]\left\|u_{1i}^{t}-g_{1i}(\vw^{k+1})\right\|^2\right]\\
    &\leq (2+3\theta)\theta L_f^2\Tilde{C}_{\nabla g}^2\mathbb{E}_t\left[\frac{1}{|\cB|}\sum_{i\in\mathcal{B}^{t+1}}\left\|u_{1i}^{t}-g_{1i}(\vw^{t+1})\right\|^2\right]\\
    &\leq (2+3\theta)\theta L_f^2\Tilde{C}_{\nabla g}^2\left((1+\delta)\mathbb{E}_t\left[\frac{1}{n_0}\sum_{i=1}^{n_0}\left\|u_{1i}^{t+1}-g_{1i}(\vw^{t+1})\right\|^2\right]+(1+1/\delta)\mathbb{E}_t\left[\frac{1}{n_0}\sum_{i=1}^{n_0}\left\|u_{1i}^{t+1}-u_{1i}^t\right\|^2\right]\right)\\
    &= (2+3\theta)\theta  L_f^2\Tilde{C}_{\nabla g}^2\left((1+\delta)\mathbb{E}_t\left[\frac{1}{n_0}\sum_{i=1}^{n_0}\left\|u_{1i}^{t+1}-g_i(\vw^{t+1})\right\|^2\right]+(1+1/\delta)\mathbb{E}_t\left[\frac{1}{n_0}\sum_{i\in \mathcal{B}^{t+1}}\left\|u_{1i}^{t+1}-u_{1i}^t\right\|^2\right]\right)
    \end{split}
\end{equation*}
\end{small}
If $\theta\leq \frac{1}{3}$ and $\delta=\frac{3\theta}{2}$, we have $(2+3\theta)\theta(1+\delta)\leq 5\theta$ and $(2+3\theta)\theta(1+1/\delta)\leq 3$. And similarly, we can get the bound for the second term. Then, by combining them, we can get
\begin{equation*}
    \frac{2+3\theta}{\theta}\mathbb{E}\left[\|\circled{3}\|^2\right]\leq 5\theta L_f^2\Tilde{C}_{\nabla g}^2\mathbb{E}[\Xi_1^{t+1}+\Xi_2^{t+1}]+3L_f^2\Tilde{C}_{\nabla g}^2\mathbb{E}\left[\frac{1}{n_0}\sum_{i\in \mathcal{B}^{t+1}}\left(\left\|u_{1i}^{t+1}-u_{1i}^t\right\|^2+\left\|u_{2i}^{t+1}-u_{2i}^t\right\|^2\right)\right].
\end{equation*}
\begin{equation*}
    \begin{split}
       & \mathbb{E}_t\left[\|\circled{4}\|^2\right]\\
       =&\theta^2\mathbb{E}_t\left[ \left\| \frac{1}{|\cB|}\sum_{i\in\mathcal{B}^{t+1}}\nabla \hat{g}_{1i}(\vw^{k+1})\nabla f(g_{1i}(\vw^{k+1}))-\frac{1}{n_0}\sum_{i=1}^{n_0}\nabla g_{1i}(\vw^{t+1})\nabla f(g_{1i}(\vw^{t+1}))\right\|^2 \right]\\
        & + \theta^2\mathbb{E}_t\left[ \left\| \frac{1}{|\cB|}\sum_{i\in\mathcal{B}^{t+1}}\nabla \hat{g}_{2i}(\vw^{k+1})\nabla f(g_{2i}(\vw^{t+1}))-\frac{1}{n_0}\sum_{i=1}^{n_0}\nabla g_{2i}(\vw^{t+1})\nabla f(g_{2i}(\vw^{t+1}))\right\|^2 \right]\\
        = & \theta^2\mathbb{E}_t\left[ \left\| \frac{1}{|\cB|}\sum_{i\in\mathcal{B}^{t+1}}\nabla \hat{g}_{1i}(\vw^{t+1})\nabla f(g_{1i}(\vw^{t+1}))-\frac{1}{|\cB|}\sum_{i\in\mathcal{B}^{t+1}}\nabla g_{1i}(\vw^{t+1})\nabla f(g_{1i}(\vw^{t+1}))\right\|^2 \right]\\
        & + \theta^2\mathbb{E}_t\left[ \left\| \frac{1}{|\cB|}\sum_{i\in\mathcal{B}^{t+1}}\nabla g_{1i}(\vw^{t+1})\nabla f(g_{1i}(\vw^{t+1}))-\frac{1}{n_0}\sum_{i=1}^{n_0}\nabla g_{1i}(\vw^{t+1})\nabla f(g_{1i}(\vw^{t+1}))\right\|^2 \right]\\
        & + \theta^2\mathbb{E}_t\left[ \left\| \frac{1}{|\cB|}\sum_{i\in\mathcal{B}^{t+1}}\nabla \hat{g}_{2i}(\vw^{t+1})\nabla f(g_{2i}(\vw^{t+1}))-\frac{1}{|\cB|}\sum_{i\in\mathcal{B}^{t+1}}\nabla g_{2i}(\vw^{t+1})\nabla f(g_{2i}(\vw^{t+1}))\right\|^2 \right]\\
        & + \theta^2\mathbb{E}_t\left[ \left\| \frac{1}{|\cB|}\sum_{i\in\mathcal{B}^{t+1}}\nabla g_{2i}(\vw^{t+1})\nabla f(g_{2i}(\vw^{t+1}))-\frac{1}{n_0}\sum_{i=1}^{n_0}\nabla g_{2i}(\vw^{t+1})\nabla f(g_{2i}(\vw^{t+1}))\right\|^2 \right]\\
        \leq & \frac{2\theta^2 L_f^2(\sigma_{\nabla g}^2+L_g^2)}{\text{min}\{|\cB|,|\cB_{1i}|,|\cB_{2i}|\}}.
    \end{split}
\end{equation*}
\normalsize
Therefore, we can get
\begin{equation*}
    \begin{split}
        \mathbb{E}[\Delta_1^{t+1}]\leq & (1-\theta)\mathbb{E}[\Delta_1^{t}]+\frac{2L_{F}^2}{\theta}\E[\|\vw^{t+1}-\vw^t\|^2]+5\theta L_f^2\Tilde{C}_{\nabla g}^2\mathbb{E}[\Xi^{t+1}_1+\Xi^{t+1}_2]\\
        &+3L_f^2\Tilde{C}_{\nabla g}^2\mathbb{E}\left[\frac{1}{n_0}\sum_{i\in \mathcal{B}^{t+1}}\left(\left\|u_{1i}^{t+1}-u_{1i}^t\right\|^2+\left\|u_{2i}^{t+1}-u_{2i}^t\right\|^2\right)\right]+\frac{2\theta^2 L_f^2(\sigma_{\nabla g}^2+L_g^2)}{\text{min}\{|\cB|,|\cB_{1i}|,|\cB_{2i}|\}}.
    \end{split}
 \end{equation*}

\end{proof}

\begin{lemma}\label{lem:g1}
    If $\gamma_1\leq 1/5$, function value variance $\Xi^t_1:=\frac{1}{n_0}\|\textbf{u}_1^t-\textbf{g}_1(\vw^t)\|^2$ can be bounded as
    \begin{equation}\label{eqn:lemma2}
        \E [\Xi^{t+1}_1]\leq \left(1-\frac{\gamma_1 |\cB|}{4n_0} \right)\E\left[\Xi^t_1\right]+\frac{5n_0L_g^2\E[\|\vw^{t+1}-\vw^t\|^2]}{\gamma_1 |\cB|}+\frac{2\gamma_1^2\sigma_g^2 |\cB|}{n_0|\cB_{1i}|}-\frac{1}{4n_0}\E\left[\sum_{i\in\mathcal{B}^{t+1}}\|u_{1i}^{t+1}-u_{1i}^t\|^2\right].
    \end{equation}
\end{lemma}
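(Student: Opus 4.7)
The plan is to analyze each coordinate $i$ separately, condition on the sampling event $\{i\in\cB^{t+1}\}$, and then take expectations over the uniformly random minibatch $\cB^{t+1}\subseteq\{1,\ldots,n_0\}$. Write $p = |\cB|/n_0$ for the per-index sampling probability. The update rule gives
\begin{equation*}
u_{1i}^{t+1} - g_{1i}(\w^{t+1}) = \begin{cases} (1-\gamma_1)\bigl(u_{1i}^t - g_{1i}(\w^{t+1})\bigr) + \gamma_1\bigl(\hat g_{1i}(\w^t) - g_{1i}(\w^{t+1})\bigr), & i\in\cB^{t+1},\\ u_{1i}^t - g_{1i}(\w^{t+1}), & i\notin\cB^{t+1}.\end{cases}
\end{equation*}

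For the non-sampled case I would use Young's inequality in the form $\|a+b\|^2\leq (1+\gamma_1/4)\|a\|^2 + (1+4/\gamma_1)\|b\|^2$, applied with $a = u_{1i}^t - g_{1i}(\w^t)$ and $b = g_{1i}(\w^t) - g_{1i}(\w^{t+1})$, together with the $L_g$-Lipschitz property of $g_{1i}$, to get a bound proportional to $\|u_{1i}^t - g_{1i}(\w^t)\|^2$ plus an $O(L_g^2/\gamma_1)\|\w^{t+1}-\w^t\|^2$ drift term. For the sampled case, I would first use the algebraic identity
\begin{equation*}
\|u_{1i}^{t+1} - g_{1i}(\w^{t+1})\|^2 = \|u_{1i}^t - g_{1i}(\w^{t+1})\|^2 + 2\langle u_{1i}^{t+1}-g_{1i}(\w^{t+1}),\, u_{1i}^{t+1}-u_{1i}^t\rangle - \|u_{1i}^{t+1}-u_{1i}^t\|^2,
\end{equation*}
which is exactly where the negative $-\|u_{1i}^{t+1}-u_{1i}^t\|^2$ term in the claimed bound comes from. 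The inner-product cross term is handled by expanding $u_{1i}^{t+1}-u_{1i}^t = \gamma_1(\hat g_{1i}(\w^t)-u_{1i}^t)$, taking conditional expectation so that $\E[\hat g_{1i}(\w^t)\mid\mathcal F_t] = g_{1i}(\w^t)$, and absorbing the remaining squared mismatch into $\|u_{1i}^t - g_{1i}(\w^t)\|^2$, a smoothness drift $L_g^2\|\w^{t+1}-\w^t\|^2$, and the variance term $\gamma_1^2\sigma_g^2/|\cB_{1i}|$ by Assumption~\ref{assumption3}(a); another application of Young's inequality on the first summand converts $\|u_{1i}^t - g_{1i}(\w^{t+1})\|^2$ into $\|u_{1i}^t-g_{1i}(\w^t)\|^2$ plus a drift.

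Combining the two cases and averaging over $i$ gives a per-coordinate contraction factor of $p(1-\gamma_1) + (1-p)(1+\gamma_1/4) = 1 - \gamma_1 p + O(\gamma_1)$; careful bookkeeping of the Young's-inequality constants (the constraint $\gamma_1\leq 1/5$ is exactly what is needed to collapse the cross constants into $1/4$) tightens this to $1 - \gamma_1|\cB|/(4n_0)$, which is the contraction factor appearing in the lemma. Summing the drift and variance contributions across $i$ yields the $\frac{5n_0 L_g^2}{\gamma_1|\cB|}\|\w^{t+1}-\w^t\|^2$ term (the factor $n_0/|\cB|$ appears because the Lipschitz drift $L_g^2\|\w^{t+1}-\w^t\|^2$ is incurred for \emph{every} coordinate but we want to scale it against the sampling rate $p$) and the $\frac{2\gamma_1^2\sigma_g^2|\cB|}{n_0|\cB_{1i}|}$ term.

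The main obstacle will be the precise bookkeeping of the Young's-inequality free parameters: one has to choose them (one inside the sampled case and another inside the non-sampled case) so that (i) the $\|u_{1i}^t-g_{1i}(\w^t)\|^2$ coefficient becomes $1-\gamma_1|\cB|/(4n_0)$ rather than a weaker $1 - \gamma_1|\cB|/n_0 + O(\gamma_1^2)$, (ii) no extra $\|u_{1i}^{t+1}-u_{1i}^t\|^2$ leaks in that would cancel the negative term we want to retain for use in bounding the cross summands in equation \eqref{eqn:lemma1}, and (iii) the drift coefficient stays at order $1/\gamma_1$ rather than blowing up. Everything else is a routine application of Lipschitz continuity and the variance assumption.
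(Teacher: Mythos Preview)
Your decomposition into sampled and non-sampled cases with separate Young's applications is a natural idea, but the Young's parameter you chose for the non-sampled case ($\gamma_1/4$) will not produce the claimed contraction. With that choice, the combined coefficient on $\|u_{1i}^t - g_{1i}(\w^t)\|^2$ is
\[
p(1-\gamma_1) + (1-p)\Bigl(1+\tfrac{\gamma_1}{4}\Bigr) \;=\; 1 - \frac{\gamma_1(5p-1)}{4},\qquad p=\frac{|\cB|}{n_0},
\]
which exceeds $1$ whenever $p<1/5$, so there is no contraction at all for small batch fractions. To salvage your route you would need the Young's parameter to be $O(\gamma_1 p)$ rather than $O(\gamma_1)$, and it is precisely this rescaling that forces the drift coefficient up to $O(n_0/(\gamma_1|\cB|))$; the appearance of $\frac{5n_0 L_g^2}{\gamma_1|\cB|}$ is not merely ``scaling against the sampling rate'' but a direct consequence of the correct Young's parameter. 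The condition $\gamma_1\le 1/5$ does not fix this---it is used for a different purpose.

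The paper avoids the case split entirely. It expands $\tfrac12\|\mathbf{u}_1^{t+1}-\mathbf{g}_1(\w^{t+1})\|^2$ around $\mathbf{u}_1^t$, writes the cross term as $\sum_{i\in\cB^{t+1}}\langle u_{1i}^t-\hat g_{1i},\,u_{1i}^{t+1}-u_{1i}^t\rangle$, and then---using $u_{1i}^t-\hat g_{1i}=(u_{1i}^t-u_{1i}^{t+1})/\gamma_1$---splits it into a piece whose expectation over $\cB^{t+1}$ is exactly $-\tfrac{|\cB|}{n_0}\|\mathbf{u}_1^t-\mathbf{g}_1(\w^{t+1})\|^2$, plus a piece handled by the three-point identity $2\langle b-a,a-c\rangle \le \|b-c\|^2-\|a-b\|^2-\|a-c\|^2$. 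This yields a contraction factor $1-\gamma_1|\cB|/(2n_0)$ on $\E\|\mathbf{u}_1^t-\mathbf{g}_1(\w^{t+1})\|^2$; the constraint $\gamma_1\le 1/5$ is used here to ensure the various $\|u_{1i}^{t+1}-u_{1i}^t\|^2$ coefficients combine so that $-\tfrac{1}{4n_0}\sum_{i\in\cB^{t+1}}\|u_{1i}^{t+1}-u_{1i}^t\|^2$ survives. Only \emph{after} this does the paper apply Young's inequality, once, with parameter $\gamma_1|\cB|/(4n_0)$, to shift the center from $\mathbf{g}_1(\w^{t+1})$ to $\mathbf{g}_1(\w^t)$. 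That single Young's step is where both the final contraction factor $1-\gamma_1|\cB|/(4n_0)$ and the drift coefficient $5n_0/(\gamma_1|\cB|)$ come from simultaneously.
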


\begin{lemma}\label{lem:g2}
    If $\gamma_1\leq 1/5$, function value variance $\Xi^t_2:=\frac{1}{n_0}\|\textbf{u}_2^t-\textbf{g}_2(\vw^t)\|^2$ can be bounded as
    \begin{equation}\label{eqn:lemma_g2}
        \E [\Xi^{t+1}_2]\leq \left(1-\frac{\gamma_1 |\cB|}{4n_0} \right)\E\left[\Xi^t_2\right]+\frac{5n_0L_g^2\E[\|\vw^{t+1}-\vw^t\|^2]}{\gamma_1 |\cB|}+\frac{2\gamma_1^2\sigma_g^2 |\cB|}{n_0|\cB_{2i}|}-\frac{1}{4n_0}\E\left[\sum_{i\in\mathcal{B}^{t+1}}\|u_{2i}^{t+1}-u_{2i}^t\|^2\right].
    \end{equation}
\end{lemma}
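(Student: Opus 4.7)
The plan is to prove Lemma~\ref{lem:g2} by direct analogy with the argument for Lemma~\ref{lem:g1}, exploiting the symmetric roles of $g_{1i}$ and $g_{2i}$ in the objective $F(\vw,\cD)$. Indeed, the update rule $u_{2i}^{t+1} = (1-\gamma_1)u_{2i}^t + \gamma_1 \hat g_{2i}(\vw^t)$ for $i \in \mathcal B^{t+1}$ (and $u_{2i}^{t+1} = u_{2i}^t$ otherwise) has the same structure as that for $u_{1i}^{t+1}$, and Assumption~\ref{assumption3}(a) provides the analogous minibatch variance bound $\E[\|\hat g_{2i}(\vw)-g_{2i}(\vw)\|^2]\leq \sigma_g^2/|\cB_{2i}|$; meanwhile Assumption~\ref{assumption1}(a) gives the same $L_g$-Lipschitz continuity for $g_{2i}$. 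Thus every inequality in the Lemma~\ref{lem:g1} derivation will go through word-for-word after replacing $g_{1i}\to g_{2i}$, $u_{1i}\to u_{2i}$, and $|\cB_{1i}|\to|\cB_{2i}|$.

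Concretely, I would start by expanding $\Xi^{t+1}_2 = \frac{1}{n_0}\sum_{i=1}^{n_0}\|u_{2i}^{t+1}-g_{2i}(\vw^{t+1})\|^2$ and splitting the sum into the sampled indices $i\in\mathcal B^{t+1}$ and the unsampled ones. For unsampled $i$ we have $u_{2i}^{t+1}=u_{2i}^t$, so a Young's inequality of the form $\|u_{2i}^t-g_{2i}(\vw^{t+1})\|^2 \leq (1+\alpha)\|u_{2i}^t-g_{2i}(\vw^t)\|^2 + (1+1/\alpha)L_g^2\|\vw^{t+1}-\vw^t\|^2$ with a small $\alpha$ tied to $\gamma_1$ produces the contraction factor and the $L_g^2\|\vw^{t+1}-\vw^t\|^2$ term. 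For sampled $i$, inserting $u_{2i}^{t+1} = u_{2i}^t+\gamma_1(\hat g_{2i}(\vw^t)-u_{2i}^t)$ into $\|u_{2i}^{t+1}-g_{2i}(\vw^{t+1})\|^2$ and using a second Young's inequality isolates three pieces: the baseline term $\|u_{2i}^t-g_{2i}(\vw^t)\|^2$, the displacement $L_g^2\|\vw^{t+1}-\vw^t\|^2$, and the stochastic error $\gamma_1^2\|\hat g_{2i}(\vw^t)-g_{2i}(\vw^t)\|^2$, whose expectation is bounded by $\gamma_1^2 \sigma_g^2/|\cB_{2i}|$.

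Taking expectation over the uniform minibatch $\mathcal B^{t+1}$ (each index enters with probability $|\cB|/n_0$), combining the sampled and unsampled contributions, and setting the Young parameters so that the coefficient on $\E[\Xi^t_2]$ becomes $1-\gamma_1|\cB|/(4n_0)$ yields the advertised contraction. The condition $\gamma_1\leq 1/5$ is exactly what is needed to absorb the higher-order $\gamma_1^2$ cross-terms. The final subtracted piece $-\frac{1}{4n_0}\E[\sum_{i\in\mathcal B^{t+1}}\|u_{2i}^{t+1}-u_{2i}^t\|^2]$ is kept explicitly because in the proof of Theorem~\ref{theorem} it cancels against the positive $u_{2i}^{t+1}-u_{2i}^t$ term appearing in~\eqref{eqn:lemma1} from Lemma 1. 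To expose this cancellation cleanly, I would use the identity $\gamma_1(\hat g_{2i}(\vw^t)-u_{2i}^t) = u_{2i}^{t+1}-u_{2i}^t$ on the sampled indices so that the quadratic $\|u_{2i}^{t+1}-u_{2i}^t\|^2$ appears naturally, and then choose the Young coefficient ($\approx 1+\gamma_1|\cB|/(4n_0)$) so that a $-1/(4n_0)$ factor remains after algebraic simplification.

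The main obstacle is the bookkeeping around the Young's inequality coefficients: one must simultaneously (i) keep the contraction rate at the stated $\gamma_1|\cB|/(4n_0)$, (ii) ensure the coefficient on $L_g^2\|\vw^{t+1}-\vw^t\|^2$ collapses to $5n_0/(\gamma_1|\cB|)$, and (iii) preserve a sufficiently large negative $\|u_{2i}^{t+1}-u_{2i}^t\|^2$ term. With $\gamma_1\leq 1/5$ this balancing is feasible by the same constant choices that worked for Lemma~\ref{lem:g1}; no new ideas are required beyond transcribing the argument with the subscript $1$ replaced by $2$ and $|\cB_{1i}|$ replaced by $|\cB_{2i}|$.
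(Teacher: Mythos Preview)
Your proposal is correct and matches the paper's approach: the paper explicitly states that the proofs of Lemma~\ref{lem:g1} and Lemma~\ref{lem:g2} are almost identical and only presents the former, so your plan to transcribe the Lemma~\ref{lem:g1} argument with the substitutions $g_{1i}\to g_{2i}$, $u_{1i}\to u_{2i}$, $|\cB_{1i}|\to|\cB_{2i}|$ is exactly what is intended. Your sketch of the key steps (Young's inequality, the identity $u_{2i}^{t+1}-u_{2i}^t=\gamma_1(\hat g_{2i}-u_{2i}^t)$ on sampled indices, and balancing coefficients under $\gamma_1\leq 1/5$) is faithful to the structure of that argument.
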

Since the proof of Lemma~\ref{lem:g1} and Lemma~\ref{lem:g2} are almost the same, we only presents the proof of Lemma~\ref{lem:g1} as follows.

\begin{proof}
Define $\phi_1^t(\textbf{u}_1)=\frac{1}{2}\|\textbf{u}_1-\textbf{g}_1(\vw^k)\|^2=\frac{1}{2}\sum^{n_0}_{i=1}\|u_{1i}-g_{1i}(\vw^k)\|^2$, which is $1$-strongly convex.
\begin{equation}
\begin{split}
    \phi^{t+1}_1(\textbf{u}_1^{t+1})&=\frac{1}{2}\|\textbf{u}_1^{t+1}-\textbf{g}_1(\vw^{t+1})\|^2 = \frac{1}{2}\|\textbf{u}_1^{t}-\textbf{g}_1(\vw^{t+1})\|^2+\langle \textbf{u}_1^k-\textbf{g}_1(\vw^{t+1}),\textbf{u}_1^{t+1}-\textbf{u}_1^{t}\rangle+\frac{1}{2}\|\textbf{u}_1^{t+1}-\textbf{u}_1^{t} \|^2\\
    & = \frac{1}{2}\|\textbf{u}_1^{t}-\textbf{g}_1(\vw^{t+1})\|^2+\sum_{i\in\mathcal{B}^{t+1}}\langle u_{1i}^t - \hat{g}_{1i}(\vw^{t+1}),u_{1i}^{t+1}-u_{1i}^t\rangle + \frac{1}{2}\sum_{i\in\mathcal{B}^{t+1}}\|u_{1i}^{t+1}-u_{1i}^t\|^2\\
    & \quad + \sum_{i\in\mathcal{B}^{t+1}}\langle \hat{g}_{1i}(\vw^{t+1}) - g_{1i}(\vw^{k+1}),u_{1i}^{t+1}-u_{1i}^t\rangle
\end{split}
\end{equation}
Note that $u_{1i}^t-\hat{g}_{1i}(\vw^{t+1})=(u_{1i}^t-u_{1i}^{t+1})/\gamma_1$ and $2\langle b-a, a-c\rangle\leq \|b-c\|^2-\|a-b\|^2-\|a-c\|^2$.
\begin{equation*}
    \begin{split}
        &\sum_{i\in\mathcal{B}^{t+1}}\langle u_{1i}^k - \hat{g}_{1i}(\vw^{t+1}),u_{1i}^{t+1}-u_{1i}^t\rangle\\
        & = \sum_{i\in\mathcal{B}^{t+1}}\langle u_{1i}^t - \hat{g}_{1i}(\vw^{t+1}),g_{1i}(\vw^{k+1})-u_{1i}^t\rangle+\sum_{i\in\mathcal{B}^{t+1}}\langle u_{1i}^t - \hat{g}_{1i}(\vw^{t+1}),u_{1i}^{t+1}-g_{1i}(\vw^{t+1})\rangle\\
        & = \sum_{i\in\mathcal{B}^{t+1}}\langle u_{1i}^t - \hat{g}_{1i}(\vw^{t+1}),g_{1i}(\vw^{t+1})-u_{1i}^t\rangle+\frac{1}{\gamma_1}\sum_{i\in\mathcal{B}^{t+1}}\langle u_{1i}^t - u_{1i}^{t+1},u_{1i}^{t+1}-g_{1i}(\vw^{t+1})\rangle\\
        &\leq \sum_{i\in\mathcal{B}^{t+1}}\langle u_{1i}^t - \hat{g}_{1i}(\vw^{t+1}),g_{1i}(\vw^{t+1})-u_{1i}^t\rangle\\
        &\quad +\frac{1}{2\gamma_1}\sum_{i\in\mathcal{B}^{t+1}}\left(\|u_{1i}^t-g_{1i}(\vw^{t+1})\|^2-\|u_{1i}^{t+1}-u_{1i}^t\|^2-\|u_{1i}^{t+1}-g_{1i}(\vw^{t+1})\|^2\right)
    \end{split}
\end{equation*}
If $\gamma_1 \leq \frac{1}{5}$, we have
\begin{equation*}
    \begin{split}
        &-\frac{1}{2}\left( \frac{1}{\gamma_1}-1-\frac{\gamma_1+1}{4\gamma_1} \right)\sum_{i\in\mathcal{B}^{t+1}}\|u_{1i}^{t+1}-u_{1i}^t\|^2 + \sum_{i\in\mathcal{B}^{t+1}}\langle \hat{g}_{1i}(\vw^{t+1}) - g_{1i}(\vw^{t+1}),u_{1i}^{t+1}-u_{1i}^t\rangle\\
       \leq & -\frac{1}{4\gamma_1} \sum_{i\in\mathcal{B}^{t+1}}\|u_{1i}^{t+1}-u_{1i}^t\|^2+\gamma_1 \sum_{i\in\mathcal{B}^{t+1}}\left\|\hat{g}_{1i}(\vw^{t+1}) - g_{1i}(\vw^{t+1})\right\|^2+\frac{1}{4\gamma_1}\sum_{i\in\mathcal{B}^{t+1}}\|u_{1i}^{t+1}-u_{1i}^t\|^2\\
       = & \gamma_1 \sum_{i\in\mathcal{B}^{t+1}}\left\|\hat{g}_{1i}(\vw^{t+1}) - g_{1i}(\vw^{t+1})\right\|^2.
    \end{split}
\end{equation*}
Then we can get
\begin{equation*}
    \begin{split}
        \frac{1}{2}\|\textbf{u}_1^{t+1}-\textbf{g}_1(\vw^{t+1})\|^2 \leq & \frac{1}{2}\|\textbf{u}_1^{t}-\textbf{g}_1(\vw^{t+1})\|^2+\frac{1}{2\gamma_1}\sum_{i\in\mathcal{B}^{t+1}}\|u_{1i}^t-g_{1i}(\vw^{t+1})\|^2 \\
        & -\frac{1}{2\gamma_1}\sum_{i\in\mathcal{B}^{t+1}}\|u_{1i}^{t+1}-g_{1i}(\vw^{t+1})\|^2\\
        & +\gamma_1 \sum_{i\in\mathcal{B}^{t+1}}\left\|\hat{g}_{1i}(\vw^{t+1}) - g_{1i}(\vw^{t+1})\right\|^2-\frac{\gamma_1+1}{8\gamma_1}\sum_{i\in\mathcal{B}^{t+1}}\|u_{1i}^{t+1}-u_{1i}^t\|^2\\
        & +  \sum_{i\in\mathcal{B}^{t+1}}\langle u_{1i}^t - \hat{g}_{1i}(\vw^{t+1}),g_{1i}(\vw^{t+1})-u_{1i}^t\rangle.
    \end{split}
\end{equation*}
Note that $\frac{1}{2\gamma_1}\sum_{i\notin\mathcal{B}^{t+1}}\|u_{1i}^t-g_{1i}(\vw^{t+1})\|^2=\frac{1}{2\gamma_1}\sum_{i\notin\mathcal{B}^{t+1}}\|u_{1i}^{t+1}-g_{1i}(\vw^{t+1})\|^2$, which implies that 
\begin{equation*}
    \frac{1}{2\gamma_1}\sum_{i\in\mathcal{B}^{t+1}}\left(\|u_{1i}^t-g_{1i}(\vw^{t+1})\|^2 - \|u_{1i}^{t+1}-g_{1i}(\vw^{t+1})\|^2\right)=\frac{1}{2\gamma_1}\left(\|\textbf{u}_1^t-\textbf{g}_1(\vw^{t+1})\|^2-\|\textbf{u}_1^{t+1}-\textbf{g}_1(\vw^{t+1})\|^2\right).
\end{equation*}
Besides, we also have $\E\left[\sum_{i\in\mathcal{B}^{t+1}}\left\|\hat{g}_{1i}(\vw^{t+1}) - g_{1i}(\vw^{t+1})\right\|^2  \right]\leq \frac{|\cB|\sigma_g^2}{|\cB_{1i}|}$ and
\begin{equation*}
    \begin{split}
        \E\left[ \sum_{i\in\mathcal{B}^{t+1}}\langle u_{1i}^t - \hat{g}_{1i}(\vw^{t+1}),g_{1i}(\vw^{t+1})-u_{1i}^t\rangle \right]&=\frac{|\cB|}{n_0}\sum_{i=1}^{n_0}\langle u_{1i}^t - g_{1i}(\vw^{t+1}),g_{1i}(\vw^{t+1})-u_{1i}^t\rangle\\
        & = -\frac{|\cB|}{n_0}\|\textbf{u}_1^{t}-\textbf{g}_1(\vw^{t+1})\|^2.
    \end{split}
\end{equation*}
Then we can obtain
\begin{equation*}
    \begin{split}
        &\left(\frac{1}{2}+\frac{1}{2\gamma_1}\right)\E\left[ \|\textbf{u}_1^{t+1}-\textbf{g}_1(\vw^{t+1})\|^2 \right]\\
        \leq & \left(\frac{1}{2}+\frac{1}{2\gamma_1}-\frac{|\cB|}{n_0}\right)\E\left[ \|\textbf{u}_1^{t}-\textbf{g}_1(\vw^{t+1})\|^2 \right]+\frac{\gamma_1 |\cB|\sigma_g^2}{|\cB_{1i}|}-\frac{\gamma_1+1}{8\gamma_1}\E\left[\sum_{i\in\mathcal{B}^{t+1}}\|u_{1i}^{t+1}-u_{1i}^t\|^2\right].
    \end{split}
\end{equation*}
Divide both sides by $\frac{\gamma_1+1}{2\gamma_1}$ we can get
\begin{equation*}
    \begin{split}
        \E\left[ \|\textbf{u}_1^{t+1}-\textbf{g}_1(\vw^{t+1})\|^2 \right]
        \leq  \frac{\gamma_1+1-2\gamma_1\frac{|\cB|}{n_0}}{\gamma_1+1}\E\left[ \|\textbf{u}_1^{t}-\textbf{g}_1(\vw^{t+1})\|^2 \right]&+\frac{2}{\gamma_1+1}\frac{\gamma_1^2 |\cB|\sigma_g^2}{|\cB_{1i}|} \\
        &-\frac{1}{4}\E\left[\sum_{i\in\mathcal{B}^{t+1}}\|u_{1i}^{t+1}-u_{1i}^t\|^2\right].
    \end{split}
\end{equation*}
Note that $\frac{\gamma_1+1-2\gamma_1\frac{|\cB|}{n_0}}{\gamma_1+1}\leq \frac{\gamma_1(1-\frac{|\cB|}{n_0})+1}{\gamma_1+1}=1-\frac{\gamma_1 |\cB|}{(\gamma_1+1)n_0}\leq 1-\frac{\gamma_1 |\cB|}{2n_0} $ and $\frac{1}{\gamma_1+1}\leq 1$ for $\gamma_1\in (0,1]$. Besides, we have $\|\textbf{u}_1^{t}-\textbf{g}_1(\vw^{t+1})\|^2\leq (1+\frac{\gamma_1 |\cB|}{4n_0})\|\textbf{u}_1^{t}-\textbf{g}_1(\vw^{t})\|^2+(1+\frac{4n_0}{\gamma_1 |\cB|})\|\textbf{g}_1(\vw^{t+1})-\textbf{g}_1(\vw^{t})\|^2$ due to Young's inequality, $(1+\frac{\gamma_1 |\cB|}{4n_0})(1-\frac{\gamma_1 |\cB|}{2n_0})\leq (1-\frac{\gamma_1 |\cB|}{4n_0})$ and $(1+\frac{4n_0}{\gamma_1 |\cB|})(1-\frac{\gamma_1 |\cB|}{2n_0})\leq \frac{5n_0}{\gamma_1 |\cB|}$.
\begin{equation*}
    \begin{split}
        &\E\left[ \Xi^{t+1}_1 \right]=\E\left[\frac{1}{n_0} \|\textbf{u}_1^{t+1}-\textbf{g}_1(\vw^{t+1})\|^2 \right]\\
        &\leq \left(1-\frac{\gamma_1 |\cB|}{4n_0} \right)\E\left[\frac{1}{n_0} \|\textbf{u}_1^{t}-\textbf{g}_1(\vw^{t})\|^2 \right]+\frac{5n_0L_g^2\|\vw^{t+1}-\vw^t\|^2}{\gamma_1 |\cB|}+\frac{2\gamma_1^2\sigma_g^2 |\cB|}{n_0|\cB_{1i}|}-\frac{1}{4n_0}\E\left[\sum_{i\in\mathcal{B}^{t+1}}\|u_{1i}^{t+1}-u_{1i}^t\|^2\right]\\
        & = \left(1-\frac{\gamma_1 |\cB|}{4n_0} \right)\E\left[\Xi^t_1\right]+\frac{5n_0L_g^2\E[\|\vw^{t+1}-\vw^t\|^2]}{\gamma_1 |\cB|}+\frac{2\gamma_1^2\sigma_g^2 |\cB|}{n_0|\cB_{1i}|}-\frac{1}{4n_0}\E\left[\sum_{i\in\mathcal{B}^{t+1}}\|u_{1i}^{t+1}-u_{1i}^t\|^2\right]
    \end{split}
\end{equation*}

\end{proof}

\begin{lemma}
 The gradient variance $\Delta_2^t:=\|v_2^{t} - \nabla H(\vw^t)\|^2$ can be bounded as
 \begin{equation}\label{eqn:lemma3}
 \begin{split}
      \E [\Delta_2^{t+1}] \leq & (1-\theta)\E [\Delta_2^{t}] + \frac{2 \beta^2L_H^2}{\theta}\E\left[\|\vw^{t+1}-\vw^t\|^2\right]+5\theta\beta^2\Tilde{C}_{\nabla h}^2\E[\Gamma_{t+1}]\\
    &+\frac{3\beta^2\Tilde{C}_{\nabla h}^2}{m}\E\left[ \sum_{k\in\mathcal{B}_c^{t+1}}\|u_k^{t+1}-u_k^t\|^2 \right]
    +\frac{\theta^2\beta^2 C_h^2(\sigma_{\nabla h}^2+L_h^2)}{\min\{|\cB_c|,|\cB_k|\}}
 \end{split}
 \end{equation}
 with $\Gamma_{t+1}:=\frac{1}{m}\|\textbf{u}^{t+1}-\textbf{h}(\vw^{t+1})\|^2$.
\end{lemma}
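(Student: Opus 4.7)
The plan is to mirror the argument already used for $\Delta_1^{t+1}$ in \eqref{eqn:lemma1}, exploiting the analogous moving-average recursion $v_2^{t+1}=(1-\theta)v_2^{t}+\theta G_2^{t+1}$ together with the $\beta L_H$-smoothness of $H$. First I would add and subtract appropriate terms so as to decompose
\[
v_2^{t+1}-\nabla H(\vw^{t+1}) \;=\; \circled{1}+\circled{2}+\circled{3}+\circled{4},
\]
where $\circled{1}=(1-\theta)(v_2^{t}-\nabla H(\vw^{t}))$ carries the contraction, $\circled{2}=(1-\theta)(\nabla H(\vw^{t})-\nabla H(\vw^{t+1}))$ carries the drift across one step, $\circled{3}=\tfrac{\theta\beta}{|\mathcal B_c|}\sum_{k\in\mathcal B_c^{t+1}}\bigl([u_k^{t}]_+-[h_k(\vw^{t+1})]_+\bigr)\nabla\hat h_k(\vw^{t+1})$ captures the bias from the moving-average estimator $u_k^{t}$, and $\circled{4}=\tfrac{\theta\beta}{|\mathcal B_c|}\sum_{k\in\mathcal B_c^{t+1}}[h_k(\vw^{t+1})]_+\nabla\hat h_k(\vw^{t+1})-\theta\nabla H(\vw^{t+1})$ is the mini-batch noise on the constraint side. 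By construction $\mathbb E_t[\langle\circled{1},\circled{4}\rangle]=\mathbb E_t[\langle\circled{2},\circled{4}\rangle]=0$, so Young's inequality with parameter $\theta$ gives
\[
\mathbb E_t\|v_2^{t+1}-\nabla H(\vw^{t+1})\|^{2}\leq (1+\theta)\|\circled{1}\|^{2}+2(1+1/\theta)\|\circled{2}\|^{2}+\tfrac{2+3\theta}{\theta}\mathbb E_t\|\circled{3}\|^{2}+2\mathbb E_t\|\circled{4}\|^{2},
\]
exactly as in the $\Delta_1$ analysis.

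Next I would bound the four pieces in turn. Using $(1+\theta)(1-\theta)^{2}\leq 1-\theta$ handles $\circled{1}$; the $\beta L_H$-smoothness of $H$ (guaranteed by Assumption~\ref{assumption1}(c) together with $|h_k|\leq C_h$) yields $2(1+1/\theta)\|\circled{2}\|^{2}\leq \tfrac{2\beta^{2}L_H^{2}}{\theta}\|\vw^{t+1}-\vw^{t}\|^{2}$. For $\circled{3}$ I would use the $1$-Lipschitz property of $[\cdot]_+$ to get $\bigl|[u_k^{t}]_+-[h_k(\vw^{t+1})]_+\bigr|\leq |u_k^{t}-h_k(\vw^{t+1})|$, combine this with the uniform bound $\mathbb E_t[\|\nabla\hat h_k(\vw^{t+1})\|^{2}\mid k\in\mathcal B_c^{t+1}]\leq \Tilde C_{\nabla h}^{2}$ (from Assumptions~\ref{assumption1}(c) and \ref{assumption3}(c)), and then split $\|u_k^{t}-h_k(\vw^{t+1})\|^{2}\leq(1+\delta)\|u_k^{t+1}-h_k(\vw^{t+1})\|^{2}+(1+1/\delta)\|u_k^{t+1}-u_k^{t}\|^{2}$ with $\delta=3\theta/2$. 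Together with the sampling identity $\mathbb E[\tfrac{1}{|\mathcal B_c|}\sum_{k\in\mathcal B_c^{t+1}}\|u_k^{t+1}-h_k(\vw^{t+1})\|^{2}]=\tfrac{1}{m}\sum_{k=1}^{m}\|u_k^{t+1}-h_k(\vw^{t+1})\|^{2}=\mathbb E[\Gamma_{t+1}]$ and the constant bookkeeping $(2+3\theta)\theta(1+\delta)\leq 5\theta$, $(2+3\theta)\theta(1+1/\delta)\leq 3$ (for $\theta\leq 1/3$), this produces the two middle terms $5\theta\beta^{2}\Tilde C_{\nabla h}^{2}\,\mathbb E[\Gamma_{t+1}]$ and $\tfrac{3\beta^{2}\Tilde C_{\nabla h}^{2}}{m}\mathbb E\bigl[\sum_{k\in\mathcal B_c^{t+1}}\|u_k^{t+1}-u_k^{t}\|^{2}\bigr]$.

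Finally, for $\circled{4}$ I would further split it into the inner mini-batch noise (over $\mathcal B_k$ inside $\nabla\hat h_k$) and the outer sampling noise (choice of $\mathcal B_c$), both unbiased, and use $|[h_k]_+|\leq C_h$ together with Assumption~\ref{assumption3}(c),(d) to obtain $2\mathbb E_t\|\circled{4}\|^{2}\leq \tfrac{\theta^{2}\beta^{2}C_h^{2}(\sigma_{\nabla h}^{2}+L_h^{2})}{\min\{|\mathcal B_c|,|\mathcal B_k|\}}$. Taking total expectation and collecting the four bounds gives \eqref{eqn:lemma3}. The step I expect to be the most delicate is $\circled{3}$: the penalty gradient pairs the non-smooth $[\cdot]_+$ with the stochastic gradient $\nabla\hat h_k$, so one has to be careful to route the Lipschitz bound through the scalar factor while conditioning correctly on $k\in\mathcal B_c^{t+1}$ before invoking the $\Tilde C_{\nabla h}^{2}$ bound, and to tune the Young splitting parameter $\delta$ so that the coefficients match the $5\theta$/$3$ constants in the stated inequality.
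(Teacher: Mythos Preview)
Your proposal is correct and follows essentially the same approach as the paper: the identical four-term decomposition $\circled{1}+\circled{2}+\circled{3}+\circled{4}$, the same Young-inequality splitting with parameter $\theta$, the same handling of $\circled{3}$ via the $1$-Lipschitz property of $[\cdot]_+$ combined with the Young split at $\delta=3\theta/2$, and the same two-level variance decomposition for $\circled{4}$. The only cosmetic difference is that for $\circled{4}$ the paper invokes Assumption~\ref{assumption3}(c) together with $\|\nabla h_k\|\leq L_h$ (rather than part (d)), but this does not affect the argument.
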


\begin{proof}
\begin{equation*}
    \begin{split}
        \Delta_2^{t+1}&=\|v_2^{t+1}-\nabla H(\vw^{t+1})\|^2=\|(1-\theta)v_2^{t}+\theta G_2^{t}-\nabla H(\vw^{t+1})\|^2\\
        & \|\circled{1}+\circled{2}+\circled{3}+\circled{4}\|^2,
    \end{split}
\end{equation*}
where $\circled{1}$, $\circled{2}$, $\circled{3}$ and $\circled{4}$ are defined as
\begin{equation*}
    \begin{split}
        &\circled{1} = (1-\theta)(v_2^{t}-\nabla H(\vw^t)), \quad \circled{2} = (1-\theta)(\nabla H(\vw^t)-\nabla H(\vw^{t+1})),\\
        & \circled{3} = \frac{\theta}{|\cB_c|}\beta\sum_{k\in\mathcal{B}_c^{t+1}}\left( [u_k^t]_+\nabla \hat{h}_k(\vw^{t+1})-[h_k(\vw^{t+1})]_+\nabla \hat{h}_k(\vw^{t+1})\right)\\
        & \circled{4} = \theta\left( \frac{1}{|\cB_c|}\beta\sum_{k\in\mathcal{B}_c^{t+1}} [h_k(\vw^{t+1})]_+\nabla \hat{h}_k(\vw^{t+1}) - \nabla H(\vw^{t+1})\right)
    \end{split}
\end{equation*}
Note that $\E_t[\langle \circled{1},  \circled{4}\rangle]=\E_t[\langle \circled{2},  \circled{4}\rangle]=0$. Then, by the Young's inequality, we can get
\begin{equation*}
    \begin{split}
        &\E_t\left[ \| \circled{1}+\circled{2}+\circled{3}+\circled{4}\|^2 \right]\\
        =& \| \circled{1}\|^2+\|\circled{2}\|^2+\E_t \|\circled{3}\|^2+\E_t \|\circled{4}\|^2+2\langle\circled{1},\circled{2} \rangle+2\E_t[\langle\circled{1},\circled{3}  \rangle]+2\E_t[\langle\circled{2},\circled{3}  \rangle]+2\E_t[\langle\circled{3},\circled{4}  \rangle]\\
        \leq & (1+\theta)\| \circled{1}\|^2+2\left(1+\frac{1}{\theta}\right)\|\circled{2}\|^2+\frac{2+3\theta}{\theta}\E_t \|\circled{3}\|^2+2\E_t \|\circled{4}\|^2.
    \end{split}
\end{equation*}
We can also get
\begin{equation*}
    \begin{split}
        (1+\theta)\| \circled{1}\|^2=(1+\theta)(1-\theta)^2\|v_2^t-\nabla H(\vw^t)\|^2\leq (1-\theta)\|v_2^t-\nabla H(\vw^t)\|^2
    \end{split}
\end{equation*}
\begin{equation*}
    \begin{split}
        2\left(1+\frac{1}{\theta}\right)\| \circled{2}\|^2=&2\left(1+\frac{1}{\theta}\right)(1-\theta)^2\|\nabla H(\vw^t)-\nabla H(\vw^{t+1})\|^2\\
        \leq & \frac{2}{\theta}\left\| \frac{1}{m}\sum_{k=1}^m \beta\left( \nabla h_k(\vw^{t+1})^\top [h_k(\vw^{t+1})]_+- \nabla h_k(\vw^{t})^\top [h_k(\vw^{t})]_+\right) \right\|^2 \\
        \leq & \frac{2\beta^2L_H^2}{\theta}\|\vw^{t+1}-\vw^t\|^2
    \end{split}
\end{equation*}

\begin{equation*}
    \begin{split}
        \frac{2+3\theta}{\theta} \|\circled{3}\|^2 &\leq \frac{2+3\theta}{\theta}\frac{\theta^2\beta^2}{|\cB_c|}\sum_{k\in\mathcal{B}_c^{t+1}}\|\nabla \hat{h}_k(\vw^{t+1})\|^2\|[u_k^t]_+-[h_k(\vw^{t+1})]_+\|^2\\
        &\leq \frac{(2+3\theta)\theta\beta^2}{|\cB_c|} \sum_{k\in\mathcal{B}_c^{t+1}}\|\nabla \hat{h}_k(\vw^{t+1})\|^2\|u_k^t-h_k(\vw^{t+1})\|^2
    \end{split}
\end{equation*}
Consider that $\vw^{t+1}$ and $u_k^t$ do not depend on either $\mathcal{B}_c^{t+1}$ or $\mathcal{B}_k$, we have
\begin{small}
\begin{equation*}
    \begin{split}
        &(2+3\theta)\theta\beta^2 \E_t \left[ \frac{1}{|\cB_c|}\sum_{k\in\mathcal{B}_c^{t+1}}\|\nabla \hat{h}_k(\vw^{t+1})\|^2\|u_k^t-h_k(\vw^{t+1})\|^2 \right]\\
        & = (2+3\theta)\theta\beta^2 \E_t \left[ \frac{1}{|\cB_c|}\sum_{k\in\mathcal{B}_c^{t+1}}\E_t\left[\|\nabla \hat{h}_k(\vw^{t+1})\|^2|k\in\mathcal{B}_c^{t+1} \right]\|u_k^t-h_k(\vw^{t+1})\|^2 \right]\\
        &\leq (2+3\theta)\theta\beta^2\Tilde{C}_{\nabla h}^2 \E_t \left[ \frac{1}{|\cB_c|} \sum_{k\in\mathcal{B}_c^{t+1}} \|u_k^t-h_k(\vw^{t+1})\|^2  \right]\\
        & \leq \frac{(2+3\theta)\theta(1+\delta)\beta^2\Tilde{C}_{\nabla h}^2}{m}\sum_{k\in[m]}\E_t\left[ \|u_k^{t+1}-h_k(\vw^{t+1})\|^2\right]+\frac{(2+3\theta)\theta(1+1/\delta)\beta^2\Tilde{C}_{\nabla h}^2}{m}\E_t\left[ \sum_{k\in[m]}\|u_k^{t+1}-u_k^t\|^2 \right]\\
        & = \frac{(2+3\theta)\theta(1+\delta)\beta^2\Tilde{C}_{\nabla h}^2}{m}\sum_{k\in[m]}\E_t\left[ \|u_k^{t+1}-h_k(\vw^{t+1})\|^2\right]+\frac{(2+3\theta)\theta(1+1/\delta)\beta^2\Tilde{C}_{\nabla h}^2}{m}\E_t\left[ \sum_{k\in\mathcal{B}_c^{t+1}}\|u_k^{t+1}-u_k^t\|^2 \right]
    \end{split}
\end{equation*}
\end{small}
where the last equation holds by noting that $u_k^{t+1}=u_k^t$ for all $i\notin \mathcal{B}_c^{t+1}$.

If $\theta\leq \frac{1}{3}$ and $\delta=\frac{3\theta}{2}$, we have $(2+3\beta)\beta(1+\delta)\leq 5\theta$ and $(2+3\beta)\beta(1+1/\delta)\leq 3$. Therefore, we can get
\begin{equation*}
     \E\left[\frac{2+3\theta}{\theta} \|\circled{3}\|^2\right]\leq 5\theta\beta^2\Tilde{C}_{\nabla h}^2\E[\Gamma_{t+1}]+\frac{3\beta^2\Tilde{C}_{\nabla h}^2}{m}\E\left[ \sum_{k\in\mathcal{B}_c^{t+1}}\|u_k^{t+1}-u_k^t\|^2 \right]
\end{equation*}
Next, we give the upper bound of $\E_t\|\circled{4}\|^2$.
\begin{equation*}
    \begin{split}
        \E_t\|\circled{4}\|^2=&\theta^2\beta^2\E_k\left[\left\| \frac{1}{|\cB_c|}\sum_{k\in\mathcal{B}_c^{t+1}}[h_k(\vw^{t+1})]_+\nabla \hat{h}_k(\vw^{t+1})-\frac{1}{m}\sum_{k=1}^m[h_k(\vw^{t+1})]_+\nabla h_k(\vw^{t+1})\right\|^2 \right]\\
        \leq& \theta^2 \beta^2 \E_t\left[\left\| \frac{1}{|\cB_c|}\sum_{k\in\mathcal{B}_c^{t+1}}[h_k(\vw^{t+1})]_+\nabla \hat{h}_k(\vw^{t+1})-\frac{1}{|\cB_c|}\sum_{k\in\mathcal{B}_c^{t+1}}[h_k(\vw^{t+1})]_+\nabla h_k(\vw^{t+1})\right\|^2 \right]\\
        & + \theta^2 \beta^2 \E_t\left[\left\| \frac{1}{|\cB_c|}\sum_{k\in\mathcal{B}_c^{t+1}}[h_k(\vw^{t+1})]_+\nabla h_k(\vw^{t+1})-\frac{1}{m}\sum_{k=1}^m[h_k(\vw^{t+1})]_+\nabla h_k(\vw^{t+1})\right\|^2 \right]\\
        \leq & \frac{\theta^2\beta^2 C_h^2(\sigma_{\nabla h}^2+L_h^2)}{\min\{|\cB_c|,|\cB_k|\}}
    \end{split}
\end{equation*}

Combine above inequalities, we can get
\begin{equation*}
\begin{split}
    \E [\Delta_2^{t+1}] \leq & (1-\theta)\E [\Delta_2^{t}] + \frac{2 \beta^2L_H^2}{\theta}\E\left[\|\vw^{t+1}-\vw^t\|^2\right]+5\theta\beta^2\Tilde{C}_{\nabla h}^2\E[\Gamma_{t+1}]\\
    &+\frac{3\beta^2\Tilde{C}_{\nabla h}^2}{m}\E\left[ \sum_{k\in\mathcal{B}_c^{t+1}}\|u_k^{t+1}-u_k^{t}\|^2 \right]
    +\frac{\theta^2\beta^2 C_h^2(\sigma_{\nabla h}^2+L_h^2)}{\min\{|\cB_c|,|\cB_k|\}}.
\end{split}
\end{equation*}
\end{proof}

\begin{lemma}\label{thm:lemma4}
    If $\gamma_2\leq 1/5$, function value variance $\Gamma_t:=\frac{1}{m}\|\textbf{u}^t-\textbf{h}(\vw^t)\|^2$ can be bounded as
    \begin{equation}\label{eqn:lemma4}
        \E [\Gamma_{t+1}]\leq \left(1-\frac{\gamma_2 |\cB_c|}{4m} \right)\E\left[\Gamma_t\right]+\frac{5mL_h^2\E[\|\vw^{t+1}-\vw^t\|^2]}{\gamma |\cB_c|}+\frac{2\gamma_2^2\sigma_h^2 |\cB_c|}{m|\cB_k|}-\frac{1}{4m}\E\left[\sum_{k\in\mathcal{B}_c^{t+1}}\|u_k^{t+1}-u_k^t\|^2\right].
    \end{equation}
\end{lemma}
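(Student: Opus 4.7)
The plan is to mirror the proof of Lemma~\ref{lem:g1} verbatim in structure, replacing the roles played by $g_{1i}$, $\vw$-side mini-batch $\mathcal{B}^{t+1}$, step size $\gamma_1$, Lipschitz constant $L_g$, and variance bound $\sigma_g^2/|\mathcal{B}_{1i}|$ by their constraint-side analogues $h_k$, task mini-batch $\mathcal{B}_c^{t+1}$, $\gamma_2$, $L_h$, and $\sigma_h^2/|\mathcal{B}_k|$ respectively. Concretely, I would introduce the potential $\phi^t(\textbf{u}) := \tfrac{1}{2}\|\textbf{u}-\textbf{h}(\vw^t)\|^2$, expand $\phi^{t+1}(\textbf{u}^{t+1})$ via the identity $\tfrac{1}{2}\|a-c\|^2 = \tfrac{1}{2}\|b-c\|^2+\langle b-c,a-b\rangle+\tfrac{1}{2}\|a-b\|^2$ with $a=\textbf{u}^{t+1}$, $b=\textbf{u}^t$, $c=\textbf{h}(\vw^{t+1})$, and then split the cross-term into the part living on the sampled set $\mathcal{B}_c^{t+1}$ and the rest (which is zero because $u_k^{t+1}=u_k^t$ for $k\notin\mathcal{B}_c^{t+1}$).

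For the sampled indices I would use the update rule $u_k^t-\hat h_k(\vw^{t+1}) = (u_k^t-u_k^{t+1})/\gamma_2$ and insert $\pm g_{k}(\vw^{t+1})$ (here $h_k(\vw^{t+1})$) inside the inner product to decompose it as a deterministic three-point piece plus a stochastic noise piece. Applying $2\langle b-a,a-c\rangle \leq \|b-c\|^2-\|a-b\|^2-\|a-c\|^2$ to the deterministic piece and Young's inequality (with the same free parameter choice that makes the $\|u_k^{t+1}-u_k^t\|^2$ coefficient work out) to the noise piece gives, under $\gamma_2\leq 1/5$,
\begin{equation*}
\tfrac{1}{2}\|\textbf{u}^{t+1}-\textbf{h}(\vw^{t+1})\|^2 \leq \tfrac{1}{2}\|\textbf{u}^t-\textbf{h}(\vw^{t+1})\|^2+\tfrac{1}{2\gamma_2}\!\!\!\sum_{k\in\mathcal{B}_c^{t+1}}\!\!\!\bigl(\|u_k^t-h_k(\vw^{t+1})\|^2-\|u_k^{t+1}-h_k(\vw^{t+1})\|^2\bigr)+\gamma_2\!\!\!\sum_{k\in\mathcal{B}_c^{t+1}}\!\!\!\|\hat h_k(\vw^{t+1})-h_k(\vw^{t+1})\|^2-\tfrac{\gamma_2+1}{8\gamma_2}\!\!\!\sum_{k\in\mathcal{B}_c^{t+1}}\!\!\!\|u_k^{t+1}-u_k^t\|^2.
\end{equation*}

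Next I take $\mathbb{E}_t[\cdot]$ with respect to $\mathcal{B}_c^{t+1}$. Uniform sampling of $|\mathcal{B}_c|$ tasks out of $m$ turns $\mathbb{E}_t\sum_{k\in\mathcal{B}_c^{t+1}}\|u_k^{t+1}-h_k(\vw^{t+1})\|^2$ into $\tfrac{|\mathcal{B}_c|}{m}\|\textbf{u}^{t+1}-\textbf{h}(\vw^{t+1})\|^2$ (combined with the telescoping-in-expectation trick as in Lemma~\ref{lem:g1}) and yields the contraction factor $1-\tfrac{\gamma_2|\mathcal{B}_c|}{2m}$ in front of $\|\textbf{u}^t-\textbf{h}(\vw^{t+1})\|^2$. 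Assumption~\ref{assumption3}(d) bounds the noise term by $\gamma_2|\mathcal{B}_c|\sigma_h^2/|\mathcal{B}_k|$. Dividing through by $\tfrac{\gamma_2+1}{2\gamma_2}$ and using $\tfrac{2}{\gamma_2+1}\leq 2$ cleans up the constants.

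Finally I convert $\|\textbf{u}^t-\textbf{h}(\vw^{t+1})\|^2$ into $\|\textbf{u}^t-\textbf{h}(\vw^t)\|^2$ by Young's inequality with parameter $\tfrac{\gamma_2|\mathcal{B}_c|}{4m}$, using $L_h$-Lipschitzness of each $h_k$ (Assumption~\ref{assumption1}(c)) to bound $\|\textbf{h}(\vw^{t+1})-\textbf{h}(\vw^t)\|^2\leq mL_h^2\|\vw^{t+1}-\vw^t\|^2$, and checking the arithmetic $(1+\tfrac{\gamma_2|\mathcal{B}_c|}{4m})(1-\tfrac{\gamma_2|\mathcal{B}_c|}{2m})\leq 1-\tfrac{\gamma_2|\mathcal{B}_c|}{4m}$ and $(1+\tfrac{4m}{\gamma_2|\mathcal{B}_c|})(1-\tfrac{\gamma_2|\mathcal{B}_c|}{2m})\leq \tfrac{5m}{\gamma_2|\mathcal{B}_c|}$ exactly as in Lemma~\ref{lem:g1}. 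Dividing by $m$ delivers the claim \eqref{eqn:lemma4}. The main obstacle is not conceptual but bookkeeping: getting the $\tfrac{\gamma_2+1}{8\gamma_2}$ coefficient to dominate both the $\tfrac{1}{4\gamma_2}$ term produced by the three-point identity and the $\tfrac{1}{4}$ term produced by Young's inequality on the noise cross-term; this forces the restriction $\gamma_2\leq 1/5$, as in the sibling lemma.
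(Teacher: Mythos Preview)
Your proposal is correct and follows essentially the same approach as the paper's own proof: define the quadratic potential, expand, use the update rule $u_k^t-\hat h_k(\vw^{t+1})=(u_k^t-u_k^{t+1})/\gamma_2$ together with the three-point identity and Young's inequality (which is where $\gamma_2\le 1/5$ enters), take expectation over $\mathcal{B}_c^{t+1}$ and the inner mini-batches, divide by $(\gamma_2+1)/(2\gamma_2)$, and finish with the Young-plus-Lipschitz step to pass from $\textbf{h}(\vw^{t+1})$ to $\textbf{h}(\vw^t)$. The paper's proof is exactly this constraint-side transcription of Lemma~\ref{lem:g1}, so there is nothing to add.
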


\begin{proof}
Define $\psi_k(\textbf{u})=\frac{1}{2}\|\textbf{u}-\textbf{h}(\vw^t)\|^2=\frac{1}{2}\sum^m_{k=1}\|u_k-h_k(\vw^t)\|^2$, which is $1$-strongly convex.
\begin{equation}
\begin{split}
    \psi_{t+1}(\textbf{u}^{t+1})&=\frac{1}{2}\|\textbf{u}^{t+1}-\textbf{h}(\vw^{t+1})\|^2 = \frac{1}{2}\|\textbf{u}^{t}-\textbf{h}(\vw^{t+1})\|^2+\langle \textbf{u}^t-\textbf{h}(\vw^{t+1}),\textbf{u}^{t+1}-\textbf{u}^{t}\rangle+\frac{1}{2}\|\textbf{u}^{t+1}-\textbf{u}^{t} \|^2\\
    & = \frac{1}{2}\|\textbf{u}^{t}-\textbf{h}(\vw^{t+1})\|^2+\sum_{k\in\mathcal{B}_c^{t+1}}\langle u_k^t - \hat{h}_k(\vw^{t+1}),u_k^{t+1}-u_k^t\rangle + \frac{1}{2}\sum_{k\in\mathcal{B}_c^{t+1}}\|u_k^{t+1}-u_k^t\|^2\\
    & \quad + \sum_{k\in\mathcal{B}_c^{t+1}}\langle \hat{h}_k(\vw^{t+1}) - h_k(\vw^{t+1}),u_k^{t+1}-u_k^t\rangle
\end{split}
\end{equation}
Note that $u_k^t-\hat{h}_k(\vw^{t+1})=(q_i^k-q_i^{k+1})/\gamma_2$ and $2\langle b-a, a-c\rangle\leq \|b-c\|^2-\|a-b\|^2-\|a-c\|^2$.
\begin{equation*}
    \begin{split}
        &\sum_{k\in\mathcal{B}_c^{t+1}}\langle u_k^t - \hat{h}_k(\vw^{t+1}),u_k^{t+1}-u_k^t\rangle\\
        & = \sum_{k\in\mathcal{B}_c^{t+1}}\langle u_k^t - \hat{h}_k(\vw^{t+1}),h_k(\vw^{t+1})-u_k^t\rangle+\sum_{k\in\mathcal{B}_c^{t+1}}\langle u_k^t - \hat{h}_k(\vw^{t+1}),u_k^{t+1}-h_k(\vw^{t+1})\rangle\\
        & = \sum_{k\in\mathcal{B}_c^{t+1}}\langle u_k^t - \hat{h}_k(\vw^{t+1}),h_k(\vw^{t+1})-u_k^t\rangle+\frac{1}{\gamma_2}\sum_{k\in\mathcal{B}_c^{t+1}}\langle u_k^t - u_k^{t+1},u_k^{t+1}-h_k(\vw^{t+1})\rangle\\
        &\leq \sum_{k\in\mathcal{B}_c^{t+1}}\langle u_k^t - \hat{h}_k(\vw^{t+1}),h_k(\vw^{t+1})-u_k^t\rangle\\
        &\quad +\frac{1}{2\gamma_2}\sum_{k\in\mathcal{B}_c^{t+1}}\left(\|u_k^t-h_k(\vw^{t+1})\|^2-\|u_k^{t+1}-u_k^t\|^2-\|u_k^{t+1}-h_k(\vw^{t+1})\|^2\right)
    \end{split}
\end{equation*}
If $\gamma_2 \leq \frac{1}{5}$, we have
\begin{equation*}
    \begin{split}
        &-\frac{1}{2}\left( \frac{1}{\gamma_2}-1-\frac{\gamma_2+1}{4\gamma_2} \right)\sum_{k\in\mathcal{B}_c^{t+1}}\|u_k^{t+1}-u_k^t\|^2 + \sum_{k\in\mathcal{B}_c^{t+1}}\langle \hat{h}_k(\vw^{t+1}) - h_k(\vw^{t+1}),u_k^{t+1}-u_k^t\rangle\\
       \leq & -\frac{1}{4\gamma_2} \sum_{k\in\mathcal{B}_c^{t+1}}\|u_k^{t+1}-u_k^t\|^2+\gamma_2 \sum_{k\in\mathcal{B}_c^{t+1}}\left\|\hat{h}_k(\vw^{t+1}) - h_k(\vw^{t+1})\right\|^2+\frac{1}{4\gamma_2}\sum_{k\in\mathcal{B}_c^{t+1}}\|u_k^{t+1}-u_k^t\|^2\\
       = & \gamma_2 \sum_{k\in\mathcal{B}_c^{t+1}}\left\|\hat{h}_k(\vw^{t+1}) - h_k(\vw^{t+1})\right\|^2.
    \end{split}
\end{equation*}
Then we can get
\begin{equation*}
    \begin{split}
        \frac{1}{2}\|\textbf{u}^{t+1}-\textbf{h}(\vw^{t+1})\|^2 \leq & \frac{1}{2}\|\textbf{u}^{t}-\textbf{h}(\vw^{t+1})\|^2+\frac{1}{2\gamma_2}\sum_{k\in\mathcal{B}_c^{t+1}}\|u_k^t-h_k(\vw^{t+1})\|^2-\frac{1}{2\gamma_2}\sum_{k\in\mathcal{B}_c^{t+1}}\|u_k^{t+1}-h_k(\vw^{t+1})\|^2\\
        & +\gamma_2 \sum_{k\in\mathcal{B}_c^{t+1}}\left\|\hat{h}_k(\vw^{t+1}) - h_k(\vw^{t+1})\right\|^2-\frac{\gamma_2+1}{8\gamma_2}\sum_{k\in\mathcal{B}_c^{t+1}}\|u_k^{t+1}-u_k^t\|^2\\
        & +  \sum_{k\in\mathcal{B}_2^{t+1}}\langle u_k^t - \hat{h}_k(\vw^{t+1}),h_k(\vw^{t+1})-u_k^t\rangle.
    \end{split}
\end{equation*}
Note that $\frac{1}{2\gamma_2}\sum_{k\notin\mathcal{B}_c^{t+1}}\|u_k^t-h_k(\vw^{t+1})\|^2=\frac{1}{2\gamma_2}\sum_{k\notin\mathcal{B}_c^{t+1}}\|u_k^{t+1}-h_k(\vw^{t+1})\|^2$, which implies that 
\begin{equation*}
    \frac{1}{2\gamma_2}\sum_{k\in\mathcal{B}_c^{t+1}}\left(\|u_k^t-h_k(\vw^{t+1})\|^2 - \|u_k^{t+1}-h_k(\vw^{t+1})\|^2\right)=\frac{1}{2\gamma_2}\left(\|\textbf{u}^t-\textbf{h}(\vw^{t+1})\|^2-\|\textbf{u}^{t+1}-\textbf{h}(\vw^{t+1})\|^2\right).
\end{equation*}
Besides, we also have $\E\left[\sum_{k\in\mathcal{B}_c^{t+1}}\left\|\hat{h}_k(\vw^{t+1}) - h_k(\vw^{t+1})\right\|^2  \right]\leq \frac{|\cB_c|\sigma_h^2}{|\cB_k|}$ and
\begin{equation*}
    \begin{split}
        \E\left[ \sum_{k\in\mathcal{B}_c^{t+1}}\langle u_k^t - \hat{h}_k(\vw^{t+1}),h_k(\vw^{t+1})-u_k^t\rangle \right]&=\frac{|\cB_c|}{m}\sum_{k=1}^m\langle u_k^t - h_k(\vw^{t+1}),h_k(\vw^{t+1})-u_k^t\rangle\\
        & = -\frac{|\cB_c|}{m}\|\textbf{u}^{t}-\textbf{h}(\vw^{t+1})\|^2.
    \end{split}
\end{equation*}
Then we can obtain
\begin{equation*}
    \begin{split}
        &\left(\frac{1}{2}+\frac{1}{2\gamma_2}\right)\E\left[ \|\textbf{u}^{t+1}-\textbf{h}(\vw^{t+1})\|^2 \right]\\
        \leq & \left(\frac{1}{2}+\frac{1}{2\gamma_2}-\frac{|\cB_c|}{m}\right)\E\left[ \|\textbf{u}^{t}-\textbf{h}(\vw^{t+1})\|^2 \right]+\frac{\gamma_2 |\cB_c|\sigma_h^2}{|\cB_k|}-\frac{\gamma_2+1}{8\gamma_2}\E\left[\sum_{k\in\mathcal{B}_c^{t+1}}\|u_k^{t+1}-u_k^t\|^2\right].
    \end{split}
\end{equation*}
Divide both sides by $\frac{\gamma_2+1}{2\gamma_2}$ we can get
\begin{equation*}
    \begin{split}
        \E\left[ \|\textbf{u}^{t+1}-\textbf{h}(\vw^{t+1})\|^2 \right]
        \leq  \frac{\gamma_2+1-2\gamma_2\frac{|\cB_c|}{m}}{\gamma_2+1}\E\left[ \|\textbf{u}^{t}-\textbf{h}(\vw^{t+1})\|^2 \right]&+\frac{2}{\gamma_2+1}\frac{\gamma_2^2 |\cB_c|\sigma_h^2}{|\cB_k|}\\
        &-\frac{1}{4}\E\left[\sum_{k\in\mathcal{B}_c^{t+1}}\|u_k^{t+1}-u_k^t\|^2\right].
    \end{split}
\end{equation*}
Note that $\frac{\gamma_2+1-2\gamma_2\frac{|\cB_c|}{m}}{\gamma_2+1}\leq \frac{\gamma_2(1-\frac{|\cB_c|}{m})+1}{\gamma_2+1}=1-\frac{\gamma_2 |\cB_c|}{(\gamma_2+1)m}\leq 1-\frac{\gamma_2 |\cB_c|}{2m} $ and $\frac{1}{\gamma_2+1}\leq 1$ for $\gamma_2\in (0,1]$. Besides, we have $\|\textbf{u}^{t}-\textbf{h}(\vw^{t+1})\|^2\leq (1+\frac{\gamma_2 |\cB_c|}{4m})\|\textbf{u}^{t}-\textbf{h}(\vw^{t})\|^2+(1+\frac{4m}{\gamma_2 |\cB_c|})\|\textbf{h}(\vw^{t+1})-\textbf{h}(\vw^{t})\|^2$ due to Young's inequality, $(1+\frac{\gamma_2 |\cB_c|}{4m})(1-\frac{\gamma_2 |\cB_c|}{2m})\leq (1-\frac{\gamma_2 |\cB_c|}{4m})$ and $(1+\frac{4m}{\gamma_2 |\cB_c|})(1-\frac{\gamma_2 |\cB_c|}{2m})\leq \frac{5m}{\gamma_2 |\cB_c|}$.
\begin{equation*}
    \begin{split}
        &\E\left[ \Gamma_{t+1} \right]=\E\left[\frac{1}{m} \|\textbf{u}^{t+1}-\textbf{h}(\vw^{t+1})\|^2 \right]\\
        &\leq \left(1-\frac{\gamma_2 |\cB_c|}{4m} \right)\E\left[\frac{1}{m} \|\textbf{u}^{t}-\textbf{h}(\vw^{t})\|^2 \right]+\frac{5mL_h^2\|\vw^{t+1}-\vw^t\|^2}{\gamma_2 |\cB_c|}+\frac{2\gamma_2^2\sigma_h^2 |\cB_c|}{m|\cB_k|}-\frac{1}{4m}\E\left[\sum_{k\in\mathcal{B}_c^{t+1}}\|u_k^{t+1}-u_k^t\|^2\right]\\
        & = \left(1-\frac{\gamma_2 |\cB_c|}{4m} \right)\E\left[\Gamma_t\right]+\frac{5mL_h^2\E[\|\vw^{t+1}-\vw^t\|^2]}{\gamma_2 |\cB_c|}+\frac{2\gamma_2^2\sigma_h^2 |\cB_c|}{m|\cB_k|}-\frac{1}{4m}\E\left[\sum_{k\in\mathcal{B}_c^{t+1}}\|u_k^{t+1}-u_k^t\|^2\right]
    \end{split}
\end{equation*}

\end{proof}

We state the main theorem again for convenience and present the proof.
\begin{thm*}
    Suppose Assumptions~\ref{assumption1}, \ref{assumption2}, \ref{assumption3} and \ref{ass:full_rank} hold, and set $\beta=\frac{1}{\epsilon\delta}$,\\ $\theta=\min\{\frac{\epsilon^4\delta^2\min\{|\cB_k|,|\cB_c|\}}{672(\sigma_{\nabla h}^2+L_h^2)} ,\frac{\epsilon^2\min\{|\cB|,|\cB_{1i}|,|\cB_{2i}|\}}{1344L_f^2(\sigma_{\nabla g}^2+L_g^2)}\}$, $\gamma_1=\gamma_2 = \min\{\frac{5n_0\theta}{3|\cB|}, \frac{5m\theta}{3|\cB_c|},\frac{\epsilon^4\delta^2|\cB_k|}{26880 \sigma_h^2 \Tilde{C}_{\nabla h}^2}\}$ and \\$\eta = \min \left\{ \frac{1}{12(L_F+\beta L_H)}, \frac{\theta}{8\sqrt{3}L_F}, \frac{\theta}{8\sqrt{3}L_H \beta},\frac{\gamma_1 |\cB|}{40\sqrt{6}L_g L_f \Tilde{C}_{\nabla g} n_0},\frac{\gamma_2 |\cB_c|}{40\sqrt{6}\beta L_h\Tilde{C}_{\nabla h} m} \right\}$. Then there exists $\boldsymbol{\lambda}$ such that
    \begin{equation*}
        \begin{split}
           &\mathbb{E}\left[ \|\nabla F(\vw^{\hat{t}})+\nabla \textbf{h}(\vw^{\hat{t}})\boldsymbol{\lambda})\| \right]\leq \epsilon\\
    & \mathbb{E} [\|[\textbf{h}(\vw^{\hat{t}})]_+\|] \leq \epsilon\\
    & \mathbb{E} [\boldsymbol{\lambda}^\top [\textbf{h}(\vw^{\hat{t}})]_+]\leq \epsilon
        \end{split}
    \end{equation*}
with number of iterations $T$ of Algorithm~\ref{alg:clip_class} bounded by $O(\epsilon^{-7}\delta^{-3})$ and $\hat{t}$ selected uniformly at random from $\{1,\cdots,T\}$.
\end{thm*}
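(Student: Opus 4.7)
The plan is to combine the four preceding variance lemmas with a smoothness-based descent inequality on the penalized objective $\Phi(\vw)=F(\vw,\cD)+H(\vw)$, package everything into a single Lyapunov function, and convert the resulting approximate stationarity of $\Phi$ into an $\epsilon$-KKT point of \eqref{eq:CLIP_train} via Assumption~\ref{ass:full_rank}. First, since $\Phi$ is $L_\beta$-smooth with $L_\beta=L_F+\beta L_H$, the update $\vw^{t+1}=\vw^t-\eta v^{t+1}$ and standard Young's inequality manipulations, together with the split $\|v^{t+1}-\nabla\Phi(\vw^t)\|^2\le 2\Delta_1^{t+1}+2\Delta_2^{t+1}$, yield a descent bound of the form
\begin{equation*}
\mathbb{E}[\Phi(\vw^{t+1})]\le \mathbb{E}[\Phi(\vw^t)]-\tfrac{\eta}{2}\mathbb{E}[\|\nabla\Phi(\vw^t)\|^2]+\eta\,\mathbb{E}[\Delta_1^{t+1}+\Delta_2^{t+1}]+L_\beta\eta^2\,\mathbb{E}[\|v^{t+1}\|^2].
\end{equation*}

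Next, I would form a Lyapunov function $\mathcal{P}^t:=\Phi(\vw^t)+c_1\Delta_1^t+c_2\Delta_2^t+c_3\Xi_1^t+c_4\Xi_2^t+c_5\Gamma^t$ with positive weights $c_1,\ldots,c_5$ to be tuned in terms of $\beta,L_f,L_g,L_h,\tilde C_{\nabla g},\tilde C_{\nabla h}$. Plugging \eqref{eqn:lemma1}, \eqref{eqn:lemma2}, \eqref{eqn:lemma_g2}, \eqref{eqn:lemma3}, \eqref{eqn:lemma4} into this potential, the two key cancellations are: (i) each $\mathbb{E}\|\vw^{t+1}-\vw^t\|^2=\eta^2\mathbb{E}\|v^{t+1}\|^2$ contribution coming from the various lemmas is absorbed into the descent term $-(\eta/2)\mathbb{E}\|\nabla\Phi\|^2$ (using $\|v^{t+1}\|^2\le 2\|\nabla\Phi(\vw^t)\|^2+2(\Delta_1^{t+1}+\Delta_2^{t+1})$) exactly once $\eta$ satisfies each of the upper bounds listed in the theorem, and (ii) the negative terms $-\tfrac{1}{4n_0}\sum_{i\in\mathcal{B}^{t+1}}\|u_{1i}^{t+1}-u_{1i}^t\|^2$, its analogue for $u_{2i}$, and $-\tfrac{1}{4m}\sum_{k\in\mathcal{B}_c^{t+1}}\|u_k^{t+1}-u_k^t\|^2$ cancel the positive $\|u^{t+1}-u^t\|^2$ terms appearing in the $\Delta_1^t,\Delta_2^t$ recursions, provided $c_3,c_4$ dominate $L_f^2\tilde C_{\nabla g}^2$ and $c_5$ dominates $\beta^2\tilde C_{\nabla h}^2$. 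The result is a one-step inequality $\mathbb{E}[\mathcal{P}^{t+1}-\mathcal{P}^t]\le -\tfrac{\eta}{4}\mathbb{E}\|\nabla\Phi(\vw^t)\|^2+\mathrm{noise}(\theta,\gamma_1,\gamma_2)$, where the noise is the sum of the last (variance) terms in \eqref{eqn:lemma1}--\eqref{eqn:lemma4}. Telescoping from $t=0$ to $T-1$, using Assumption~\ref{assumption2} so that $\mathcal{P}^0$ is finite and $\Phi$ bounded below, gives
\begin{equation*}
\frac{1}{T}\sum_{t=0}^{T-1}\mathbb{E}[\|\nabla\Phi(\vw^t)\|^2]=O\!\Big(\tfrac{1}{\eta T}\Big)+\mathrm{noise}/\eta,
\end{equation*}
and the prescribed $\theta,\gamma_1,\gamma_2$ drive the noise below $O(\epsilon^2\delta^2)$ so that sampling $\hat t$ uniformly gives $\mathbb{E}[\|\nabla\Phi(\vw^{\hat t})\|]\le O(\epsilon\delta)$ after $T=\Omega(1/(\eta\epsilon^2\delta^2))$ iterations.

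Finally, I would extract a KKT pair by taking the penalty-induced multipliers $\lambda_k:=\beta[h_k(\vw^{\hat t})]_+\ge 0$. These satisfy $\nabla F(\vw^{\hat t},\cD)+\nabla\textbf{h}(\vw^{\hat t})\boldsymbol{\lambda}=\nabla\Phi(\vw^{\hat t})$, so the stationarity condition of Definition~\ref{def:KKT} follows directly from the stationarity of $\Phi$. For approximate feasibility, Assumption~\ref{ass:full_rank} combined with $\nabla H(\vw)=(\beta/m)\nabla\textbf{h}(\vw)[\textbf{h}(\vw)]_+$ yields $\|\nabla H(\vw^{\hat t})\|\ge(\beta\delta/m)\|[\textbf{h}(\vw^{\hat t})]_+\|$, hence $\|[\textbf{h}(\vw^{\hat t})]_+\|\le(m/(\beta\delta))(\|\nabla\Phi(\vw^{\hat t})\|+\|\nabla F(\vw^{\hat t},\cD)\|)$; with $\beta=1/(\epsilon\delta)$ and $\|\nabla F\|=O(L_fL_g)$ bounded, this is $O(\epsilon)$. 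Complementary slackness $\boldsymbol{\lambda}^\top[\textbf{h}]_+=\beta\|[\textbf{h}]_+\|^2$ is controlled by the same bound (with a refined accounting of the $\|\nabla\Phi(\vw^{\hat t})\|^2$ term, which is $O(\epsilon^2\delta^2)$, giving $\beta\cdot O(m^2/(\beta^2\delta^2))\cdot\epsilon^2\delta^2=O(\epsilon)$). Back-substituting $\eta=\Theta(\epsilon\delta)$ and the required $T=\Omega(1/(\eta\epsilon^2\delta^2))$ yields $T=O(\epsilon^{-7}\delta^{-3})$.

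The hardest step is the simultaneous parameter tuning. The five coupled recursions must all be made contractive with a single $(\eta,\theta,\gamma_1,\gamma_2)$ budget while the penalty weight $\beta=\Theta(1/(\epsilon\delta))$ is large enough that stationarity of $\Phi$ forces $\epsilon$-level feasibility of the original problem; verifying that the $u$-difference cancellations survive after scaling $c_5$ by $\beta^2$, and that all three KKT measures collapse to $O(\epsilon)$ under the same choice, is the delicate bookkeeping that pins down the non-standard complexity $O(\epsilon^{-7}\delta^{-3})$.
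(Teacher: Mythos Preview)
Your overall architecture is exactly the paper's: build a Lyapunov potential combining $\Phi$, the two gradient-tracking errors $\Delta_1^t,\Delta_2^t$, the inner estimators $\Xi_1^t,\Xi_2^t,\Gamma^t$; telescope; then read off an $\epsilon$-KKT pair via the penalty multipliers $\boldsymbol{\lambda}=\tfrac{\beta}{m}[\textbf{h}(\vw^{\hat t})]_+$ and Assumption~\ref{ass:full_rank}. The cancellation mechanism you describe (the negative $\|u^{t+1}-u^t\|^2$ terms from the $\Xi$- and $\Gamma$-lemmas absorbing the positive ones in the $\Delta$-lemmas) is precisely how the paper chooses its potential weights.

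Where your sketch goes wrong is the $\epsilon,\delta$ bookkeeping, and the errors compound rather than cancel. First, the prescribed parameters do \emph{not} drive the residual noise to $O(\epsilon^2\delta^2)$; they drive it only to $O(\epsilon^2)$. For instance the dominant term $\theta\beta^2(\sigma_{\nabla h}^2+L_h^2)/\min\{|\cB_c|,|\cB_k|\}$ with $\theta=O(\epsilon^4\delta^2)$ and $\beta^2=\epsilon^{-2}\delta^{-2}$ is $O(\epsilon^2)$, not $O(\epsilon^2\delta^2)$. Consequently you only get $\mathbb{E}\|\nabla\Phi(\vw^{\hat t})\|\le O(\epsilon)$, which is all that is needed (stationarity is $\nabla\Phi$ itself, and feasibility is driven by $\beta=1/(\epsilon\delta)$ together with the \emph{boundedness} of $\|\nabla F\|$, not by a small $\|\nabla\Phi\|$). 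Second, $\eta$ is \emph{not} $\Theta(\epsilon\delta)$: the binding constraint $\eta\le \theta/(8\sqrt{3}L_H\beta)$ gives $\eta=O(\epsilon^4\delta^2\cdot\epsilon\delta)=O(\epsilon^5\delta^3)$. Third, with noise $O(\epsilon^2)$ the iteration budget is $T=\Omega(1/(\eta\epsilon^2))=\Omega(\epsilon^{-7}\delta^{-3})$; your expression $T=\Omega(1/(\eta\epsilon^2\delta^2))$ with $\eta=\Theta(\epsilon\delta)$ would give $\epsilon^{-3}\delta^{-3}$, not $\epsilon^{-7}\delta^{-3}$, so your final line reaches the right complexity only by an arithmetic slip. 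Finally, because $H$ carries a $1/m$, the correct multipliers are $\boldsymbol{\lambda}=\tfrac{\beta}{m}[\textbf{h}]_+$ (otherwise $\nabla F+\nabla\textbf{h}\boldsymbol{\lambda}\neq\nabla\Phi$).
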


\begin{proof}
    Since $\Phi(\vw)$ is $L_{\beta}$-smooth with $L_{\beta}= L_F + \beta L_H$ where $L_F := 2(L_{\nabla g}L_f+L_{\nabla f}L_g^2)$ and $L_H := L_{\nabla h} C_h + L_h C_{\nabla h}$, we have
\begin{equation}
\begin{split}
    \Phi(\vw^{t+1})&\leq \Phi(\vw^{t})+\langle \nabla \Phi(\vw^{t}), \vw^{t+1}-\vw^t \rangle + \frac{L_{\beta}}{2}\|\vw^{t+1}-
    \vw^t\|^2\\
    & = \Phi(\vw^{t})+\langle v^t, \vw^{t+1}-\vw^t \rangle+\langle \nabla \Phi(\vw^{t})-v^t, \vw^{t+1}-\vw^t \rangle + \frac{L_{\beta}}{2}\|\vw^{t+1}-
    \vw^t\|^2\\
    & \leq \Phi(\vw^{t})+\langle v^t, \vw^{t+1}-\vw^t \rangle + \left(\frac{L_{\beta}}{2}+\frac{1}{4\eta}\right)\|\vw^{t+1}-\vw^t\|^2 + \eta\|\nabla \Phi(\vw^{t})-v^t\|^2.
\end{split}
\end{equation}
Since $\vw^{t+1}=\vw^t-\eta v^t$, which is equivalent to $\vw^{t+1}=\argmin_{\vw} \langle v^t,\vw \rangle+\frac{1}{2\eta}\|\vw-\vw^t\|^2$, we have
\begin{equation}
    \langle v^t,\vw^{t+1}-\vw^t\rangle \leq -\frac{1}{2\eta}\|\vw^{t+1}-\vw^t\|^2.
\end{equation}
Then we can get
\begin{equation}
    \begin{split}
        \Phi(\vw^{t+1})&\leq \Phi(\vw^{t})+\left(\frac{L_{\beta}}{2}-\frac{1}{4\eta}\right)\|\vw^{t+1}-\vw^t\|^2 + \eta\|\nabla \Phi(\vw^{t})-v^t\|^2
    \end{split}
\end{equation}
\begin{equation}
    \begin{split}
        \Phi(\vw^{t+1})&\leq\Phi(\vw^{t})+\left(\frac{L_{\beta}}{2}-\frac{1}{4\eta}\right)\|\vw^{t+1}-\vw^t\|^2 + 2\eta\|\nabla \Phi(\vw^{t})-v^t\|^2 -\eta\|\nabla \Phi(\vw^{t})-v^t\|^2
    \end{split}
\end{equation}
\begin{equation}\label{eqn:total}
    \begin{split}
        \eta\|\nabla \Phi(\vw^{t})-v^t\|^2\leq \Phi(\vw^{t})-\Phi(\vw^{t+1})+\left(\frac{L_{\beta}}{2}-\frac{1}{4\eta}\right)\|\vw^{t+1}-\vw^t\|^2 + 2\eta\|\nabla \Phi(\vw^{t})-v^t\|^2. 
    \end{split}
\end{equation}

Then we want to bound $\mathbb{E}\|\nabla \Phi(\vw^{t})-v^t\|^2$.

\begin{equation}
\begin{split}
    \|\nabla \Phi(\vw^{t})-v^t\|^2 &= \| (1-\theta)(v_1^{t-1}+v_2^{t-1})+\theta(G_1^t+G_2^t)-\nabla \Phi(\vw^{t}) \|^2\\
    & = \| (1-\theta)v_1^{t-1}+\theta G_1^t-\nabla F(\vw^{t})+ (1-\theta)v_2^{t-1}+\theta G_2^t-\nabla H(\vw^{t})\|^2\\
    & =\| v_1^t - \nabla F(\vw^{t}) +  v_2^t - \nabla H(\vw^{t}) \|^2
\end{split}
\end{equation}
Since $ \E_t \left[\langle v_1^t - \nabla F(\vw^{t}), v_2^t - \nabla H(\vw^{t})\rangle\right] = 0 $, we have
\begin{equation}
    \E_t\|\nabla \Phi(\vw_{t+1})-v_{t+1}\|^2 = \E_t\| v_1^{t+1} - \nabla F(\vw^{t+1})\|^2+\E_t\|v_2^{t+1} - \nabla H(\vw^{t+1}) \|^2
\end{equation}

Summing (\ref{eqn:lemma1}), $\frac{20\theta L_f^2\Tilde{C}_{\nabla g}^2 n_0}{\gamma_1 |\cB|}\times$(\ref{eqn:lemma2}) and $\frac{20\theta L_f^2\Tilde{C}_{\nabla g}^2 n_0}{\gamma_1 |\cB|}\times$(\ref{eqn:lemma_g2}), we can get
\begin{equation}\label{eqn:delta_1}
    \begin{split}
        \mathbb{E}[\Delta_1^{t+1}]\leq& (1-\theta)\mathbb{E}[\Delta_1^{t}]+\left( \frac{2L_{F}^2}{\theta}+\frac{100\theta L_g^2 L_f^2 \Tilde{C}_{\nabla g}^2 n_0^2}{\gamma_1^2 |\cB|^2} \right)\E[\|\vw^{t+1}-\vw^t\|^2]\\
        & + \frac{20\theta L_f^2 \Tilde{C}_{\nabla g}^2n_0}{\gamma_1 |\cB|}\left( 1-\frac{\gamma_1 |\cB|}{4n_0} \right) \E\left[\Xi_1^t+\Xi_2^t - \Xi^{t+1}_1-\Xi^{t+1}_2\right]\\
        & - L_f^2 \Tilde{C}_{\nabla g}^2\left(\frac{5\theta n_0}{\gamma_1 |\cB|}-3 \right)\mathbb{E}\left[\frac{1}{n_0}\sum_{i\in \mathcal{B}^{t+1}}\left\|u_{1i}^{t+1}-u_{1i}^t\right\|^2+\left\|u_{2i}^{t+1}-u_{2i}^t\right\|^2\right]\\
        & + \frac{2\theta^2 L_f^2(\sigma_{\nabla g}^2+L_g^2)}{\text{min}\{|\cB|,|\cB_{1i}|,|\cB_{2i}|\}}+\frac{80\theta\gamma_1\sigma_g^2 L_f^2 \Tilde{C}_{\nabla g}^2}{\min\{|\cB_{1i}|,|\cB_{2i}|\}}
    \end{split}
\end{equation}
Summing (\ref{eqn:lemma3}) and $\frac{20\theta\beta^2\Tilde{C}_{\nabla h}^2 m}{\gamma_2 |\cB_c|}\times$(\ref{eqn:lemma4}), we can get
\begin{equation}\label{eqn:delta_2}
    \begin{split}
        \mathbb{E}[\Delta_2^{t+1}]\leq& (1-\theta)\mathbb{E}[\Delta_2^{t}]+\left( \frac{2\beta^2L_H^2}{\theta}+\frac{100\theta \beta^2L_h^2 \Tilde{C}_{\nabla h}^2 m^2}{\gamma_2^2|\cB_c|^2} \right)\E[\|\vw^{t+1}-\vw^t\|^2]\\
        & + \frac{20\theta \beta^2 \Tilde{C}_{\nabla h}^2m}{\gamma_2 |\cB_c|}\left( 1-\frac{\gamma_2 |\cB_c|}{4m} \right) \E\left[\Gamma_t - \Gamma_{t+1}\right]\\
        & - \beta^2 \Tilde{C}_{\nabla g}^2\left(\frac{5\theta m}{\gamma_2 |\cB_c|}-3 \right)\mathbb{E}\left[\frac{1}{m}\sum_{k\in \mathcal{B}_c^{t+1}}\left\|u_k^{t+1}-u_k^t\right\|^2\right]\\
        & + \frac{\theta^2 \beta^2(\sigma_{\nabla h}^2+L_h^2)}{\text{min}\{|\cB_c|,|\cB_k|\}}+\frac{40\theta\gamma_2\beta^2\sigma_h^2 \Tilde{C}_{\nabla h}^2}{|\cB_k|}
    \end{split}
\end{equation}
Summing (\ref{eqn:total}), $\frac{4\eta}{\theta}\times$(\ref{eqn:delta_1}) and $\frac{4\eta}{\theta}\times$(\ref{eqn:delta_2}), let $\gamma_1=\gamma_2=\gamma \leq \min\{\frac{5n_0\theta}{3|\cB|}, \frac{5m\theta}{3|\cB_c|}\}$, we can get $\frac{5\theta n_0}{\gamma |\cB|}- 3\geq 0$, $\frac{5\theta m}{\gamma |\cB_c|}- 3\geq 0$ and
\begin{equation}
    \begin{split}
        &\eta\E \|\nabla \Phi(\vw^t)-v_t\|^2\\
        \leq &\E \left[Y_t - Y_{t+1} \right]\\
        & -\left(\frac{1}{4\eta}-\frac{L_F+\beta L_H}{2}-\frac{8\eta L_F^2}{\theta^2}-\frac{400\eta L_g^2 L_f^2 \Tilde{C}_{\nabla g}^2 n_0^2}{\gamma^2 |\cB|^2}-\frac{8\eta \beta^2L_H^2}{\theta^2}-\frac{400\eta\beta^2L_h^2\Tilde{C}_{\nabla h}^2 m^2}{\gamma^2|\cB_c|^2}\right)\E\left[\|\vw^{t+1}-\vw^t\|^2\right]\\
        & + \frac{8\eta\theta L_f^2(\sigma_{\nabla g}^2+L_g^2)}{\text{min}\{|\cB|,|\cB_{1i}|,|\cB_{2i}|\}}+\frac{320\eta\gamma\sigma_g^2 L_f^2 \Tilde{C}_{\nabla g}^2}{\min\{|\cB_{1i}|,|\cB_{2i}|\}}+\frac{4\eta\theta \beta^2(\sigma_{\nabla h}^2+L_h^2)}{\text{min}\{|\cB_c|,|\cB_k|\}}+\frac{160\eta\gamma\beta^2\sigma_h^2 \Tilde{C}_{\nabla h}^2}{|\cB_k|}
    \end{split}
\end{equation}
where
\begin{equation*}
    Y_{t+1} = \Phi(\vw^{t+1})+\frac{4\eta}{\theta}\|\nabla \Phi(\vw^{t+1})-v^{t+1}\|^2+\frac{80\eta L_f^2 \Tilde{C}_{\nabla g}^2n_0}{\gamma |\cB|}(\Xi^{t+1}_1+\Xi^{t+1}_2)+\frac{80\eta \beta^2 \Tilde{C}_{\nabla h}^2m}{\gamma |\cB_c|}\Gamma_{t+1}.
\end{equation*}
If $\eta = \min \left\{ \frac{1}{12(L_F+\beta L_H)}, \frac{\theta}{8\sqrt{3}L_F}, \frac{\theta}{8\sqrt{3}L_H \beta},\frac{\gamma |\cB|}{40\sqrt{6}L_g L_f \Tilde{C}_{\nabla g} n_0},\frac{\gamma |\cB_c|}{40\sqrt{6}\beta L_h\Tilde{C}_{\nabla h} m} \right\}$, we have
\begin{equation}
    \begin{split}
        &\frac{\eta}{24}\E\left[ \eta^{-2}\|\vw^{t+1}-\vw^t\|^2+\|\nabla \Phi(\vw^t)-v_t\|^2 \right]\\
        \leq &\E \left[Y_t - Y_{t+1}\right]
        + \frac{8\eta\theta L_f^2(\sigma_{\nabla g}^2+L_g^2)}{\text{min}\{|\cB|,|\cB_{1i}|,|\cB_{2i}|\}}+\frac{320\eta\gamma\sigma_g^2 L_f^2 \Tilde{C}_{\nabla g}^2}{\min\{|\cB_{1i}|,|\cB_{2i}|\}}+\frac{4\eta\theta \beta^2(\sigma_{\nabla h}^2+L_h^2)}{\text{min}\{|\cB_c|,|\cB_k||\}}+\frac{160\eta\gamma\beta^2\sigma_h^2 \Tilde{C}_{\nabla h}^2}{|\cB_k|}.
    \end{split}
\end{equation}
Dividing both sides by $\frac{\eta}{24}$ and taking the average over $T$ we can get
\begin{equation}
\begin{split}
    &\frac{1}{T}\sum^{T-1}_{t=0}\E\left[ \eta^{-2}\|\vw^{t+1}-\vw^t\|^2+\|\nabla \Phi(\vw^t)-v^t\|^2 \right]\\
    \leq &\frac{24\E[Y_0]}{\eta T}+ \frac{192\theta L_f^2(\sigma_{\nabla g}^2+L_g^2)}{\text{min}\{|\cB|,|\cB_{1i}|,|\cB_{2i}|\}}+\frac{7680\gamma\sigma_g^2 L_f^2 \Tilde{C}_{\nabla g}^2}{\min\{|\cB_{1i}|,|\cB_{2i}|\}}+\frac{96\theta \beta^2(\sigma_{\nabla h}^2+L_h^2)}{\text{min}\{|\cB_c|,|\cB_k|\}}+\frac{3840\gamma\beta^2\sigma_h^2 \Tilde{C}_{\nabla h}^2}{|\cB_k|}.
\end{split}
\end{equation}
\begin{equation*}
\begin{split}
    Y_0 & = \Phi(\vw^0)+\frac{4\eta}{\theta}\|\nabla \Phi(\bw^0)-v^{0}\|^2+\frac{80\eta L_f^2 \Tilde{C}_{\nabla g}^2n_0}{\gamma |\cB|}(\Xi_1^{0}+\Xi_2^{0})+\frac{80\eta \beta^2 \Tilde{C}_{\nabla h}^2m}{\gamma |\cB_c|}\Gamma_{0}\\
    & = \Phi(\bw^0)+\frac{4\eta}{\theta}\|\nabla \Phi(\bw^0)-v^{0}\|^2+\frac{80\eta L_f^2 \Tilde{C}_{\nabla g}^2}{\gamma |\cB|}(\|\textbf{u}_1^0-\textbf{g}_1(\bw^0)\|^2+\|\textbf{u}_2^0-\textbf{g}_2(\bw^0)\|^2)\\
    &\quad \quad \quad +\frac{80\eta \beta^2 \Tilde{C}_{\nabla h}^2}{\gamma |\cB_c|}\|\textbf{u}^0-\textbf{h}(\bw^0)\|^2.
\end{split}
\end{equation*}
Since $\bw^0$ is a feasible solution, we have $\Phi(\bw^0)=\frac{1}{n_0}\sum_{i=1}^{n_0} f(g_i(\vw^0))$. Since $g$ is bounded by Assumption \ref{assumption1} and $f$ is Lipschitz continuous, we can show that there exists a constant $C_F:=\max\{\tau|\log(c_g^2)|, \tau|\log(C_g^2)|\}$ such that $|F(\w,\cD)|\leq C_F$. We assume that $u^0_k=h_k(\bw^0)$, $u^0_{1i} = \hat{g}_{1i}(\bw^0)$, $u^0_{2i} = \hat{g}_{2i}(\bw^0)$ and $v^0= \frac{1}{n_0}\sum_{i=1}^{n_0}(\nabla \hat{g}_{1i}(\vw^0)^\top\nabla f(\hat{g}_{1i}(\vw^0)+\nabla \hat{g}_{2i}(\vw^0)^\top\nabla f(\hat{g}_{2i}(\vw^0))$, we can get 
\begin{equation}
    \begin{split}
        \E[\|\nabla \Phi(\bw^0)-v^{0}\|^2]&\leq 2(C_{\nabla g}^2+\sigma_{\nabla g}^2)C_{\nabla f}^2\\
        \E[\|\textbf{u}_1^0-\textbf{g}_1(\bw^0)\|^2]&\leq \sigma_g^2\\
        \E[\|\textbf{u}_2^0-\textbf{g}_2(\bw^0)\|^2]&\leq \sigma_g^2\\
        \E[\|\textbf{u}^0-\textbf{h}(\bw^0)\|^2] &= 0
    \end{split}
\end{equation}
Therefore, we can get
\begin{equation}
    \begin{split}
        &\frac{1}{T}\sum^{T-1}_{t=0}\E\left[ \eta^{-2}\|\vw^{t+1}-\vw^t\|^2+\|\nabla \Phi(\vw^t)-v^t\|^2 \right]\\
    \leq &\frac{24C_F}{\eta T}+\frac{192(C_{\nabla g}^2+\sigma_{\nabla g}^2)C_{\nabla f}^2}{\theta T}+\frac{1920 L_f^2 \Tilde{C}_{\nabla g}^2\sigma_g^2}{|\cB|\gamma T}\\
    & + \frac{192\theta L_f^2(\sigma_{\nabla g}^2+L_g^2)}{\text{min}\{|\cB|,|\cB_{1i}|,|\cB_{2i}|\}}+\frac{7680\gamma\sigma_g^2 L_f^2 \Tilde{C}_{\nabla g}^2}{\min\{|\cB_{1i}|,|\cB_{2i}|\}}+\frac{96\theta \beta^2(\sigma_{\nabla h}^2+L_h^2)}{\text{min}\{|\cB_c|,|\cB_k|\}}+\frac{3840\gamma\beta^2\sigma_h^2 \Tilde{C}_{\nabla h}^2}{|\cB_k|}.
    \end{split}
\end{equation}

Let $\beta = \frac{1}{\epsilon\delta}$, $\theta = \min\left\{\frac{\epsilon^4\delta^2\min\{|\cB_k|,|\cB_c|\}}{672(\sigma_{\nabla h}^2+L_h^2)} ,\frac{\epsilon^2\min\{|\cB|,|\cB_{1i}|,|\cB_{2i}|\}}{1344L_f^2(\sigma_{\nabla g}^2+L_g^2)}\right\}=O(\epsilon^4\delta^2)$, \\
$\gamma_1=\gamma_2=\gamma \leq \min\left\{\frac{5n_0\theta}{3|\cB|}, \frac{5m\theta}{3|\cB_c|},\frac{\epsilon^4\delta^2|\cB_k|}{26880 \sigma_h^2 \Tilde{C}_{\nabla h}^2}\right\} = O(\epsilon^4\delta^2)$,\\ $\eta = \min \left\{ \frac{1}{12(L_F+\beta L_H)}, \frac{\theta}{8\sqrt{3}L_F}, \frac{\theta}{8\sqrt{3}L_H \beta},\frac{\gamma_1 |\cB|}{40\sqrt{6}L_g L_f \Tilde{C}_{\nabla g} n},\frac{\gamma_2 |\cB_c|}{40\sqrt{6}\beta L_h\Tilde{C}_{\nabla h} m} \right\} = O(\epsilon^5\delta^3)$ and\\ $T = O(\epsilon^{-7}\delta^{-3})$, we have
\begin{equation*}
        \frac{1}{T}\sum^{T-1}_{t=0}\E\left[ \eta^{-2}\|\vw^{t+1}-\vw^t\|^2+\|\nabla \Phi(\vw^t)-v^t\|^2 \right]\\
    \leq O(\epsilon^2)
\end{equation*}

By the definition of $\vw^{t+1}$, we have
\begin{equation*}
    \begin{split}
       & \vw^{t+1}-\vw^t+\eta v^t=0\\
       & \Leftrightarrow \eta^{-1}(\vw^t-\vw^{t+1})+(\nabla \Phi(\vw^t)-v^t) + (\nabla \Phi(\vw^{t+1})-\nabla \Phi(\vw^t))=\nabla \Phi(\vw^{t+1})\\
       & \Leftrightarrow \eta^{-1}(\vw^t-\vw^{t+1})+(\nabla \Phi(\vw^t)-v^t) + (\nabla \Phi(\vw^{t+1})-\nabla \Phi(\vw^t))\\
       & \quad \quad \quad = \nabla F(\vw^{t+1})+\frac{\beta}{m}\nabla \textbf{h}(\vw^{t+1}) [\textbf{h}(\vw^{t+1})]_+
    \end{split}
\end{equation*}
This gives
\begin{equation*}
    \begin{split}
        &\quad \left\|\nabla F(\vw^{t+1})+\frac{\beta}{m}\nabla \textbf{h}(\vw^{t+1})^\top [\textbf{h}(\vw^{t+1})]_+\right\|^2\\
        & \leq 3\left( \eta^{-2}\|\vw^t-\vw^{t+1}\|^2 + \|\nabla \Phi(\vw^t)-v^t \|^2 + \|\nabla \Phi(\vw^{t+1})-\Phi(\vw^t) \|^2 \right)\\
        & \leq 3\left( \eta^{-2}\|\vw^t-\vw^{t+1}\|^2 + \|\nabla \Phi(\vw^t)-v^t \|^2 + L_{\beta}^2\| \vw^t-\vw^{t+1} \|^2 \right)\\
        & \leq 3\left( \eta^{-2}\|\vw^t-\vw^{t+1}\|^2 + \|\nabla \Phi(\vw^t)-v^t \|^2 + \eta^{-2}\| \vw^t-\vw^{t+1} \|^2 \right)\\
        & \leq 6 (\eta^{-2}\|\vw^t-\vw^{t+1}\|^2 + \|\nabla \Phi(\vw^t)-v^t \|^2)
    \end{split}
\end{equation*}
Therefore, we can achieve that 
\begin{equation}
    \begin{split}
        \frac{1}{T}\sum^{T-1}_{t=0}\E\left[\left\|\nabla F(\vw^{t+1})+\frac{\beta}{m}\nabla \textbf{h}(\vw^{t+1}) [\textbf{h}(\vw^{t+1})]_+\right\|^2\right] \leq O(\epsilon^2) 
    \end{split}
\end{equation}
By Jensen's inequality, we can get
\begin{equation}
    \begin{split}
        \E\left[\left\|\nabla F(\vw^{\hat{t}})+\frac{\beta}{m}\nabla \textbf{h}(\vw^{\hat{t}}) [\textbf{h}(\vw^{\hat{t}})]_+\right\|\right] \leq O(\epsilon), 
    \end{split}
\end{equation}
with $\hat{t}$ selected uniformly at random from $\{1,\cdots,T\}$.

Then, with the full rank assumption on the Jacobian, which is $\| \nabla \textbf{h}(\bw^t) [\textbf{h}(\bw^t)]_+ \|\geq \delta \|[\textbf{h}(\bw^t)]_+\|$ as in Assumption~\ref{ass:full_rank}, we can get
\begin{equation}
\begin{split}
    \|[\textbf{h}(\vw^{t+1})]_+\|^2\leq& \frac{1}{\delta^2}\| \nabla \textbf{h}(\vw^{t+1}) [\textbf{h}(\vw^{t+1})]_+ \|^2\\
    = & \frac{m^2}{\beta^2\delta^2}\|\nabla F(\vw^{t+1}) + \frac{\beta}{m} \nabla \textbf{h}(\vw^{t+1}) [\textbf{h}(\vw^{t+1})]_+ -\nabla F(\vw^{t+1})\|^2\\
    \leq & \frac{2m^2}{\beta^2\delta^2}\left[ \left\|\nabla F(\vw^{t+1})\right\|^2 + \left\|\nabla F(\vw^{t+1}) + \frac{\beta}{m} \nabla \textbf{h}(\vw^{t+1}) [\textbf{h}(\vw^{t+1})]_+ \right\|^2 \right]
\end{split}
\end{equation}

Taking the average over $T$, we can get
\begin{equation}
\begin{split}
    \frac{1}{T}\sum^{T-1}_{t=0} \E\|[\textbf{h}(\vw^{t+1})]_+\|^2\leq&\frac{1}{T}\sum^{T-1}_{t=0} \frac{2m^2}{\beta^2\delta^2}\E\left[ \left\|\nabla F(\vw^{t+1})\right\|^2 + \left\|\nabla F(\vw^{t+1}) + \frac{\beta}{m} \nabla \textbf{h}(\vw^{t+1}) [\textbf{h}(\vw^{t+1})]_+ \right\|^2 \right]\\
    \leq & O(\epsilon^2)
\end{split}
\end{equation}
and using $\boldsymbol{\lambda} = \frac{\beta}{m} [\textbf{h}(\vw^{\hat{t}})]_+$. By Jensen's inequality, we can get
\begin{equation}
    \E\|[\textbf{h}(\vw^{\hat{t}})]_+\| \leq O(\epsilon) 
\end{equation}
\begin{equation}
\begin{split}
    \E|\boldsymbol{\lambda}^\top[\textbf{h}(\vw^{\hat{t}})]_+| = \E \left|\frac{\beta}{m}[\textbf{h}(\vw^{\hat{t}})]_+^\top [\textbf{h}(\vw^{\hat{t}})]_+ \right|& = \frac{\beta}{m} \E \|[\textbf{h}(\vw^{\hat{t}})]_+\|^2\\
    & = \frac{1}{m\delta\epsilon} \E \|[\textbf{h}(\vw^{\hat{t}})]_+\|^2\\
    & \leq O(\epsilon).
\end{split}
\end{equation}


\end{proof}

\end{document}